\documentclass{article}

\usepackage{microtype}
\usepackage{graphicx}
\usepackage{subcaption}
\usepackage{booktabs} %
\usepackage{enumitem}

\usepackage{hyperref}

\usepackage{placeins}

\usepackage[accepted]{icml2026}

\usepackage{amsmath}
\usepackage{amssymb}
\usepackage{mathtools}
\usepackage{amsthm}
\usepackage{dsfont}
\usepackage{xspace}
\usepackage[export]{adjustbox}

\usepackage[capitalize,nameinlink,noabbrev]{cleveref}

\usepackage{dblfloatfix}    %

\theoremstyle{plain}
\newtheorem{theorem}{Theorem}[section]

\newtheorem{lemma}[theorem]{Lemma}

\theoremstyle{definition}

\theoremstyle{remark}

\usepackage[textsize=tiny]{todonotes}

\usepackage{snaptodo}
\snaptodoset{block rise=1em}
\snaptodoset{block rise=2em}
\snaptodoset{margin block/.style={font=\tiny}} 
\snaptodoset{margin block/.style={font=\tiny}}

\usepackage[page]{appendix}

\makeatletter
\let\oldappendix\appendices

\renewcommand{\appendices}{%
  \clearpage
  \RestoreAddContentsLine %
  \renewcommand{\thesection}{\Roman{section}}
  \let\tf@toc\tf@app
  \addtocontents{app}{\protect\setcounter{tocdepth}{2}}
  \immediate\write\@auxout{%
    \string\let\string\tf@toc\string\tf@app^^J
  }
  \oldappendix
}%

\newcommand{\listofappendices}{%
  \begingroup
  \renewcommand{\contentsname}{Contents}
  \let\@oldstarttoc\@starttoc
  \def\@starttoc##1{\@oldstarttoc{app}}
  \tableofcontents%
  \endgroup
}

\makeatother

\makeatletter
\newcommand{\RestoreAddContentsLine}{%
  \ifcsname hyper@anchor\endcsname
    \def\addcontentsline##1##2##3{%
      \addtocontents{##1}{%
        \protect\contentsline{##2}{##3}{\thepage}{\@currentHref}%
      }%
    }%
  \else
    \def\addcontentsline##1##2##3{%
      \addtocontents{##1}{%
        \protect\contentsline{##2}{##3}{\thepage}%
      }%
    }%
  \fi
}
\makeatother

\newcommand{\bbR}{\mathbb{R}}
\newcommand{\bbN}{\mathbb{N}}
\newcommand{\bbZ}{\mathbb{Z}}
\newcommand{\bbC}{\mathbb{C}}
\newcommand{\calH}{\mathcal{H}}

\DeclareMathOperator{\LogNum}{LogNum}
\DeclareMathOperator{\LogInt}{LogInt}
\DeclareMathOperator{\Num}{Num}
\DeclareMathOperator{\Int}{Int}

\DeclareMathOperator{\GP}{GP}
\DeclareMathOperator{\Uniform}{Uniform}
\DeclareMathOperator{\LogUniform}{LogUniform}
\DeclareMathOperator{\UniformInt}{UniformInt}

\DeclareMathOperator{\Beta}{Beta}

\newcommand{\TFcol}{\text{TF}_{\text{col}}}
\newcommand{\TFrow}{\text{TF}_{\text{row}}}
\newcommand{\TFicl}{\text{TF}_{\text{icl}}}

\newcommand{\bbone}{\mathds{1}}
\newcommand{\calF}{\mathcal{F}}
\newcommand{\dout}{d_{\mathrm{out}}}

\newcommand{\notdoubleblind}[1]{}

\newcommand{\name}{TabICLv2\xspace}

\icmltitlerunning{TabICLv2: A Better, Faster, Scalable, and Open Tabular Foundation Model}

\begin{document}

\twocolumn[
  \icmltitle{TabICLv2: A Better, Faster, Scalable, and Open Tabular Foundation Model}

  \icmlsetsymbol{equal}{*}

  \begin{icmlauthorlist}
    \icmlauthor{Jingang Qu}{equal,inria}
    \icmlauthor{David Holzmüller}{equal,inria}
    \icmlauthor{Ga\"el Varoquaux}{inria,probabl}
    \icmlauthor{Marine Le Morvan}{inria}
  \end{icmlauthorlist}

  \icmlaffiliation{inria}{SODA Team, INRIA Saclay, Palaiseau, France}
  \icmlaffiliation{probabl}{Probabl, France}

  \icmlcorrespondingauthor{Jingang Qu}{jingang.qu@inria.fr}
  \icmlcorrespondingauthor{David Holzmüller}{david.holzmuller@inria.fr}
  \icmlcorrespondingauthor{Marine Le Morvan}{marine.le-morvan@inria.fr}

  \icmlkeywords{Machine Learning, ICML}

  \vskip 0.3in
]

\renewcommand{\cite}{\citet}

\printAffiliationsAndNotice{\icmlEqualContribution}

\begin{abstract}
Tabular foundation models, such as TabPFNv2 and TabICL, have recently dethroned gradient-boosted trees at the top of predictive benchmarks, demonstrating the value of in-context learning for tabular data. We introduce TabICLv2, a new state-of-the-art foundation model for regression and classification built on three pillars: (1) a novel synthetic data generation engine designed for high pretraining diversity; (2) various architectural innovations, including a new scalable softmax in attention improving generalization to larger datasets without prohibitive long-sequence pretraining; and (3) optimized pretraining protocols, notably replacing AdamW with the Muon optimizer. On the TabArena and TALENT benchmarks, TabICLv2 without any tuning surpasses the performance of the current state of the art, RealTabPFN-2.5 (hyperparameter-tuned, ensembled, and fine-tuned on real data). With only moderate pretraining compute, TabICLv2 generalizes effectively to million-scale datasets under 50 GB GPU memory while being markedly faster than RealTabPFN-2.5. We provide extensive ablation studies to quantify these contributions and foster open research by releasing code for inference, pretraining, and synthetic data generation at \url{https://github.com/soda-inria/tabicl}.

\end{abstract}

\section{Introduction}

\begin{figure}[t]
    \centering
    \includegraphics[width=\columnwidth]{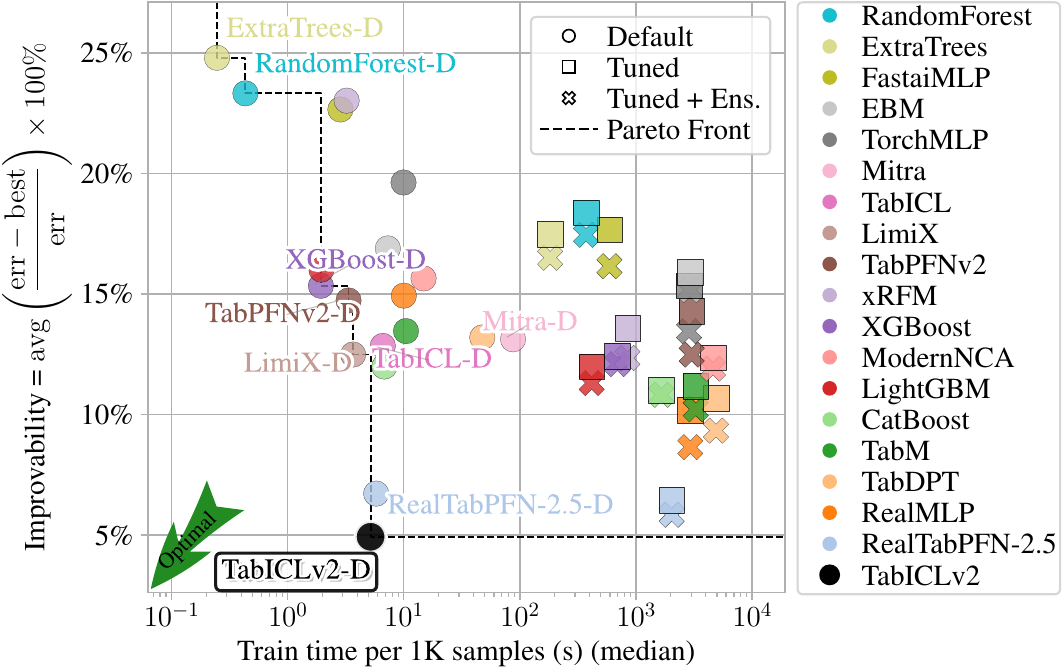}
    \caption{\textbf{Improvability vs.\ train time on TabArena \citep{tabarena}.} Improvability (lower is better) measures the relative error gap to the best method, averaged across datasets. Train time is training + inference in 8-fold cross-validation. For foundation models, it is dominated by forward passes that perform in-context learning. \textit{Default} uses default hyperparameters; \textit{Tuned} selects the best of 200 random hyperparameter configurations on validation; \textit{Tuned + Ens.} applies post-hoc weighted ensemble of all configurations. The runtime of \name is measured on an H100 GPU, while others are from TabArena. Results for inapplicable model-dataset pairs are imputed with default RandomForest.
    }
    \label{fig:tabarena_main}
\end{figure}

Tabular data, whether stored in spreadsheets or databases, is ubiquitous across applications ranging from healthcare to credit card fraud detection \citep{borisov2022deep, jesus2022turning, tabpfn-2.5}. While supervised learning on tabular data has long been dominated by gradient-boosted decision trees \citep{grinsztajn2022tree}, both pretrained and trained-from-scratch deep learning models have recently been able to match or even surpass their accuracy on tables with up to 100K samples \citep{tabarena, talent}. In particular, starting from TabPFN \citep{tabpfn}, tabular foundation models (TFMs) have received a lot of attention thanks to their ability to perform training and inference in a single forward pass of a Transformer-based architecture. The development of better TFMs also benefits downstream adaptations, such as causal inference, generative modeling, joint predictive distributions, and simulation-based inference \citep{ma2025foundation, robertson2025dopfn, balazadeh2025causalpfn, tabpfnv2, hassan2025efficient, vetter2025effortless}. To foster this research, there is a pressing need for fully open-source TFMs that rival closed-source ones to democratize access to top-tier performance and demystify the recipe behind top-performing TFMs.

\paragraph{Contributions}
We introduce \name, a state-of-the-art tabular foundation model, as shown in \cref{fig:tabarena_main}. Our contributions include architectural innovations (\Cref{sec:architecture}), pretraining improvements (\Cref{sec:pretraining}), a novel synthetic data generator (\Cref{sec:prior}), extensive evaluations (\Cref{sec:experiments}), and an ablation study (\Cref{sec:ablation}).

\section{Related Work} \label{sec:related_work}

\subsection{Tabular foundation models}
\label{sec:related_work:tfms}
Tabular foundation models (TFMs) based on Prior-data fitted networks \citep[PFNs,][]{muller2021transformers} %
emerged as a paradigm shift in tabular learning. Given a training set and test input, a TFM $q_\theta$ with parameters $\theta$ directly predicts a distribution $q_\theta(y_{\mathrm{test}} \mid x_{\mathrm{test}}, \mathcal{D}_{\mathrm{train}})$ with a forward pass on input $(x_{\mathrm{test}}, \mathcal{D}_{\mathrm{train}})$. For a single dataset, it hence performs in-context learning (ICL) without gradient updates. For pretraining, given a prior $p(\mathcal{D})$ from which datasets $\mathcal{D}$ (train+test) can be sampled, TFMs are trained to minimize
\begin{equation*}
    \mathcal{L}(\theta) = \mathbb{E}_{\mathcal{D} \sim p(\cdot)} \big[ -\log q_\theta(y_{\mathrm{test}} \mid x_{\mathrm{test}}, \mathcal{D}_{\mathrm{train}}) \big]~.
\end{equation*}

\paragraph{Architectural perspectives.}
TabPFN~\citep{tabpfn} treats each row as a token and performs ICL over rows. TabPFNv2~\citep{tabpfnv2} moves to a cell-based design with alternating row and column attentions, where each cell receives a separate representation. However, this incurs $O(n^2 m + nm^2)$ complexity for a table with $n$ rows and $m$ columns. TabPFN-2.5 \citep{tabpfn-2.5} extends TabPFNv2 with deeper networks. TabICL~\citep{tabicl} reduces the computational complexity to $O(n^2 + nm^2)$ via a two-stage design: a lightweight column-then-row attention first constructs fixed-dimensional row embeddings, after which ICL is performed over these embeddings. Recent work continues to innovate on these foundations, including Mitra~\citep{mitra},  LimiX~\citep{limix}, and Orion-MSP \citep{orion-msp}.

\paragraph{Synthetic prior datasets.}

Synthetic priors are central to PFN-style TFMs. TabPFN uses structural causal models (SCMs). TabICL and TabForestPFN~\citep{tabforestpfn} extend these by mixing tree-based priors to inject tree inductive biases. Mitra studies prior design principles and proposes mixed priors (SCM + tree ensembles) to better control decision boundaries. TabPFNv2 enriches priors with more sophisticated DAG construction and computational mappings. LimiX introduces hierarchical SCMs with controllable difficulty. Drift-Resilient TabPFN~\citep{drift-tabpfn} tackles temporal distribution shifts with a two-level generative prior that modulates SCM parameters over time. However, TabDPT~\citep{tabdpt} shows that large-scale pretraining on real datasets can be competitive, and Real-TabPFN~\citep{realtabpfn} indicates that continued pretraining on real datasets can improve TabPFNv2.

\paragraph{Fine-tuning, retrieval, and distillation.}
Beyond pretraining, adaptation strategies for TFMs include (a) fine-tuning to shift the learned prior toward a target distribution \citep{tunetables,betapfn,realtabpfn,tabpfn-wide,rubachev2025finetuning}, (b) retrieval-based context selection to enhance compute-constrained scalability \citep{localpfn,mixturepfn,limix,sergazinov2025chunked}, and (c) distilling TFMs into compact MLPs or trees \citep{hyperfast,mothernet,tabpfn-2.5}.

\paragraph{LLM-based tabular models.}
In parallel to table-native TFMs, large language models (LLMs) have been adapted to tabular data via table serialization and continued pretraining \citep{tabllm,tabula,machinelearninglm}, which are promising but underperform TFMs when sufficient training data is available.

\subsection{Attention struggles with long-context generalization}

\paragraph{Attention fading.} Attention is central to PFN-style TFMs. But standard attention, based on softmax, suffers from \emph{attention fading}~\citep{velivckovic2024softmax,ssmax}: the softmax denominator increases as context length $n$ grows, causing attention distributions to flatten and preventing sharp focus on relevant tokens. This limits length generalization, as models trained on shorter sequences cannot maintain discriminative attention patterns when applied to longer ones.

\paragraph{Temperature scaling.}
To address attention fading, some work focuses on softmax alternatives \citep{entmax,sigmoid-attention}, which, however, require specialized implementations incompatible with the softmax ecosystem, e.g., FlashAttention~\citep{FlashAttention}. We thus focus on a less invasive solution: \emph{temperature scaling}. Standard attention~\citep{Vaswani2017Attention} already incorporates a fixed scaling temperature factor $1/\sqrt{d}$ to prevent dot-product magnitudes from growing with dimension, but this does not address length-dependent fading. YaRN~\citep{YaRN} scales temperature with context length for RoPE~\citep{RoPE} extension, but the scaling is fixed and requires positional encoding. Recent work proposes dynamic alternatives. Scalable Softmax \citep[SSMax,][]{ssmax} scales attention logits by $s \log n$, where $s$ is a learnable per-head parameter. Concurrent theoretical analysis~\citep{critical-attention-scaling} establishes that $\log n$ scaling is necessary to maintain attention sharpness as context length $n$ grows. Adaptive-Scalable Entmax \citep[ASEntmax,][]{asentmax} extends this with content-aware scaling $\delta + \beta (\log n)^\gamma$, where $\delta$ is a length-independent constant offset, and $\beta = \mathrm{softplus}(\text{MLP}(X))$ and $\gamma = \tanh(\text{MLP}(X))$ are input-dependent. Selective Attention~\citep{selective-attention} takes a different approach by introducing query-dependent temperature $\tau(q)$ via lightweight MLPs, decoupling the scaling from context length entirely.

\section{Architecture} \label{sec:architecture}
The architecture of \name is illustrated in \Cref{fig:architecture}. Following TabICL, \name chains column-wise embedding, row-wise interaction, and dataset-wise ICL, thus preserving the efficiency of TabICL with a runtime complexity of $O(n^2 + nm^2)$ for tables with $n$ rows and $m$ columns. In addition, we introduce several improvements that significantly enhance performance without increasing model size (see the ablation study in \cref{sec:ablation}), placing it around 28M parameters. In the following, we mark these improvements with {\small$\blacktriangleright$}. We focus below on architectural innovations. For model configuration details (e.g., number of layers), refer to \Cref{sec:appendix:model_config}. We provide a short self-contained implementation of the TabICLv2 architecture for educational and experimental purposes, inspired by nanoTabPFN \citep{pfefferle2025nanotabpfn}, at
\begin{center}
    \small\url{https://github.com/soda-inria/nanotabicl}~.
\end{center}

{\small$\blacktriangleright$} \textbf{Repeated feature grouping.} TabICL embeds each feature independently, which can lead to representation collapse when features share similar distributions. TabPFNv2 and TabPFN-2.5 mitigate this collapse by grouping multiple columns into single tokens, which also reduces the number of effective features to improve efficiency, but this reduction may lose fine-grained feature information. We propose \emph{repeated feature grouping}, which places each feature into multiple groups via circular shifts while preserving the number of effective features. Specifically, for a table with $m$ columns, we create $m$ groups where the $j$-th group contains columns at positions $(j, j+1, j+3) \bmod m$. Each group is encoded by a shared linear layer $\mathrm{Lin}: \mathbb{R}^3 \to \mathbb{R}^d$:
\begin{equation*}
	E_1[i,j] = \mathrm{Lin}\big(x_{i,j},\, x_{i, (j+1) \bmod m},\, x_{i, (j+3) \bmod m}\big).
\end{equation*}
The shift pattern $(0, 1, 3)$ ensures that for tables with at least 7 columns, no pair of columns appears together in more than one group. We show in \Cref{sec:appendix:feature_grouping} that this pattern generalizes to arbitrary group sizes, although we did not observe consistent improvements from larger groups.

\begin{figure}[t]
    \centering
    \includegraphics[width=\linewidth]{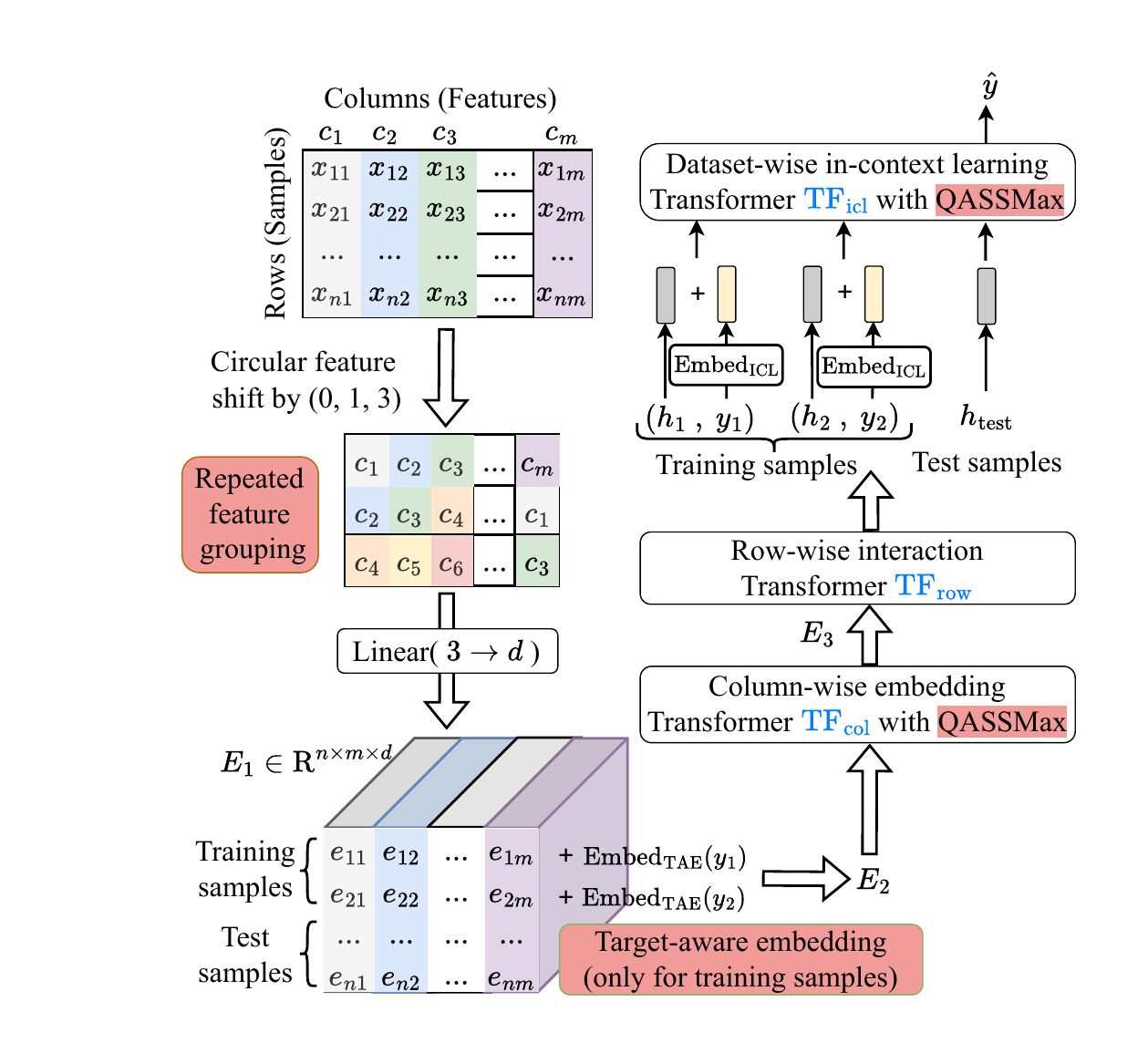}
    \caption{\textbf{Architecture of \name.} Given an input $X \in \mathbb{R}^{n \times m}$, \emph{repeated feature grouping} encodes columns into multiple groups via circular shifts to break feature symmetries, and \emph{target-aware embedding} injects target information from the beginning. $\TFcol$ embeds each feature through a set transformer, $\TFrow$ aggregates features into row representations $h$, and $\TFicl$ performs in-context learning to predict test targets $\hat{y}$. QASSMax, our query-aware scalable softmax, is applied in the part of $\TFcol$ where inducing points aggregate input information and $\TFicl$ to mitigate attention fading and improve long-context generalization.}
    \label{fig:architecture}
\end{figure}

\begin{figure*}[htbp]
	\centering
	\begin{subfigure}{0.4\linewidth}
		\includegraphics[width=\textwidth]{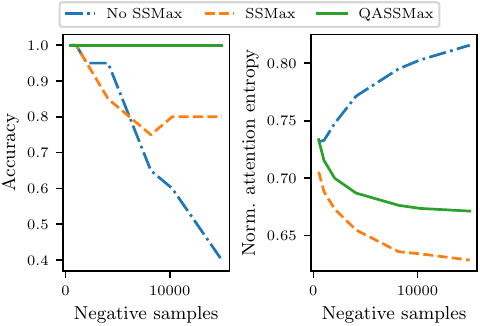}
		\caption{Accuracy and attention entropy \emph{vs.} negative samples}
	\end{subfigure}
    \hspace{0.02em}
	\begin{subfigure}{0.59\linewidth}
		\includegraphics[width=\textwidth]{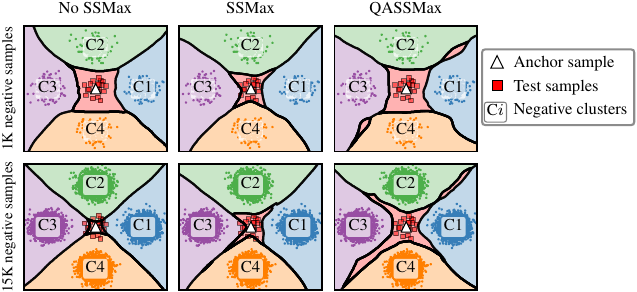}
		\caption{Decision boundaries for anchor and negative clusters}
	\end{subfigure}
	\caption{\textbf{SSMax variants mitigate attention fading in a synthetic 2D classification task.} We create a dataset consisting of four negative clusters (C1--C4) and one anchor cluster containing a single anchor sample (triangle) in the training set. We increase negative samples while evaluating 20 fixed test samples (red squares) nearest to the anchor. We evaluate three stage 1 checkpoints trained for 280K steps with QASSMax, SSMax, and without SSMax. \textbf{(a)} Attention entropy is divided by $\log n$ to ensure values in $(0,1)$ and averaged across all heads and layers in $\TFicl$, measuring how uniformly test samples attend to training ones. Without SSMax, accuracy drops and entropy rises as negative samples increase, which is a hallmark of attention fading where the model fails to focus on the relevant anchor. QASSMax maintains 100\% accuracy with consistently low entropy. \textbf{(b)} shows decision boundaries at 1K and 15K negative samples. The region of the anchor cluster shrinks for all variants as negative samples increase. No SSMax collapses at 15K, while QASSMax preserves a stable boundary containing all test samples.}
        \label{fig:ssmax_effect}
\end{figure*}

{\small$\blacktriangleright$} \textbf{Target-aware embedding.} We find it beneficial to inject target information early. After repeated feature grouping produces input data representation $E_1 \in \mathbb{R}^{n \times m \times d}$, we add target embeddings to each training token:
\begin{equation*}
	E_2[i,j] = E_1[i,j] + \mathrm{Embed_{\text{TAE}}}(y_i), \quad i \in \mathcal{D}_{\mathrm{train}},
\end{equation*}
where $\mathrm{Embed_{\text{TAE}}}$ is a linear layer for regression or a learnable lookup table for classification. Unlike TabPFNv2 appending the target as an additional column, we directly add target embeddings to all features. This also helps mitigate representation collapse because even when two features share similar distributions, their association with target values often differs across samples. Unlike TabICL, which compresses each row into a few tokens before seeing the targets, our early target injection allows TabICLv2 to determine the relevance of features before compression.

\textbf{Compression then ICL.} \name processes $E_2$ in three stages: (1) \emph{column-wise embedding} applies a set transformer $\TFcol$~\citep{set-transformer} to each column; (2) \emph{row-wise interaction} uses a transformer $\TFrow$ with [CLS] tokens to collapse feature embeddings per row into a single vector; (3) \emph{dataset-wise ICL} combines row embeddings with target embeddings and uses a transformer $\TFicl$ where test samples attend to training samples for prediction. See \Cref{sec:appendix:embed_then_icl} for details. {\small$\blacktriangleright$} Compared to TabICL, our key innovation here is applying a novel scalable softmax to $\TFicl$ and to the part of $\TFcol$ where inducing points aggregate input information.

{\small$\blacktriangleright$} \textbf{Query-aware scalable softmax.}
To improve generalization to larger datasets, we extend Scalable Softmax (SSMax, \citealt{ssmax}), a temperature scaling method that sharpens attention distributions by rescaling queries before computing logits. Let $q_h = (q_{hi})$ be a query vector at head $h$ with head dimension indexed by $i$, and let $n$ be the size of the training set. SSMax rescales queries with a learnable per-head scalar $s_h$:
\begin{equation*}
    \tilde{q}_{hi} = q_{hi} \cdot s_h \log n~.
\end{equation*}

We propose query-aware scalable softmax (QASSMax), which rescales each query element as:
\begin{equation*}
	\tilde{q}_{hi} = q_{hi} \cdot \underbrace{\mathrm{MLP}_{\mathrm{base}}(\log n)_{hi}}_{\text{base scaling}} \cdot \underbrace{\big(1 + \tanh(\mathrm{MLP}_{\mathrm{gate}}(q_h)_i)\big)}_{\text{query-aware gating}}
\end{equation*}
where for $H$ attention heads, $\mathrm{MLP}_{\mathrm{base}}: \mathbb{R} \to \mathbb{R}^{H \times d_{\mathrm{head}}}$ and $\mathrm{MLP}_{\mathrm{gate}}: \mathbb{R}^{d_{\mathrm{head}}} \to \mathbb{R}^{d_{\mathrm{head}}}$ are two-layer MLPs with 64 hidden neurons and GELU activation.

We design QASSMax based on the following rationale: (a) The $\log n$ factor is critical as it counteracts the linear growth of the softmax denominator with respect to $n$~\citep{ssmax,critical-attention-scaling}; (b) ASEntmax~\citep{asentmax} uses learnable $\delta + \beta(\log n)^\gamma$, inspiring us to generalize to $\mathrm{MLP}_{\mathrm{base}}(\log n)$; (c) Element-wise scaling increases expressiveness beyond per-head scalars; (d) Selective Attention~\citep{selective-attention} introduces query-awareness in temperature scaling, which motivates us to use the bounded query-aware gating $\in (0, 2)$ that modulates the base scaling without dominating the $\log n$ trend. In addition, our gating design shares similar insights with Gated Attention~\citep{qiu2025gated}, which applies gating to attention outputs and finds query-dependent, element-wise gating most effective.

QASSMax applied to $\TFcol$ and $\TFicl$ yields substantial performance improvements, as shown in the ablation study (\Cref{sec:ablation}). To study its effect on attention fading, we design a toy needle-in-haystack classification task (\Cref{fig:ssmax_effect}): the model must focus on a single anchor sample (the needle) among increasing negative samples (the haystack). Without scalable softmax, attention entropy rises and accuracy drops. However, QASSMax maintains low entropy and 100\% accuracy even with 15K negatives, outperforming SSMax, whose performance degrades substantially at extreme scales.

\newcommand{\manyclassname}{mixed-radix ensembling\xspace}
\textbf{Many-class classification.}
Like many TFMs, \name is pretrained with up to 10 classes. We use hierarchical classification~\citep{tabicl} at the ICL stage for more classes. However, target-aware embedding introduces labels before hierarchical partitioning. {\small$\blacktriangleright$} To address this, we propose \emph{\manyclassname}: for $C > 10$ classes, we compute balanced bases $[k_0, \ldots, k_{D-1}]$ with each $k_i \leq 10$ and $\prod_i k_i \geq C$, then decompose each label $y$ into $D$ digits $y^{(i)} \in \{0, \ldots, k_i{-}1\}$ via mixed-radix representation. Each digit defines a coarser grouping of the original classes. We run $\TFcol$ once per digit and average the outputs:
\begin{equation*}
    O_{\mathrm{avg}} = \frac{1}{D} \sum_{i=0}^{D-1} \TFcol(E_1 + \mathrm{Embed_{\text{TAE}}}(y^{(i)}))~.
\end{equation*}
Combined with hierarchical classification in $\TFicl$, this enables \name to handle an arbitrary number of classes. See \Cref{sec:appendix:many-class} for details.

{\small$\blacktriangleright$} \textbf{Quantile predictions for regression.}
Existing TFMs adopt different strategies for regression: TabPFNv2 and TabPFN-2.5 model the full predictive distribution by discretizing the target space into bins and applying cross-entropy loss, while Mitra and TabDPT predict point estimates using MSE loss. In addition, like most TFMs except LimiX, we train separate models for classification and regression.

\name instead predicts 999 quantiles at probability levels $\alpha \in \{0.001, 0.002, \ldots, 0.999\}$, trained with pinball loss summed across all quantiles. In preliminary experiments using RMSE evaluation, we found that quantile regression outperforms MSE and the bin-based approach of TabPFNv2.

At inference, for point estimation we simply average the predicted quantiles, which proves to be both fast and effective. For probabilistic predictions, we construct a full distribution from the quantiles by enforcing monotonicity via sorting (the default) or isotonic regression \citep{barlow1972isotonic,busing2022monotone}, extrapolating tails with parametric exponential models, and deriving closed-form PDF, CDF, and moments. See \Cref{sec:appendix:quantile_dist} for details.

\section{Pretraining and Inference} \label{sec:pretraining}

\subsection{Pretraining setup}
We significantly improve the pretraining setup compared to TabICL, the reference TFM with open pretraining.

\paragraph{Three pretraining stages.}
We retain the three-stage structure of TabICL that progressively expands the size of pretraining datasets, with up to 100 features throughout all stages. However, following TabPFNv2, we reduce the batch size to 64, allowing more steps with fewer datasets ($\approx$35M) than TabICL ($\approx$83M) and TabPFNv2 ($\approx$130M). The three stages are:
\begin{itemize}[leftmargin=10pt,itemsep=2pt, topsep=0pt]
	\item \textbf{Stage 1}: 500K steps on datasets with 1,024 samples, 30--90\% for training, max learning rate 8e-4.
	\item \textbf{Stage 2}: 40K steps on datasets with 400--10,240 samples (log-uniform), 80\% for training, max learning rate 1e-4.
	\item \textbf{Stage 3}: 10K steps on datasets with 400--60K samples (log-uniform), 80\% for training, max learning rate 2e-5.
\end{itemize}

In \Cref{sec:appendix:pretraining:three_stages}, we show that stages 2 and 3 yield progressive performance improvements, especially on large datasets.

\paragraph{Optimizer.}
We use the Muon optimizer~\citep{muon} based on the implementation of \citet{schaipp2025optimization} instead of AdamW (resp. Adam) used by TabICL (resp. TabPFNv2). Following Moonlight~\citep{moonlight}, this implementation multiplies the learning rate for each parameter $W \in \bbR^{n \times m}$ by $0.2 \cdot \sqrt{\max\{n, m\}}$. Muon orthogonalizes gradient updates via Newton-Schulz iteration, which corresponds to steepest descent under the spectral norm \citep{bernstein2024old}, and has been observed to speed up LLM training~\citep{modded-nanogpt, moonlight}. We find higher learning rates preferable for Muon. As a result, we use a max learning rate of 8e-4 for stage 1 compared to 1e-4 for AdamW in TabICL. We use weight decay with parameter $0.01$. We also increase gradient clipping from $1$ to $10$ for stages 1 and 2, sample different train/test sizes per micro-batch, and use a cosine learning rate schedule across all stages.

\paragraph{Pretraining cost.}
On H100 GPUs with 80GB memory, stage 1 takes around 20 GPU-days, stage 2 around 2.5 GPU-days, and stage 3 around 2 GPU-days, totaling 24.5 GPU-days per model. Given that one H100-hour is roughly equivalent to 2 A100-hours, our pretraining cost is lower than TabICL (60 A100-days).

\subsection{Inference optimizations}
We implement \emph{disk offloading} (\Cref{sec:appendix:inference_optim:disk_offloading}), reducing requirements to under 24\,GB CPU and 50\,GB GPU to process a table with 1M samples and 500 features within 450 seconds (\Cref{fig:disk_offloading}). Combined with QASSMax for long-context generalization, \name can natively handle million-scale tables without retrieval and distillation. In addition, we reduce redundant computation by selectively computing $Q/K/V$ projections (\Cref{sec:appendix:inference_optim:select_qkv}). 

\section{Synthetic Data Prior} \label{sec:prior}

\begin{figure*}[t]
    \centering
    \includegraphics[width=0.674\linewidth, trim={0 0 0 0.2cm}, clip, valign=t]{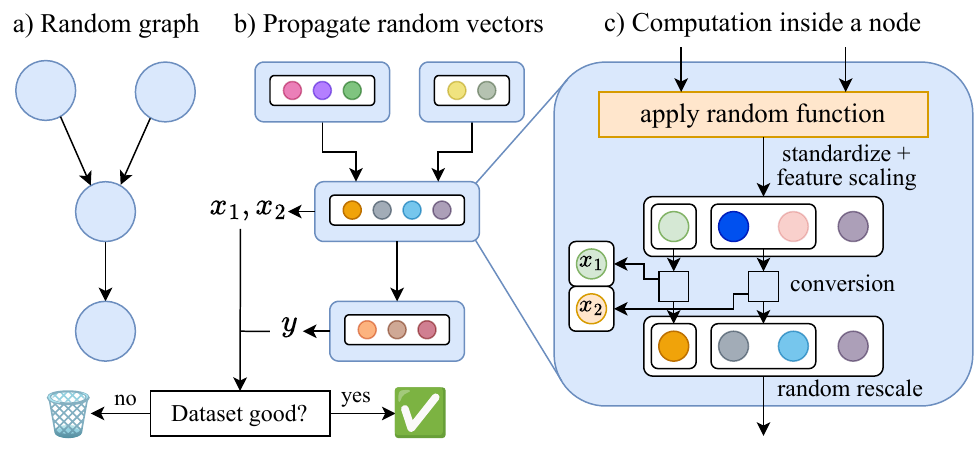} 
    \includegraphics[width=0.165\linewidth, valign=t]{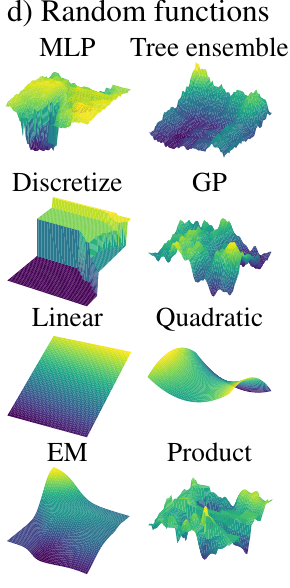}
    \includegraphics[width=0.151\linewidth, trim={0 0 0 0.01cm}, clip, valign=t]{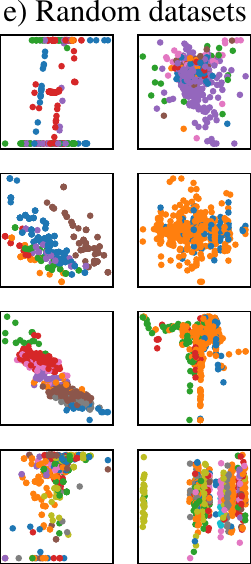}
    \caption{\textbf{High-level structure of the synthetic dataset generation prior.} Random vectors (one per sample) are propagated through a randomly generated graph where each node computes a random function of its parents. Columns of the final dataset are extracted from randomly assigned nodes. The resulting datasets can be rejected based on different filtering criteria. (d) List of the 8 random functions applied: (MLP) Multilayer perceptrons, (Tree Ensemble) Ensembles of symmetric trees inspired by CatBoost \citep{catboost}, (Discretize) Discretization to nearest neighbors among a random set; (GP) Multivariate Gaussian process functions; (Linear) Linear functions; (Quadratic) Multivariate quadratic functions; (EM) functions with plateaus inspired by the cluster assignment in the EM algorithm; (Product) products of other random functions. (e) Examples of generated 2D classification datasets (cf.\ \Cref{fig:prior_datasets}).
    }
    \label{fig:prior}
\end{figure*}

Our pretraining data is \emph{entirely synthetic}, following the approach pioneered in TabPFN \citep{tabpfn}. The data-generating mechanism is termed \emph{prior}, as it implicitly defines a Bayesian prior over datasets. For \name, we design a new prior that retains the structural causal model framework used in \cite{tabpfn}, incorporates innovations brought by TabICL and TabPFNv2 priors, and adds many novel design options and sampling mechanisms (see \Cref{sec:appendix:prior_novelty}).
Unlike architectural and pretraining choices, the new prior is developed mostly without experimental feedback, since fine-grained ablations are impractical and prone to overfitting validation datasets. Instead, the prior development is guided by general design principles \citep{wilson2020bayesian}, maximizing dataset diversity (e.g., variable dependencies and categorical cardinalities) while encoding useful inductive biases and preserving computational efficiency. This new prior is key to the final performance: pretraining \name with the TabICL prior yields substantially lower performance (\Cref{fig:ablations}, gray). An ablation using the TabPFNv2 prior is not possible, as it is not open-source. We provide a high-level prior description below and defer details to \Cref{sec:appendix:prior,sec:appendix:prior_plots}.

\paragraph{High-level structure.}

\Cref{fig:prior} summarizes the \name prior. We first sample global dataset properties, such as the number of numerical and categorical features, and the dataset size. We then sample a directed acyclic graph and random functions defining parent–child relationships, yielding a causal data-generating model.
To obtain a dataset with $n$ samples, a matrix $X \in \bbR^{n \times d_i}$ of $n$ random vectors is sampled at each root node $i$ and propagated through the graph. Each dataset feature is extracted from a randomly assigned node. Only a subset of a node’s dimensions is used to generate each feature, leaving other dimensions unobserved and thereby introducing noise into the dataset. Unlike prior work, we do not add Gaussian noise at the node level. For numerical features (e.g., $x_1$), feature values are extracted from a single node dimension. For categorical features (e.g., $x_2$), multiple node dimensions are extracted and discretized, either via nearest-neighbor assignment or by applying a softmax to obtain a categorical distribution.

\paragraph{New sampling mechanisms.}

Since the \textbf{random graph} sampling mechanism used by \cite{tabpfnv2} can only generate tree-structured graphs, we introduce a ``random Cauchy graph'' mechanism, which models different global and local node connectivities and is described in \Cref{sec:appendix:random_graph}.

The relation between a child node and its parents is generated using several steps depicted in \cref{fig:prior}(c). The key step consists in sampling diverse \textbf{random functions} to apply to the parent data. We use eight types of random functions, listed in \cref{fig:prior}(d). The first three are adapted from TabPFNv2 while the other five are new. These functions are chosen to cover different levels of smoothness (which we prove for Gaussian Process functions) and different types of inductive biases (e.g., plateaus or axis-alignment). To handle the case of more than one parent node, we randomly select between two options: concatenate all parent matrices and apply a single random function, or apply random functions to every parent matrix and aggregate the results using sum, product, max, or logsumexp.
Even within each function type, we diversify the generated functions using new or extended building blocks including multiple random matrix types (for MLP, linear, quadratic functions, etc), random weight vectors (for singular values, feature importances, etc.), and random activations (for MLPs, random matrices), see \Cref{sec:appendix:prior}.

After applying the random function, we standardize $X$ and randomly rescale its columns to emulate different feature importances. The random converters extract feature values but can also modify node values, applying a warping function to scalars or discretization mechanisms to sub-vectors (see \Cref{sec:appendix:random_converter}). Finally, the node data $X$ is multiplied by a random scalar emulating a ``node importance''.

\paragraph{Postprocessing.} We apply some postprocessing similar to TabICL \citep{tabicl}, including discarding problematic columns and datasets, permuting columns and class labels, and preprocessing features and targets.

\paragraph{Data filtering.} 
Inspired by \cite{machinelearninglm} and \cite{limix}, we filter out datasets on which a simple ExtraTrees model cannot improve on a constant baseline according to a bootstrap test. In addition, we directly filter graphs in which nodes associated with $x$ do not have common ancestors with the node associated with $y$, which implies that $y$ is independent of $x$. In pretraining stage 1, roughly 35\% classification and 25\% regression datasets are filtered. \Cref{fig:ablations} shows that filtering improves the convergence of pretraining.

\paragraph{Sampling correlated scalars.} We often sample scalars (``hyperparameters'') from the same distribution multiple times, e.g., the number of categories within a column. 
We introduce a mechanism to sample them in a correlated fashion based on sampling shared parameters of a Beta distribution. For example, we observe that in some real datasets, many categorical columns have the same cardinality. Through correlated sampling, our prior is more likely to replicate this phenomenon.

\section{Experiments} \label{sec:experiments}

\begin{figure}[t]
    \centering
    \includegraphics[width=\columnwidth]{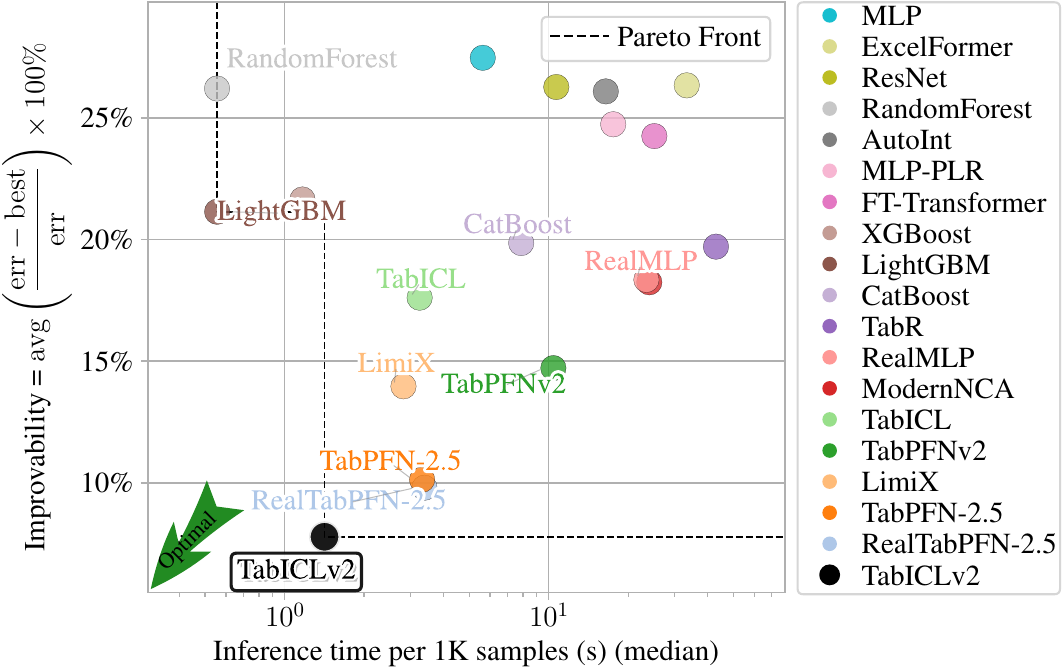}
    \caption{\textbf{Improvability vs.\ inference time on TALENT \citep{talent}.} The runtime of \name is measured on an H100 GPU, while other runtimes are taken from TALENT.}
    \label{fig:talent_main}
\end{figure}

\paragraph{Benchmarks.}  We use the TabArena~\citep{tabarena} and TALENT~\citep{talent} benchmarks. \name ensembles predictions using random column/class shuffles and different preprocessors, as in TabICL. TabArena contains 51 datasets (38 classification with $\leq$10 classes, 13 regression), evaluated via repeated cross-validation with ROC AUC for binary, log-loss for multiclass, and RMSE for regression. We use 8 estimators for \name to match RealTabPFN-2.5 in TabArena. TALENT contains 300 datasets (120 binary, 80 multiclass, 100 regression) with 64\%/16\%/20\% train/validation/test splits. Hyperparameters are selected on the validation set using accuracy for classification and RMSE for regression. We use 32 estimators for both \name and TabPFN-2.5/RealTabPFN-2.5 in TALENT.

We primarily report improvability, which measures the average relative error gap to the best method on each dataset, allowing the best method to vary across datasets. Improvability reflects the magnitude of performance differences compared to rank-based metrics. We provide other metrics in \Cref{sec:appendix:detailed_tabarena,sec:appendix:detailed_talent}.

In addition, following TabArena and TALENT, we use the TabICLv1.1 checkpoint for TabICL~\citep{tabicl}, which is TabICL post-trained on an earlier version of our prior.

\paragraph{\name is state-of-the-art on both benchmarks.}
As shown in \Cref{fig:tabarena_main,fig:talent_main}, \name dominates the Pareto fronts of improvability versus runtime. Without any tuning, \name surpasses RealTabPFN-2.5 (tuned + ensembled), the current state-of-the-art that is not fully open-source. A statistical test confirms that the win rate of \name vs.\ RealTabPFN-2.5 on TALENT is greater than 50\% (\cref{sec:appendix:statistical_test}). \name also substantially outperforms heavily tuned traditional methods, such as CatBoost and XGBoost, despite requiring orders of magnitude less training time.

\begin{figure}[t]
    \centering
    \includegraphics[width=\columnwidth]{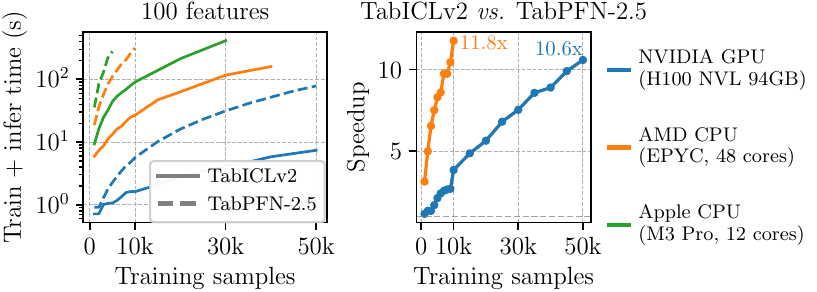}
    \caption{\textbf{Runtime comparison between \name and TabPFN-2.5 with respect to the number of training samples and hardware.} Both use 8 estimators. We use classification with 500 test samples.} 
    \label{fig:runtime_comparison}
\end{figure}

\paragraph{\name is consistently faster than TabPFN-2.5.}
As shown in \Cref{fig:runtime_comparison}, for 100 features, \name is faster than TabPFN-2.5 across all hardware, with speedups increasing at larger scales: 10.6$\times$ on an H100 GPU at 50K samples. The efficiency gap is even more pronounced on CPU, reaching 11.8$\times$ at just 10K samples.

\begin{figure}[t]
    \centering
    \includegraphics[width=0.9\columnwidth]{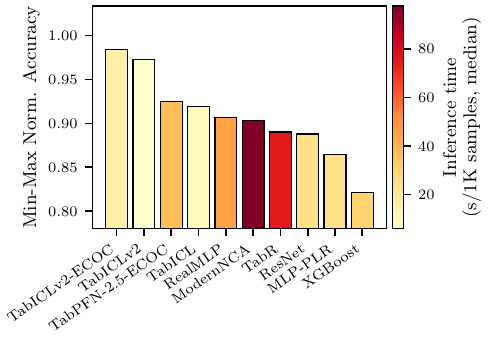}
    \caption{\textbf{Normalized accuracy across 12 datasets with more than 10 classes on the TALENT benchmark.}}
    \label{fig:talent_many_class}
\end{figure}

\begin{figure}[t]
    \centering
    \includegraphics[width=0.9\columnwidth]{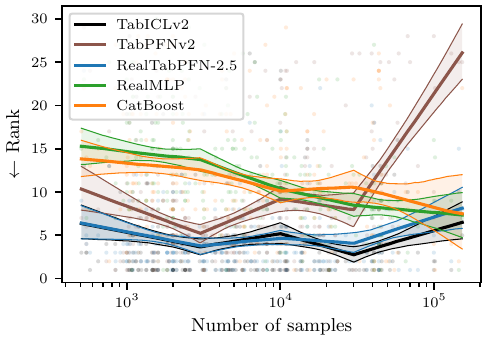}
    \caption{\textbf{Model rankings as a function of sample size on TALENT.} The lines show the bootstrap median and 10\% / 90\% bootstrap confidence intervals of a piecewise linear fit \citep{tabicl}.}
    \label{fig:talent_rank_vs_samples}
\end{figure}

\paragraph{\name excels on many-class classification.}
\name with both the error-correcting output codes (ECOC) wrapper from TabPFNv2 and our native \manyclassname substantially outperforms all baselines on TALENT datasets with $>10$ classes (\Cref{fig:talent_many_class}). The ECOC wrapper is slightly better but 3$\times$ slower than our native handling.

\begin{figure}[t]
    \centering
    \includegraphics[width=\columnwidth]{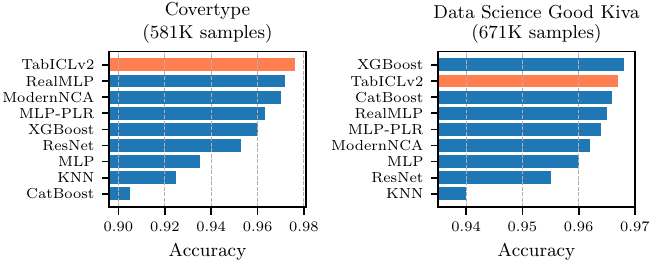}
    \caption{\textbf{Accuracy on two huge classification datasets from the TALENT extension.} \name still performs strongly. TabPFN-2.5 resulted in out-of-memory errors.}
    \label{fig:500k_datasets}
\end{figure}

\paragraph{\name scales to large datasets.}
 As shown in \Cref{fig:talent_rank_vs_samples}, \name maintains top rankings across all dataset sizes from $10^3$ to $10^5$, outperforming RealTabPFN-2.5 on larger datasets ($>$20K). On even larger datasets ($\approx$600K samples) from the TALENT extension, \name still performs strongly (\Cref{fig:500k_datasets}). These results show that \name further extends the frontier of TFMs for natively handling large-scale data.

\paragraph{Other experiments.} We provide more experimental results in \Cref{sec:appendix:other_experiments}. Our default ensemble size of $8$ estimators unlocks most of the benefits of ensembling. In few-shot settings with 16 training samples, which are not included in TabICLv2's pretraining, TabICLv2 comes second to RealTabPFN-2.5, while outperforming (untuned) classical baselines as well as LLMs Claude Opus 4.6 and Qwen3.5-112B. Unlike TabICL(v1), TabICLv2 does not benefit from TabPFNv2's random forest extension for larger datasets (\Cref{sec:appendix:rf_extension}), showing that it efficiently processes larger datasets. TabICLv2 and RealTabPFN-2.5 are also quite robust to adding label noise compared to tree-based models.

\begin{figure}[t!]
    \centering
    \includegraphics[width=\linewidth]{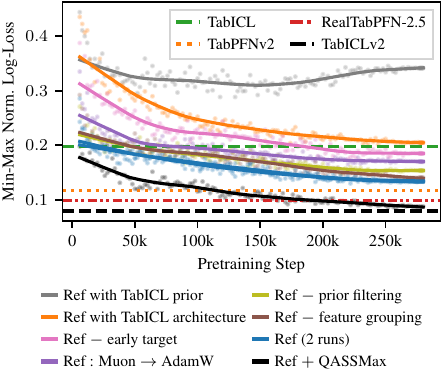}
    \caption{\textbf{Ablating different components of \name.} 
    Non-solid horizontal lines denote performance of official checkpoints; solid lines denote ablated models pretrained for 280K steps. Each ablation modifies one component of the reference model (blue) by adding ($+$), removing ($-$), or replacing ($\rightarrow$) it. The reference model corresponds to \name without QASSMax and with 4 instead of 8 heads for $\TFcol$ and $\TFicl$. Performance metrics are computed on the 60 validation datasets used for TabPFNv2 development \citep[supplementary Table 5]{tabpfnv2}. For each dataset, we use up to 2,048 training samples (fewer when the dataset is smaller) and two train/test splits. The AdamW ablation uses regular weight decay for AdamW and a learning rate of 1e-4 following TabICL. Scatterplots display per-step validation performance and reveal decreasing noise as the learning rate decays.
    }
    \label{fig:ablations}
\end{figure}

\section{Ablation Study}
\label{sec:ablation}
We conduct an ablation study to assess the impact of architectural, prior, and pretraining choices (Figure~\ref{fig:ablations}). See \cref{sec:appendix:more_ablation} for more ablation results. The performance gap between the reference \name checkpoint (dotted black) and its ablation (solid black) is explained by pretraining length: the ablation is trained for 280K steps, whereas the official checkpoint is pretrained for more steps (500K steps for stage 1 plus stages 2 \& 3) and uses 8 instead of 4 attention heads in $\TFcol$ and $\TFicl$. Interestingly, \name matches RealTabPFN-2.5 in log-loss after $\approx 200$K steps, and in $< 100$K steps in terms of normalized accuracy.

First, we observe a strong interaction between architecture and prior. Pretraining \name with the TabICL prior fails (gray line): the performance remains below TabICL and the validation loss degrades in the second half of the pretraining. This suggests that the \name architecture requires higher prior diversity to generalize, perhaps similar to how \cite{tabdpt} observed that scaling laws can break down with weak synthetic data generators. Additionally, pretraining the TabICL architecture with the \name prior (orange) only matches TabICL, indicating that the TabICL architecture has limited ability to exploit increased prior diversity.

Across metrics (\cref{fig:full_ablation}) including normalized accuracy, Elo, and log-loss, the ordering of ablations is consistent. The prior yields the largest effect. Three components provide comparable, significant gains ($\approx$100 Elo, 64\% win rate): early target inclusion, Muon instead of AdamW, and QASSMax. Repeated feature grouping and prior filtering yield smaller gains. 

\section{Limitations}

\name shares common limitations with related models: it does not natively leverage semantic information from column names or textual features, shown to be valuable~\citep{contexttab}, but its scalability to large numbers of features suggests it should remain reasonably fast when combined with text embedding models. Additionally, despite improved scalability, datasets with millions of samples remain challenging. Many extensions, such as multi-output regression or handling distribution shifts \citep{drift-tabpfn} are also left to future work. Due to the lack of established benchmarks, the distributional regression capabilities of \name are not evaluated beyond toy datasets (\Cref{sec:appendix:quantile_dist:validation}). Missing values are currently imputed by the mean. Adding missing indicators~\citep{morvan2025imputation} or introducing missingness during pretraining may improve the handling of missing values, but remain unexplored. Finally, hyperparameter tuning or fine-tuning \citep{rubachev2025finetuning} could further improve the performance at the cost of increased runtime, but is not explored here.

\section{Conclusion}
\name represents a large step forward in tabular foundation models (TFMs), as it achieves state-of-the-art performance and redefines the native scalability of TFMs.
Through its open-source prior, pretraining, and inference code, we democratize access to state-of-the-art TFMs. %
With its moderate pretraining and inference cost, \name provides an excellent basis for future adaptations, including foundation models for time-series, causality, and relational databases. In addition, we prioritize out-of-the-box performance and principled innovations over fine-tuning on real data~\citep{realtabpfn} or scaling up with deeper~\citep{tabpfn-2.5} or wider~\citep{tabdpt,mitra} architectures. We hope that \name motivates continued open innovation towards smaller, faster, better models.

\section*{Contribution Statement}

JQ and DH contributed equally to this work (co-first authors). JQ developed the regression framework and multiclass extensions, enabled scaling to large $n$ through optimized implementation and the design of QASSMax, conducted the majority of experiments and benchmark evaluations, and prepared production-ready code. DH conceived and implemented the new generative prior and other core architectural and pretraining enhancements, conducted small-scale experiments, and implemented nanotabicl. MLM and JQ managed larger-scale pretraining runs and the systematic evaluation of model variants. All authors participated in the weekly technical steering, experimental design, and iterative refinement of the manuscript.

\section*{Acknowledgements}
We thank Fabian Schaipp for encouraging us to try his Muon implementation. We thank Tizian Wenzel and Ingo Steinwart for helpful discussions on theory. We are grateful to the authors of the TALENT benchmark for providing information about the TALENT extension. We also thank the LimiX team, especially Xingxuan Zhang and Peng Cui, for their generous support in providing computational resources. We thank Ingo Steinwart for providing computational resources funded by Deutsche Forschungsgemeinschaft (DFG, German Research Foundation) under Germany’s Excellence Strategy - EXC 2075 – 390740016.

This work was performed using HPC resources from GENCI–IDRIS (Grant 2024-AD011014864R1, 2024-AD011016033, 2025-AD011016033R1). MLM and JQ acknowledge support from INRIA AEX NAP, and from ANR via the grant StatQA (ANR-25-CE23-5646). GV acknowledges support from ANR via grant TaFoMo (ANR-25-CE23-1822). This work is partly supported by Hi! PARIS and ANR/France 2030 program (ANR-23-IACL-0005). 

\section*{Impact Statement}

This paper presents work whose goal is to advance the field of Machine
Learning. There are many potential societal consequences of our work, none of
which we feel must be specifically highlighted here.

\bibliography{tabiclv2}
\bibliographystyle{icml2026}

\newpage
\onecolumn
\begin{appendices}
\listofappendices

\counterwithin{figure}{section}
\counterwithin{table}{section}
\counterwithin{equation}{section}

\crefalias{section}{appendix}
\crefalias{subsection}{appendix}

\newpage

\section{More Architecture Details About \name} \label{sec:appendix:architecture}

\subsection{Repeated feature grouping} \label{sec:appendix:feature_grouping}
The generalization of our repeated feature grouping pattern from groups of three columns to groups of $k$ columns among $m$ columns is to define group $i$ (for $i \geq 0$) as
\begin{equation*}
    (x_{i + 2^0 - 1 \bmod m}, x_{i + 2^1 - 1 \bmod m}, \dots, x_{i+2^{k-1} - 1 \bmod m})~.
\end{equation*}
\Cref{lemma:group_intersection} shows that whenever $m \geq 2^k$, no pair of columns occurs together in two groups.
In principle, larger $k$ yields a more expressive architecture, as the linear layer can always learn to ignore some of the columns. In practice, we use $k=3$ since larger values did not seem beneficial in initial experiments.

\begin{lemma}[Intersections of feature groups] \label{lemma:group_intersection}
    For $k \geq 0, m \geq 1, i \in \{0, \hdots, m-1\}$, define the set
    \begin{equation*}
        I_{i,k,m} := \{i+2^l-1 \bmod m \mid 0 \leq l \leq k-1\}~.
    \end{equation*}
    Then, if $m \geq 2^k$, we have $|I_{i,k,m} \cap I_{j,k,m}| \leq 1$ for all $0 \leq i < j \leq m-1$.
\end{lemma}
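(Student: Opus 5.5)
Suppose for contradiction that two distinct elements lie in $I_{i,k,m} \cap I_{j,k,m}$. Then there are exponents $a \neq b$ and $c \neq d$ in $\{0,\dots,k-1\}$ with
\begin{equation*}
i + 2^a \equiv j + 2^c, \qquad i + 2^b \equiv j + 2^d \pmod m .
\end{equation*}
The two common elements are distinct, so $a \neq b$ (equal exponents give the same element of $I_{i,k,m}$). Likewise $c \neq d$. Subtracting the congruences eliminates $i$ and $j$ and gives
\begin{equation*}
2^a - 2^b \equiv 2^c - 2^d \pmod m .
\end{equation*}
We also record $a \neq c$: otherwise the first congruence forces $i \equiv j$, contradicting $0 \le i < j \le m-1$. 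Similarly $b \neq d$.

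The next step lifts the congruence to an equality of integers using $m \ge 2^k$. Every difference $2^a - 2^b$ with $a,b \le k-1$ lies in $(-2^{k-1}, 2^{k-1})$. Hence $(2^a - 2^b) - (2^c - 2^d)$ lies strictly between $-2^k$ and $2^k$, so its absolute value is less than $m$. It is divisible by $m$, so it must be $0$, giving
\begin{equation*}
2^a + 2^d = 2^b + 2^c .
\end{equation*}

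Finally, apply uniqueness of binary representations, treating the case of equal exponents separately. If $a \neq d$, the left side has exactly two set bits, at positions $a$ and $d$. If $a = d$, it equals $2^{a+1}$, a single bit. The same holds on the right for the pair $(b,c)$.

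If both sides have two distinct bits, then $\{a,d\} = \{b,c\}$. Since $a \neq b$, we get $a = c$, contradicting $a \neq c$.

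If both sides are single powers of two, then $a = d$ and $b = c$, and $2^{a+1} = 2^{b+1}$ forces $a = b$, a contradiction.

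A two-bit number cannot equal a one-bit number, so the mixed cases are impossible. Every case yields a contradiction, so the intersection has at most one element.

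The main obstacle is the lifting step: the bound $|(2^a-2^b)-(2^c-2^d)| < 2^k \le m$ is exactly where the hypothesis $m \ge 2^k$ enters. Both the strict inequality and the carry case $a = d$ need care. The edge cases $k \le 1$ are trivial, since then each set has at most one element.
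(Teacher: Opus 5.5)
Your proof is correct. It ends the same way as the paper's proof: both reduce to an integer identity between sums of two powers of two and settle it by uniqueness of binary expansions. The difference is how the modulus is removed.

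\emph{The paper's route.} It uses shift invariance to assume $i=0$ and $j\le m/2\le m-2^{k-1}$, so that no element of either group wraps around mod $m$. The two memberships then hold as integer equalities, and $i<j$ gives the ordering $a>c$, $b>d$, which produces the contradiction.

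\emph{Your route.} You subtract the two congruences so that $i$ and $j$ drop out, and lift $2^a-2^b\equiv 2^c-2^d \pmod m$ to an equality by a size bound. In place of an ordering you use only $i\not\equiv j \pmod m$, which gives $a\neq c$, and you dispose of the carry cases explicitly.

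\emph{What your route buys.} It avoids the WLOG step, and it proves slightly more. In fact $|2^a-2^b|\le 2^{k-1}-1$, so your lifted difference has absolute value at most $2^k-2$. The argument therefore works verbatim for $m\ge 2^k-1$. That is exactly the relaxation the paper only conjectures after its proof, and it matches the main-text claim for tables with at least $7$ columns when $k=3$.

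\emph{A remark on the paper's proof.} Your identity $2^a+2^d=2^b+2^c$ is the correct one. The paper's displayed subtraction swaps $c$ and $d$ on the right-hand side and arrives at $2^b+2^d=2^a+2^c$. This slip is harmless: with the correct identity, the ordering $a>c$, $b>d$ still yields a contradiction once the carry cases $a=d$ or $b=c$ are dismissed, as you do.
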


\begin{proof}
    Suppose that $|I_{i,k,m} \cap I_{j,k,m}| \geq 2$. But then, by shift invariance, for all $n \in \bbZ$, $|I_{i+n \bmod m,k,m} \cap I_{j+n\bmod m,k,m}| \geq 2$. Hence, by a suitable shift, we can assume without loss of generality that $i=0$ and $j \leq m/2 \leq m - 2^{k-1}$. Hence, the modulo in the definitions of $I_{i,k,m}$ and $I_{j,k,m}$ does nothing and we can find $a, b, c, d \in \{0, \dots, k-1\}$ with $a \neq b$ and
    \begin{align*}
        i + 2^a - 1 & = j + 2^c - 1~, \\
        i + 2^b - 1 & = j + 2^d - 1~.
    \end{align*}
    which in particular implies
    \begin{equation*}
        (i + 2^b - 1) - (i + 2^a - 1) = (j + 2^c - 1) - (j + 2^d - 1)~,
    \end{equation*}
    yielding $2^b + 2^d = 2^a + 2^c$. But this means that $\{b, d\} = \{a, c\}$. From $i < j$ we know that $a > c$ and $b > d$, implying $a=b$ and $c=d$, contradicting our previous assumption.
\end{proof}

Note that the assumption $m \geq 2^k$ can likely be relaxed to $m \geq 2^k - 1$.

\subsection{Compression then ICL}
\label{sec:appendix:embed_then_icl}
\textbf{Column-wise embedding.} Column-wise embedding processes each column through a set transformer $\TFcol$~\citep{set-transformer}. Its core is \emph{induced self-attention} with $k$ inducing vectors that proceeds in two stages: the first stage aggregates input information into inducing vectors and the second broadcasts back to the input, reducing complexity from $O(n^2)$ to $O(nk)$. We make three improvements: (a) we directly use the outputs of $\TFcol$ as feature embeddings, while TabICL applies an additional transformation; (b) we apply our query-aware scalable softmax (QASSMax) to the first stage of induced self-attention; and (c) $\TFcol$ is essentially an in-context learner operating within each column thanks to target-aware embedding.

\textbf{Row-wise interaction.} Following TabICL \citep{tabicl}, we prepend four learnable [CLS] tokens to each row and process them through a transformer $\TFrow$ with RoPE~\citep{RoPE}. The outputs of the [CLS] tokens are concatenated to form a $4d$-dimensional row embeddings, effectively collapsing the feature dimension.

\textbf{Dataset-wise in-context learning.} Training row embeddings are combined with target embeddings. A transformer $\TFicl$ processes all embeddings, where test samples attend only to training samples. A two-layer MLP converts the outputs into target predictions. We apply QASSMax to $\TFicl$ to improve its long-context generalization.

\subsection{Many-class classification}
\label{sec:appendix:many-class}

Like many tabular foundation models, \name is pretrained on classification tasks with up to 10 classes. For tasks with more classes, TabICL~\citep{tabicl} proposed hierarchical classification during the ICL stage, which recursively partitions classes into subproblems with at most 10 classes each. However, our target-aware embedding introduces label information before hierarchical partitioning occurs. Since our label encoder supports at most 10 classes, we need a mechanism to handle many-class scenarios at this early stage. We propose \emph{\manyclassname}, which generates multiple simplified views of the original labels, each containing at most 10 classes. The key idea is to decompose the class label using a mixed-radix number system, where each digit corresponds to a different view of the classification problem.

\paragraph{Computing balanced bases.}
For a task with $C > 10$ classes, we first compute a sequence of balanced bases $[k_0, k_1, \ldots, k_{D-1}]$ satisfying two constraints:
\begin{enumerate}
    \item Each base is bounded: $k_i \leq 10$ for all $i$
    \item The product covers all classes: $\prod_{i=0}^{D-1} k_i \geq C$
\end{enumerate}
We select bases to be as balanced as possible (i.e., $k_i \approx k_j$) to ensure each view captures roughly equal discriminative information. The number of views $D$ is minimized subject to these constraints.

\paragraph{Mixed-radix label encoding.}
Given the bases $[k_0, \ldots, k_{D-1}]$, each class label $y \in \{0, 1, \ldots, C-1\}$ is re-encoded into $D$ views using mixed-radix decomposition:
\begin{equation}
	y^{(i)} = \left\lfloor y \,/\, \textstyle\prod_{j>i} k_j \right\rfloor \bmod k_i, \quad i = 0, \ldots, D-1
\end{equation}
This is analogous to representing a number in a mixed-radix positional system, where each ``digit'' $y^{(i)}$ takes values in $\{0, 1, \ldots, k_i - 1\}$. Consider a 16-class problem ($C = 16$) with bases $[k_0, k_1] = [4, 4]$:
\begin{itemize}[label={}]
    \item \textbf{View 0}: $y^{(0)} = \lfloor y / 4 \rfloor$ partitions classes into consecutive blocks:
    \begin{itemize}
        \item Classes $\{0, 1, 2, 3\} \to 0$
        \item Classes $\{4, 5, 6, 7\} \to 1$
        \item Classes $\{8, 9, 10, 11\} \to 2$
        \item Classes $\{12, 13, 14, 15\} \to 3$
    \end{itemize}
    \item \textbf{View 1}: $y^{(1)} = y \bmod 4$ partitions classes by remainder:
    \begin{itemize}
        \item Classes $\{0, 4, 8, 12\} \to 0$
        \item Classes $\{1, 5, 9, 13\} \to 1$
        \item Classes $\{2, 6, 10, 14\} \to 2$
        \item Classes $\{3, 7, 11, 15\} \to 3$
    \end{itemize}
\end{itemize}
Each view creates a different grouping of the original classes, and no single view can distinguish all 16 classes. However, the combination of both views uniquely identifies each class: the pair $(y^{(0)}, y^{(1)})$ forms a bijection with the original label $y$.

\paragraph{Ensemble aggregation.}
For each view $i$, we run the column-wise transformer $\TFcol$ with the corresponding re-encoded labels:
\begin{equation}
	O^{(i)} = \TFcol(E_1 + \mathrm{Embed_{\text{TAE}}}(y^{(i)}))
\end{equation}
where $E_1$ denotes the embeddings before label injection and $\mathrm{Embed_{\text{TAE}}}$ is the target-aware embedding layer. The final output is the average across all views:
\begin{equation}
	O_{\mathrm{avg}} = \frac{1}{D} \sum_{i=0}^{D-1} O^{(i)} = \frac{1}{D} \sum_{i=0}^{D-1} \TFcol(E_1 + \mathrm{Embed_{\text{TAE}}}(y^{(i)}))
\end{equation}

\paragraph{Relationship to error-correcting output codes.}
Our approach is inspired by error-correcting output codes (ECOC)~\citep{dietterich1994solving}, which decomposes multi-class classification into multiple binary problems. However, unlike ECOC which uses binary codes and trains separate classifiers, our method: (1) uses $k$-ary codes with $k \leq 10$ to match our pretrained label encoder capacity, (2) operates at the embedding level rather than the prediction level and averages embeddings rather than combining binary predictions.

\paragraph{Combined with hierarchical classification.}
Hierarchical classification operates at the ICL stage, while \manyclassname handles many-class scenarios at the column-wise embedding stage. Together, they enable \name to handle classification tasks with an arbitrary number of classes without retraining.

\subsection{Model configuration}
\label{sec:appendix:model_config}

\name adopts an architecture similar to TabICL, consisting of column-wise embedding through a Set Transformer $\TFcol$, row-wise interaction through a Transformer encoder $\TFrow$, and dataset-wise in-context learning through a Transformer encoder $\TFicl$. We train separate checkpoints for classification and regression tasks. \Cref{tab:model_config} summarizes the key architectural differences between the two models.

\paragraph{Classification model.}
$\TFcol$ consists of three induced self-attention blocks with 128 inducing vectors, model dimension $d=128$, and 8 attention heads. The target-aware embedding $\mathrm{Embed}_{\mathrm{TAE}}$ is a learnable lookup table providing class embeddings for 10 classes.

$\TFrow$ is a 3-layer Transformer encoder with model dimension $d=128$ and 8 attention heads. It uses 4 learnable \texttt{[CLS]} tokens to aggregate feature-wise information into a single row representation.

$\TFicl$ is a 12-layer Transformer encoder with model dimension $d=512$ and 8 attention heads. The ICL-stage target embedding $\mathrm{Embed}_{\mathrm{ICL}}$ is also a learnable lookup table for 10 classes.

All Transformer attention blocks use pre-norm layer normalization with learnable weights and biases and GELU activations. The feedforward modules use a dimension expansion factor of 2.

The final prediction head is a 2-layer MLP with hidden dimension 1024 and output dimension 10.

For QASSMax, $\mathrm{MLP}_{\mathrm{base}}: \mathbb{R} \to \mathbb{R}^{H \times d_{\mathrm{head}}}$ and $\mathrm{MLP}_{\mathrm{gate}}: \mathbb{R}^{d_{\mathrm{head}}} \to \mathbb{R}^{d_{\mathrm{head}}}$ are both 2-layer MLPs with hidden dimension 64 and GELU activation. The last layer of $\mathrm{MLP}_{\mathrm{gate}}$ is initialized to zero, ensuring the initial modulation is identity.

\paragraph{Regression model.}
Adapting \name for regression requires minimal architectural changes: the target-aware embedding $\mathrm{Embed}_{\mathrm{TAE}}$ and ICL-stage target embedding $\mathrm{Embed}_{\mathrm{ICL}}$ use linear layers to embed continuous targets instead of class lookup tables, and the final MLP outputs 999 quantile predictions instead of class probabilities.

Compared to the classification model, the regression model uses bias-free layer normalizations with learnable weights only.

\begin{table}[h]
	\centering
	\caption{Model configuration for classification and regression.}
	\label{tab:model_config}
	\small
	\begin{tabular}{lcc}
		\toprule
		\textbf{Component} & \textbf{Classification} & \textbf{Regression} \\
		\midrule
		\multicolumn{3}{l}{\textit{$\TFcol$ (Column-wise embedding)}} \\
		\quad Layers & 3 & 3 \\
		\quad Inducing vectors & 128 & 128 \\
		\quad Model dimension & 128 & 128 \\
		\quad Attention heads & 8 & 8 \\
		\midrule
		\multicolumn{3}{l}{\textit{$\TFrow$ (Row-wise interaction)}} \\
		\quad Layers & 3 & 3 \\
		\quad Model dimension & 128 & 128 \\
		\quad Attention heads & 8 & 8 \\
		\quad \texttt{[CLS]} tokens & 4 & 4 \\
		\midrule
		\multicolumn{3}{l}{\textit{$\TFicl$ (In-context learning)}} \\
		\quad Layers & 12 & 12 \\
		\quad Model dimension & 512 & 512 \\
		\quad Attention heads & 8 & 8 \\
		\midrule
		\multicolumn{3}{l}{\textit{Target embedding}} \\
		\quad $\mathrm{Embed}_{\mathrm{TAE}}$ & Lookup (10 classes) & Linear \\
		\quad $\mathrm{Embed}_{\mathrm{ICL}}$ & Lookup (10 classes) & Linear \\
		\midrule
		\multicolumn{3}{l}{\textit{Prediction head}} \\
		\quad Hidden dimension & 1024 & 1024 \\
		\quad Output dimension & 10 & 999 \\
		\midrule
		\multicolumn{3}{l}{\textit{Other settings}} \\
		\quad LayerNorm bias & Yes & No \\
		\quad FFN expansion & 2$\times$ & 2$\times$ \\
		\quad Activation & GELU & GELU \\
		\bottomrule
	\end{tabular}
\end{table}

\FloatBarrier

\section{More Pretraining Details About \name} \label{sec:appendix:pretraining}

\subsection{Three pretraining stages}
\label{sec:appendix:pretraining:three_stages}

Following TabICL~\citep{tabicl}, we adopt a three-stage pretraining curriculum that progressively increases the sample size of synthetic datasets with batch size of 64 as follows:
\begin{itemize}[leftmargin=*,nosep]
	\item \textbf{Stage 1}: 500K steps with 1,024 samples per dataset.
	\item \textbf{Stage 2}: 40K steps with 400--10,240 samples (log-uniform).
	\item \textbf{Stage 3}: 10K steps with 400--60,000 samples (log-uniform).
\end{itemize}

\Cref{fig:talent_three_stage} shows the performance of \name after each stage on the TALENT benchmark. Each stage yields consistent improvements: on all datasets (\Cref{fig:talent_three_stage_all}), average rank improves from 9.94 (Stage 1) to 5.69 (Stage 2) to 5.41 (Stage 3). The gains are most pronounced on large datasets with more than 10K samples (\Cref{fig:talent_three_stage_gt_10k}): Stage 1 achieves only rank 14.91, comparable to XGBoost (14.60), but Stage 2 dramatically improves to 5.50 and Stage 3 further reaches 4.71, substantially outperforming all baselines including RealTabPFN-2.5 (6.35). This demonstrates that exposure to larger synthetic datasets during pretraining is crucial for generalization to real-world large-scale datasets.

\begin{figure}
	\centering
	\begin{subfigure}{0.75\linewidth}
		\includegraphics[width=\textwidth]{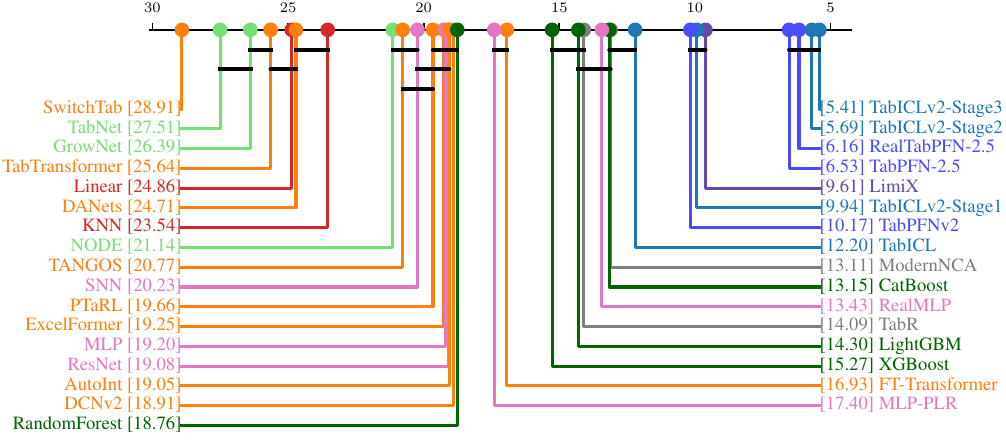}
		\caption{Results on all datasets}
        \label{fig:talent_three_stage_all}
	\end{subfigure}
        \\
	\begin{subfigure}{0.75\linewidth}
		\includegraphics[width=\textwidth]{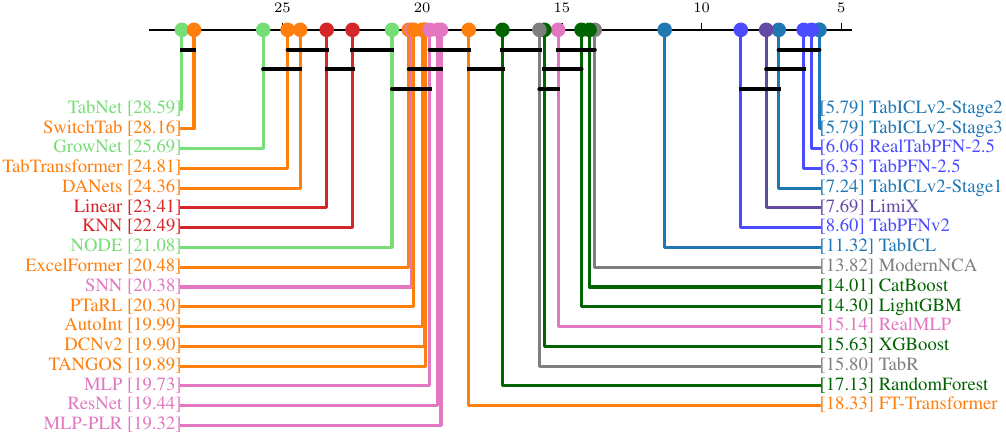}
		\caption{Results on small datasets with less than 10K samples}
        \label{fig:talent_three_stage_le_10k}
	\end{subfigure}
        \\
	\begin{subfigure}{0.75\linewidth}
		\includegraphics[width=\textwidth]{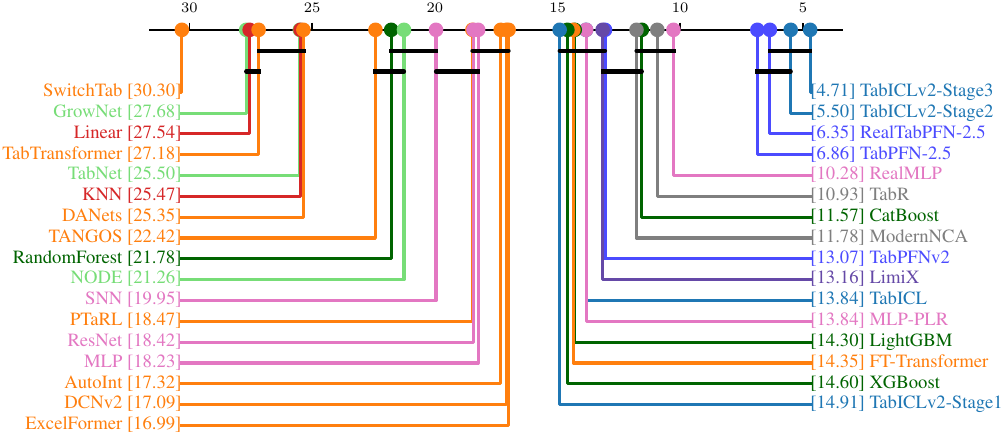}
		\caption{Results on large datasets with more than 10K samples}
        \label{fig:talent_three_stage_gt_10k}
	\end{subfigure}
	\caption{\textbf{Critical difference diagram on the TALENT benchmark for \name pretrained on three stages.}} \label{fig:talent_three_stage}
\end{figure}

\subsection{Speed and memory optimization}
Automatic mixed precision is always used. For stage 3, we enable gradient checkpointing when the sample size exceeds 20K to avoid the out-of-memory error. For stages 2 and 3, we use FlashAttention-3~\citep{FlashAttention3}, which provides an average 1.3$\times$ speedup over FlashAttention-2~\citep{FlashAttention2} on large-scale pretraining.

\FloatBarrier

\section{Additional Ablation Results}
\label{sec:appendix:more_ablation}
In the main text (\Cref{sec:ablation}), we present ablation results based on log-loss. Here, we provide additional results using normalized accuracy (\Cref{fig:full_ablation_acc}) and Elo (\Cref{fig:full_ablation_elo}). Across all three metrics, the ordering of ablations remains consistent, leading to the same conclusions about the contribution of each component. 

We additionally perform two more ablation experiments:
\paragraph{Increasing model depth.}
We ablate the effect of increasing model depth (light red line in \Cref{fig:full_ablation}): 4 layers for $\TFcol$ and $\TFrow$ (instead of 3) and 18 layers for $\TFicl$ (instead of 12). Based on log-loss, this deeper model shows no clear improvement over the reference model. However, normalized accuracy and Elo suggest a slight improvement toward the end of pretraining. This marginal gain is likely due to insufficient pretraining for the larger model to fully converge. Nonetheless, since our goal is to achieve state-of-the-art performance through principled innovations rather than simply scaling up model size, we did not pursue this direction further.
\paragraph{Adding noise to the prior.}
Following TabPFNv2~\citep{tabpfnv2}, which adds Gaussian noise at each edge of the causal graph during synthetic data generation to introduce uncertainty, we experimented with incorporating similar noise into our prior (green solid line in \cref{fig:full_ablation}). However, this modification has negligible impact on performance across all metrics.

\begin{figure}
	\centering
	\begin{subfigure}{0.5\linewidth}
		\includegraphics[width=\textwidth]{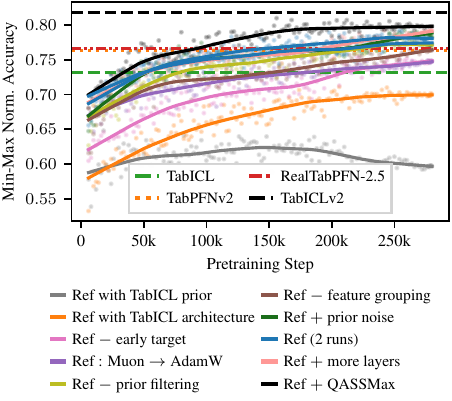}
		\caption{Ablation results based on normalized accuracy}
        \label{fig:full_ablation_acc}
	\end{subfigure}
        \\
	\begin{subfigure}{0.5\linewidth}
		\includegraphics[width=\textwidth]{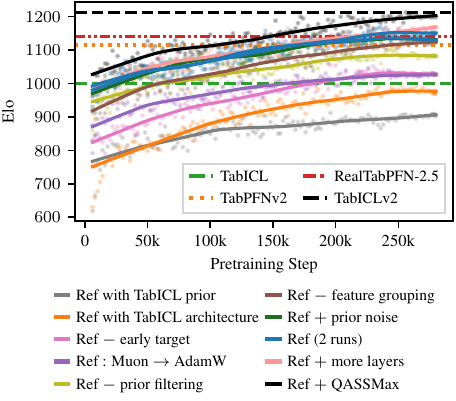}
		\caption{Ablation results based on Elo}
        \label{fig:full_ablation_elo}
	\end{subfigure}
        \\
	\begin{subfigure}{0.5\linewidth}
		\includegraphics[width=\textwidth]{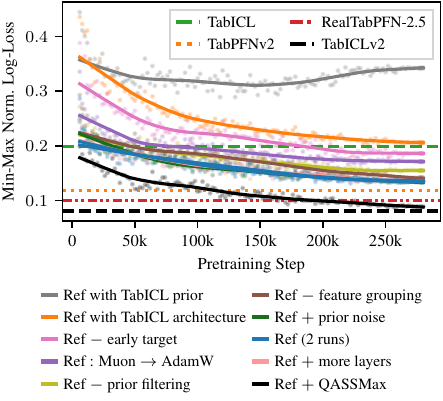}
		\caption{Ablation results based on log-loss}
        \label{fig:full_ablation_logloss}
	\end{subfigure}
	\caption{\textbf{Ablation results using different metrics.}} \label{fig:full_ablation}
\end{figure}

\section{Other Things We Tried}

Here, we want to describe some other things that we tried but did not end up using, mostly in smaller-scale experiments and without careful analysis. We hope that it can serve as anecdotal evidence to other model developers. Generally, the results of pretraining runs are somewhat noisy, so these observations have to be taken with at least one grain of salt. Reducing the pretraining noise could itself be a useful contribution of future research. 

\paragraph{Pretraining.} Contrary to \cite{tabdpt}, we did not see a benefit from using schedule-free AdamW \citep{sfadam} over regular AdamW with a cosine schedule. We found some benefit from AdEMAMix \citep{ademamix} in small-scale runs compared to AdamW, but it performed worse than Muon, and a combination of both did not seem to help. For AdamW, decreasing $\beta_2$ showed improvements at least for shorter runs. The comparisons between cautious weight decay, weight decay, and no weight decay were not very clear; we went with cautious weight decay due to its inclusion in the NanoGPT speedrun \citep{modded-nanogpt}. Since \cite{tabdpt} used label smoothing but this can hurt the performance on metrics like log-loss, we tried a label smoothing schedule that decays to zero at the end of training, but it resulted in equal performance.

\paragraph{Architecture: embeddings.} We did not see benefits from using MLPs instead of linear layers for embedding $x$. Adding $\log(n)$ together with $x$ was not helpful in small runs either. Surprisingly, using regular column-wise attention instead of ISAB performed worse in some runs. Mixing the layers of the column- and row-attention stages did not seem beneficial, and it can be a disadvantage since it requires more transposes and results in less optimized CPU- and disk-offloading. Even with such mixing, it did seem worse to have a separate column for $y$ instead of adding the embedding of $y$ to the embeddings of the columns $x_i$. 

\paragraph{Architecture: row interaction.} It seems that the full row-wise attention (attention across columns, within a row) is important, replacing it with induced self-attention performed considerably worse. We experimented a bit with random feature identifiers in the version used by LimiX \citep{limix}, but it was unclear whether they are beneficial.

\paragraph{Architecture: normalizations.} We did not see improvements from different placements of normalization layers (though the experiments used shallower nets). Additionally, bias-free or parameter-free layernorms seemed to perform similar to full layernorms while being faster, but we were not very confident in whether these measurements are good enough. 

\paragraph{Architecture: other.} TabPFNv2-type architectures seemed to perform well, and we have no clear conclusion in which situations they are better or worse than the \name architecture. Due to the higher runtime complexity and per-step time of the TabPFNv2 architecture, we discarded it. Experiments with residual connections did not show much difference in the results. Increasing the number of attention heads from $4$ to $8$ in $\TFcol$ and $\TFicl$ seemed slightly beneficial in small-scale runs with a smaller model, but not necessarily in large-scale runs.

\paragraph{Other things.} For regression (judged by MSE), we found incremental spline quantile functions \citep[ISQF,][]{isqf} to work similarly well as regular quantile regression, but discarded them due to their computational (and conceptual) overhead. While LimiX \citep{limix} included convolution-based functions in their prior, we did not see a measurable benefit from including these functions into our prior, at least when fine-tuning TabICL on our prior. TabICL uses a masking mechanism to deal with micro-batches in which some datasets have fewer columns, where the rest of the columns are filled up with zeros. We did not implement this and the pretraining still worked well.

\section{Details on the Prior} \label{sec:appendix:prior}

In the following, we describe left-out details from the prior description in \Cref{sec:prior}, with a focus on a more implementation-related description. To keep the prior modular, we decompose it into sampling methods for different objects: random datasets, random functions, random points, random matrices, and so on. These will be described in the following subsections (which may sometimes refer to components introduced by later subsections). Some of the components are visualized in \Cref{sec:appendix:prior_plots}.

\subsection{Differences to previous priors} \label{sec:appendix:prior_novelty}

The closest prior to ours is probably the TabPFNv2 prior \citep{tabpfnv2}, though not all details about it are known. However, our prior still differs from it in many ways by introducing new mechanisms:

\begin{itemize}
    \item We introduce a new correlated scalar sampling mechanism (\Cref{sec:appendix:random_scalars}).
    \item We introduce a new random graph sampling mechanism (\Cref{sec:appendix:random_graph}).
    \item We introduce additional computations at each node that create random node and feature importances (\Cref{sec:appendix:random_node_function}). 
    \item We make the extraction of numerical and categorical features more precise through the introduction of random converters (\Cref{sec:appendix:random_converter}) and provide more variants for categorical converters.
    \item We explicitly introduce multiple ways to apply random functions in the case of multiple parent nodes (\Cref{sec:appendix:random_multi_function}; it is unclear if this case can occur in TabPFNv2).
    \item We introduce new random function types and diversify existing ones (\Cref{sec:appendix:random_function}). In particular, for tree-based functions we do not only use single trees as in TabPFNv2. Instead, we use an ensemble of CatBoost-style symmetric trees which facilitate efficient computations. Our GP functions are extended and multivariate versions of the random GP activations from TabICL and come with a detailed theoretical analysis (\Cref{sec:appendix:gp_smoothness}). We also introduce random linear, quadratic, clustering-based, and product functions.
    \item We add more random activations (\Cref{sec:appendix:random_activation}).
    \item While TabPFNv2 uses random Gaussian matrices, we introduce four additional random matrix types (\Cref{sec:appendix:random_matrix}).
    \item We introduce random weights that are useful in multiple places, be it for sampling correlated categoricals, feature importances, or singular values (\Cref{sec:appendix:random_weights}).
    \item We introduce more mechanisms to sample random points, by applying a random function to different kinds of base distributions (\Cref{sec:appendix:random_points}).
    \item We introduce a filtering mechanism similar to \cite{machinelearninglm} in \Cref{sec:appendix:filtering}.
    \item In addition, for scalar random variables like the number of categories etc., we generally use different choices of distributions, either to fit our framework, or because they are not known for TabPFNv2.
\end{itemize}

\subsection{Sampling correlated scalars} \label{sec:appendix:random_scalars}

When sampling scalar numerical values within the prior, we want some of them to be correlated, since we expect some properties to be correlated in real datasets. For example, correlated quantities could include the cardinalities of different categorical columns. To know which values should be correlated, we assign names to them such as ``categorical\_cardinality''.  All values with that name are sampled from the same distribution, whose parameters are themselves sampled once for every name. 

\paragraph{Mechanism:} For each variable name, we sample $t \sim \Uniform[0, 1]$ and $s \sim \LogUniform[0.1, 10000]$ and set $\alpha := st, \beta := s(1-t)$. For each time a variable should be sampled for that name, a base variable is sampled as $u \sim \Beta(\alpha, \beta)$. Then, it is affinely transformed to a desired range, the exponential is taken for log-type distributions, and it is rounded down for integer-valued random variables. We denote uniform-like real-valued and integer-valued random variables with bounds $a, b$ by $\Num(a, b), \Int(a, b)$ and log-uniform-like versions as $\LogNum(a, b), \LogInt(a, b)$.

For categorical values (like the random function type used at each node), we intended to implement correlated sampling, but it was not used due to a bug.

\subsection{Random dataset}

We first sample some general characteristics of the dataset. For classification, we sample the number of classes from $\UniformInt(2, 10)$. The ratio of categorical columns is sampled from $\Uniform(-0.5, 1.2)$ and subsequently clipped to $[0, 1]$. The categorical cardinalities are sampled from $\LogInt(2, M)$, where the maximum cardinality is sampled once as $M \sim \LogInt(2, 9)$, and a uniform random fraction of them is sampled through the correlated sampling mechanism. The total number of columns is configurable but is sampled from $\UniformInt(2, 100)$ in our training runs.%

We then sample a random graph with $\LogInt(2, 32)$ nodes. To assign the different target columns to nodes in the randomly sampled graph, for each column type (either input columns for $x$, or the target column $y$), we sample the number of eligible nodes uniformly, then sample a subset of nodes of that size, and then assign each column to a random node in that subset. The graph sampling and node assignment is potentially repeated until the graph filtering mechanism accepts it. The nodes are then traversed in topological order and their corresponding random node functions are called with the data from all parent nodes as input. Nodes that are not needed for the final dataset computation are pruned. Finally, the obtained dataset is shuffled randomly and split into train and test parts.

\subsection{Random graph} \label{sec:appendix:random_graph}

For node indices $i < j$, an edge is placed with probability
\begin{equation*}
    p_{ij} = \mathrm{sigmoid}(A + B_i + C_j)~,
\end{equation*}
where $A, B_i, C_j$ are independent standard Cauchy random variables.  $A$ controls the general level of connectivity, while $B_i$ and $C_j$ control the individual outgoing and incoming connectivities of the nodes, respectively. Compared to independent Bernoulli probabilities, this model yields more diverse connectivity patterns. Cauchy random variables have heavy tails and therefore yield higher probabilities of ``exceptions to the rule''.

\subsection{Random node function} \label{sec:appendix:random_node_function}

We first obtain a matrix $X \in \bbR^{n \times d_i}$ from a random multi-function applied to the parent node data, or from the random points mechanism if there are no parents. Here, $d_i = \sum_j d_{ij} + \LogInt(1, 32)$, where the $d_j$ are the dimensions required by the random converters for extracting the dataset columns from the node. A random converter can extract a column and also modify (e.g., discretize) the corresponding portion of the node data. 
We then standardize the columns of $X$. Then, we sample a weights vector $w \in \bbR^{d_i}$ and multiply each column of $X$ by the corresponding weight. Afterwards, we divide $X$ by the average (over samples) $L^2$ norm of the vectors. The motivation to use the $L^2$ norm instead of the RMS norm is to keep the vector norms small in high dimensions, such that high-dimensional functions do not become too difficult to learn. Now, we apply the random converters for the assigned columns to their respective $d_{ij}$ dimensions, updating the respective part of $X$ with their output. Finally, in the ``random rescale'' step, we multiply $X$ by a scalar $\LogNum(0.1, 10)$.

\subsection{Random converter} \label{sec:appendix:random_converter}

We introduce converters
\begin{equation*}
    X', v = \operatorname{converter}(X), \quad X, X' \in \bbR^{n \times d}, v \in \bbR^n~,
\end{equation*}
which extract a column $v$ for the generated dataset while also potentially modifying the node data $X$ to $X'$.  

\paragraph{Numerical converters:} We set $v=X$, $d=1$, and choose $X' = f(X)$, where $f$ is sampled as the identity or a warping function based on a Kumaraswamy distribution \citep{kumaraswamy1980generalized} after min-max scaling, following \cite{tabpfnv2}. For the Kumaraswamy warping, we min-max scale values $x$ to the range $[0, 1]$ and then compute $1 - (1-x^a)^b$ with $a, b \sim \LogNum(0.2, 5)$. The Kumaraswamy warping is unintentionally applied to compute $x'$ instead of $v$.

\paragraph{Categorical converters:} Let $c \in \bbN$ be the number of desired categories. We use two main approaches:
\begin{itemize}
    \item \textbf{Neighbor-based}: We choose a random subset of $c$ points from the data. As in the RandomDiscretizationFunction, we then map each point $x$ to its closest point in the subset as measured by the $L^p$ distance, $p = \LogNum(0.5, 4)$. The index of the closest center is the class index. For neighbor-based approaches, we sample the desired dimension $d$ of $x$ as $c$ with probability $1/2$ and $\Int(1, c-1)$ otherwise.
    \item \textbf{Softmax-based}: We sample the category from
    \begin{equation*}
        \mathrm{softmax}(a \tilde{x} + b)~,
    \end{equation*}
    where $\tilde x$ is a standardized version of the input $x \in \bbR^c$, $a \sim \LogNum(0.1, 10)$, and $b = \log(w + 10^{-4})$ with $w \in \bbR^c$ being a random weight vector. The variation in $a$ can create different levels of separation between categories, and the variation in $b$ can create different levels of imbalance. For softmax-based approaches, we always need the dimension of $x$ to be $d=c$.
\end{itemize}

We further distinguish different approaches to compute the transformed node vector $x'$:
\begin{itemize}
    \item Output the input $x$ (neighbor- or softmax-based)
    \item Output the category index $i$, repeated to get a $d$-dimensional vector (neighbor- or softmax-based).
    \item Output the closest center (neighbor-based) or a random function applied to the closest center (neighbor-based).
    \item Sample random points $\{z_1, \hdots, z_c\}$, then output $z_i$ where $i$ is the category index (softmax-based).
\end{itemize}
In total, we obtain seven combinations of categorical converters, of which we sample one randomly.

\subsection{Random multi-function} \label{sec:appendix:random_multi_function}

If there is only a single input node, we use a random function (see below). Otherwise, if there are $n_{\mathrm{in}}$ input nodes, we proceed as follows: With probability $1/2$, we concatenate the tensors of all input nodes along the features dimension and apply a random function to it. Else, we apply separate random functions to each input node, obtaining $n_{\mathrm{in}}$ tensors of dimension $n \times d$ that are aggregated along the $n_{\mathrm{in}}$ axis using one of the following four element-wise aggregations: sum, product, max, or logsumexp.

\subsection{Random functions} \label{sec:appendix:random_function}

\paragraph{RandomNNFunction} A random NN with $\LogInt(1, 3)$ linear layers, hidden width $\LogInt(1, 127)$ (drawn once per NN), and a 50\% chance each of including an activation at the beginning or end of the network. The linear layers use RandomLinearFunction (no bias, but there is a bias in the activation function). 

\paragraph{RandomTreeFunction} Generates $\LogInt(1, 128)$ trees, of depth $\Int(1, 7)$. Each tree is symmetric (= oblivious), meaning that it uses the same splitting criterion for all nodes on the same layer. The split dimension is chosen randomly with probability proportionally to the standard deviation of data in that dimension, to respect the feature importances that were randomly sampled on the input nodes. The split points are random samples from the arriving data in the respective dimension. The leaf values are standard normal random values. Then, each tree is evaluated for the data and the corresponding leaf values are averaged.

\paragraph{RandomDiscretizationFunction} Chooses a subset of samples from $X$ as centers, with the number of samples being $\LogInt(2, 255)$. It then maps each point $x$ to its closest center as measured by the $L^p$ distance, $p = \LogNum(0.5, 4)$, and applies a random linear function to the result.

\paragraph{RandomGPFunction} This function computes $f(Mx) = (f_1(Mx), \hdots, f_d(Mx))$, where each component $f_i$ is sampled from a Gaussian process with a random kernel $k$ shared between all $i$. Here, $M_{ij} = \alpha w_i A_{ij}$ with random weights vector $w$, random scale $\alpha \sim \LogNum(0.5, 10)$, and a random Gaussian matrix $A$. This choice is inspired by the random GP activations in TabICL as well as the success of tuning over different kernels with different bandwidths (scales) and learnable linear input transformations in xRFM \citep{xrfm}.

The first question is how to design the random kernels $k$. In order to use a random Fourier features approximation \citep{rff}, we design $k$ directly in the Fourier domain. Suppose that $g: \bbR^d \to \bbR_{\geq 0}$ is integrable and even ($g(x) = g(-x)$ for all $x$). Then $k(x, x') = \check g(x - x')$ is a real-valued kernel on $\bbR^d$ (see e.g.\ \Cref{lemma:sobolev_kernels}), where $\check g(x) = \int e^{i \langle x, \omega \rangle} g(\omega) ~d\omega$ is the inverse Fourier transform of $g$. In \Cref{sec:appendix:gp_smoothness}, we show that the tail behavior of $g$ is directly related to the smoothness of functions sampled from the GP: If there exist constants $c, C, r_0 > 0$ such that 
\begin{equation}
    c\|\omega\|^{-q} \leq g(\omega) \leq C\|\omega\|^{-q} \qquad \text{whenever $\|\omega\| \geq r_0$},\label{eq:fourier_criterion}
\end{equation}
then the sample paths (sampled functions) from the GP essentially have smoothness $(q-d)/2$, at least when $q > 2d$.
If $g$ is a probability density function (integrates to $1$), we can approximate $k$ using random Fourier features \citep{rff}:
\begin{equation*}
    k(x, x') \approx \phi(x)^\top \phi(x'), \quad \phi(x) = \sqrt{2/p} \cos(Wx + b) \in \bbR^p~,
\end{equation*}
where the rows of $W$ are independently drawn from $g$ and $b_i \sim$ Unif$[0, 2\pi]$ i.i.d. The dimension $p$ can be chosen arbitrarily large to improve the approximation quality; we follow \cite{tabicl} and set $p = 256$.

As argued in \cite{tabicl}, for a standard normal random vector $z \in \bbR^p$, $z^\top \phi(x)$ follows a Gaussian process with kernel $\phi(x)^\top \phi(x')$, which therefore approximates the Gaussian process with kernel $k$. For a general multi-output setting, we therefore sample $Z \in \bbR^{\dout \times p}$ with i.i.d.\ standard normal entries and compute (omitting the factor $\sqrt{2}$ which only rescales the output)
\begin{equation*}
    f(Mx) = p^{-1/2}Z\phi(Mx) = p^{-1/2}Z\cos(WMx + b) = p^{-1/2}Z\cos((W\mathrm{diag}(w)A)x + b)~,
\end{equation*}
with $w, A$ from the beginning of the explanation.

It remains to find a probability density function $g$ for a given $q$ from which we can efficiently sample. Since we can choose a rotation-invariant distribution, we can just sample it as $\omega = rz/\|z\|$, where $z \in \bbR^d$ is a standard normal vector and $r$ controls the radial component of $\omega$. For densities with tail $\Theta(\|\omega\|^{-q})$, integration in spherical coordinates yields that the tail of the density of $r = \|\omega\|$ must behave like $\Theta(r^{d-1} r^{-q}) = \Theta(r^{d-1-q})$.

We construct a family of 1D distributions with power-law tails that are easy to sample from: For $a > 1$, we define a CDF of $H_a(r) = 1 - (1+r)^{1-a}$ (for $r \geq 0$). The associated PDF is $h_a(r) = H_a'(r) = (a-1)(1+r)^{-a}$. We can then sample from $H$ using inverse CDF sampling: For $u \sim$ Unif$[0, 1]$, $$H_a^{-1}(u) = (1-u)^{\frac{1}{1-a}}-1$$ is distributed according to $H_a$. We sample $a \sim \LogNum(2, 20)$, corresponding to $q = a + d - 1$, and sample $r$ from $H_a$. %

With 50\% probability, we choose \textbf{another way} to sample the kernel: Inspired by the choice of non-rotationally invariant axis-aligned kernels in xRFM \citep{xrfm}, we alternatively sample each entry of $\omega$ independently from $H_a$. This yields a product distribution $g(\omega) = g_1(\omega_1) \cdot \hdots \cdot g_d(\omega_d)$ for $\omega$, whose inverse Fourier transform $\check g$ yields a kernel that is a product of one-dimensional kernels, as for the case ``$p=q$'' in xRFM \citep{xrfm}. We do not explicitly prove a result about the path smoothness for product kernels, but a similar argument to \Cref{thm:gp_smoothness} in conjunction with Theorem 4.2 of \cite{steinwart} should yield that for $a > 2$, the paths are contained in Sobolev spaces of dominating mixed smoothness of order $s$ whenever $s < (a-1)/2$. As in the other case, we sample $a \sim \LogNum(2, 20)$.
Since the product kernel constructed in this way is axis-aligned, we do not apply the random matrix $M$ in the construction above to preserve axis-alignment.

\paragraph{RandomLinearFunction} Samples a random matrix (\Cref{sec:appendix:random_matrix}) and multiplies each vector $x$ by this matrix.

\paragraph{RandomQuadraticFunction} Computes
\begin{equation*}
    f(x)_i = \sum_{j,k} M_{ijk}x_jx_k~,
\end{equation*}
where each $M_i$ is a random matrix, jointly sampled from the same random matrix type. To avoid quadratic complexity in the dimension of $x$, we first subsample $x$ to at most 20 dimensions if it has more than 20 dimensions. We include linear and constant terms by appending $1$ to the vector $x$.

\paragraph{RandomEMAssignmentFunction} This function type is inspired by the computation of the probability $p_i$ that the input $x$ is from the cluster $i$ in the EM algorithm. However, we add some things like different $L^p$-norms and random powers $q$ without making sure that this corresponds to any real ``cluster assignment'' computation, simply for further increasing the diversity of computed functions.

First, a number $m = \LogInt(2, \max\{16, 2d_{\mathrm{out}}\})$ of ``Gaussians'' is sampled. Then, centers $\mu_1, \hdots, \mu_m$ are chosen using random input vectors plus standard normal noise. Standard deviations $\sigma_1, \hdots, \sigma_m$ are chosen independently as $\exp(0.1 \cdot \mathrm{Normal}(0, 1))$. Then, logits are computed as
\begin{equation*}
    l_i(x) = -\frac{1}{2} \log(2\pi\sigma_i^2) - (\|x - \mu_i\|_p / \sigma_i)^q~,
\end{equation*}
where $p = \LogNum(1, 4)$, $q = \LogNum(1, 2)$. (The Gaussian case would be using $p=2$, $q=1$, and using $d/2$ instead of $1/2$ in the equation.) Finally, the output is computed as
\begin{equation*}
    L(\mathrm{softmax}(l(x)))~,
\end{equation*}
where $L$ is a random linear function.

\paragraph{RandomProductFunction} Computes $f(x)g(x)$, where $f, g$ are two random functions (not product, NN, or EM, to optimize speed).

\subsection{Random activations} \label{sec:appendix:random_activation}

Following TabICL and TabPFNv2, we further expand the set of available activation functions. By activation functions, we mean functions $f: \bbR^d \to \bbR^d$ that preserve the input dimension, even if they are not element-wise.

We use them as follows inside a NN, expanding upon the standardization + random rescaling from TabICL:
\begin{itemize}
    \item We first standardize (subtract the mean and divide by the standard deviation) along the batch dimension.
    \item We rescale randomly, as $x \leftarrow a(x - b)$, where $a =$ $\LogNum(1, 10)$ and $b$ is a random sample from the standardized data. We choose $b$ this way to avoid getting only zeros for activations like ReLU that are zero in a large portion of the space.
    \item Now, we apply the activation.
    \item Finally, we standardize again.
\end{itemize}

For the activation, with probability $2/3$ we pick one of the following fixed activations, otherwise one of the parametric activations below that.

\paragraph{Fixed activations.} As fixed activations, we use Tanh, LeakyReLU, Elu, Identity, SELU, SiLU, ReLU, softplus, ReLU6, HardTanh, signum, Heaviside, $\exp(-x^2)$, exp, $\mathds{1}_{[0, 1]}$, sin, square, abs, softmax, one-hot argmax, argsort, logsigmoid, $\log(\max(|x|, 10^{-6}))$, rank, sigmoid, round, modulo 1.

\paragraph{Parametric activations.} We introduce the following activations with random parameters:
\begin{itemize}
    \item $\mathrm{ReLU}(x)^q$ with $q \sim \LogNum(0.1, 10)$.
    \item $\mathrm{sign}(x) |x|^q$ with $q \sim \LogNum(0.1, 10)$.
    \item $(|x| + 10^{-3})^{-q}$ with $q \sim \LogNum(0.1, 10)$.
    \item $x^m$ with $m \sim \Int(2, 5)$.
\end{itemize}
The latter two were only used for the random activation matrix.

\subsection{Random matrix} \label{sec:appendix:random_matrix}

We randomly sample a matrix from one of the following five types:

RandomGaussianMatrix: Consists of i.i.d.\ standard normal entries.

RandomWeightsMatrix: To sample a matrix of shape $m \times k$, we compute $M_{ij} = W_{ij} \odot G_{ij}$, where $G$ is a random Gaussian matrix and $W_{i, \cdot}$ are random weight vectors (which are in general correlated through the correlated sampling mechanism). Afterwards, the rows of $M$ are normalized (divided by their norm).

RandomSingularValuesMatrix: To sample a matrix of shape $m \times k$, we compute $U\mathrm{diag}(w) V^\top$, where $w \in \bbR^{\min\{m, k\}}$ is a random weights vector and $U, V$ are random Gaussian matrices of suitable shape. While sampling orthogonal $U, V$ would mean that we explicitly sample the singular value decomposition, using Gaussian $U, V$ is faster and still produces a rotation-invariant distribution (since for Gaussian $U$ and arbitrary orthogonal matrix $R$, the distribution of $U$ is the same as the distribution of $UR$ or $RU$).

RandomKernelMatrix: To sample a matrix of shape $m \times k$, we sample $k+m$ random covariance points $x_1, \hdots, x_{k+m} \in \bbR^3$ and a scaling factor $\gamma \sim \LogNum(0.1, 10)$, then create the Laplace kernel matrix $K_{ij} = \exp(-\gamma\|x_i - x_{m+j}\|)$ and multiply each entry by an independent random sign (a number in $\{-1, 1\}$).

RandomActivationMatrix: After sampling a matrix from one of the other types, we apply a random activation to the flattened matrix, then add Gaussian noise with standard deviation $10^{-3}$. Unlike for the NN, we omit the standardization and random rescaling in the activations, since the batch size is 1, preventing standardization.

\paragraph{Postprocessing.} After creating a matrix using one of the described types, we add 1e-6 times a random Gaussian matrix and normalize each row of the resulting matrix. This prevents all-zero rows that could arise from some activation functions in the RandomActivationMatrix.

\subsection{Random weights} \label{sec:appendix:random_weights}
To emulate random feature importances, singular value decays, or (unnormalized) probability distributions, we introduce a dedicated way to sample random positive vectors $w \in \bbR_{>0}^d$. We first generate
\begin{equation*}
    w_m = m^{-q} \cdot \exp(\mathrm{Normal}(0, \sigma^2)), \quad 1 \leq m \leq d
\end{equation*}
with $q \sim \mathrm{LogNum}(0.1/\log(d+1), 6)$ and $\sigma \sim \mathrm{LogNum}(10^{-4}, 10)$. The lower and upper bounds for $q$ are chosen such that we can sample vectors where no weights are close to zero as well as vectors where almost all weights are close to zero. Finally, we normalize and shuffle $w$.

\subsection{Random points} \label{sec:appendix:random_points}

Generating a matrix $X \in \bbR^{n \times d}$ of random points is in principle the same problem as generating a random dataset with numerical columns, but here we only use cheaper mechanisms (and avoid infinite recursions). First, we sample either standard normal points, uniform on $[-1, 1]^d$, uniform on the unit ball, or random covariance points. Then, we apply a random function to these points. The random covariance points are sampled as follows: For a given dimension $d$, sample $x \in \bbR^d$ from a standard normal or uniform on $[-1, 1]^d$ distribution, then compute $A(w \odot x)$, where $\odot$ is the elementwise product, $w \in \bbR^d$ are random weights, and $A$ is a random Gaussian matrix.

\subsection{Postprocessing}

Following TabICL, columns with a single value are removed. Datasets are discarded if all columns were removed or less than 2 classes are present or the train and test splits cannot be fixed to contain the same classes. We ordinal-encode categoricals. For all columns $x_i$ and in the regression case also $y$, we remove outliers, then standard-scale. We permute the column order and the class indices (but not the categorical indices).

\subsection{Filtering} \label{sec:appendix:filtering}

The ExtraTrees-based filtering works as follows: We convert classification problems to regression using one-hot encoding, to unify the two cases. To obtain very fast filtering, we then fit an ExtraTreesRegressor \citep{geurts2006extremely} from scikit-learn \citep{pedregosa2011scikit} with \texttt{n\_estimators=25}, \texttt{bootstrap=True}, and \texttt{max\_depth=6} on the full dataset. We then test whether the out-of-bag predictions can achieve a lower MSE than the mean label by checking if this is the case on at least 95\% out of 200 bootstrap subsamples. If it is not the case, the dataset is rejected.

\section{Plots for the Prior} \label{sec:appendix:prior_plots}

\Cref{fig:prior_datasets} shows random datasets from the prior. Random function types are shown in Figures \ref{fig:random_nn_function}, \ref{fig:random_tree_function}, \ref{fig:random_discretization_function}, \ref{fig:random_GP_function}, \ref{fig:random_linear_function}, \ref{fig:random_quadratic_function}, \ref{fig:random_em_function}, and \ref{fig:random_product_function}. Random graphs are shown in \Cref{fig:random_graph}. \Cref{fig:random_points} shows random points. Random matrices are visualized in Figures \ref{fig:random_gaussian_matrix}, \ref{fig:random_weights_matrix}, \ref{fig:random_eigendecay_matrix}, \ref{fig:random_kernel_matrix}, \ref{fig:random_activation_matrix}.

\begin{figure*}[h]
    \centering
    \includegraphics[width=\linewidth]{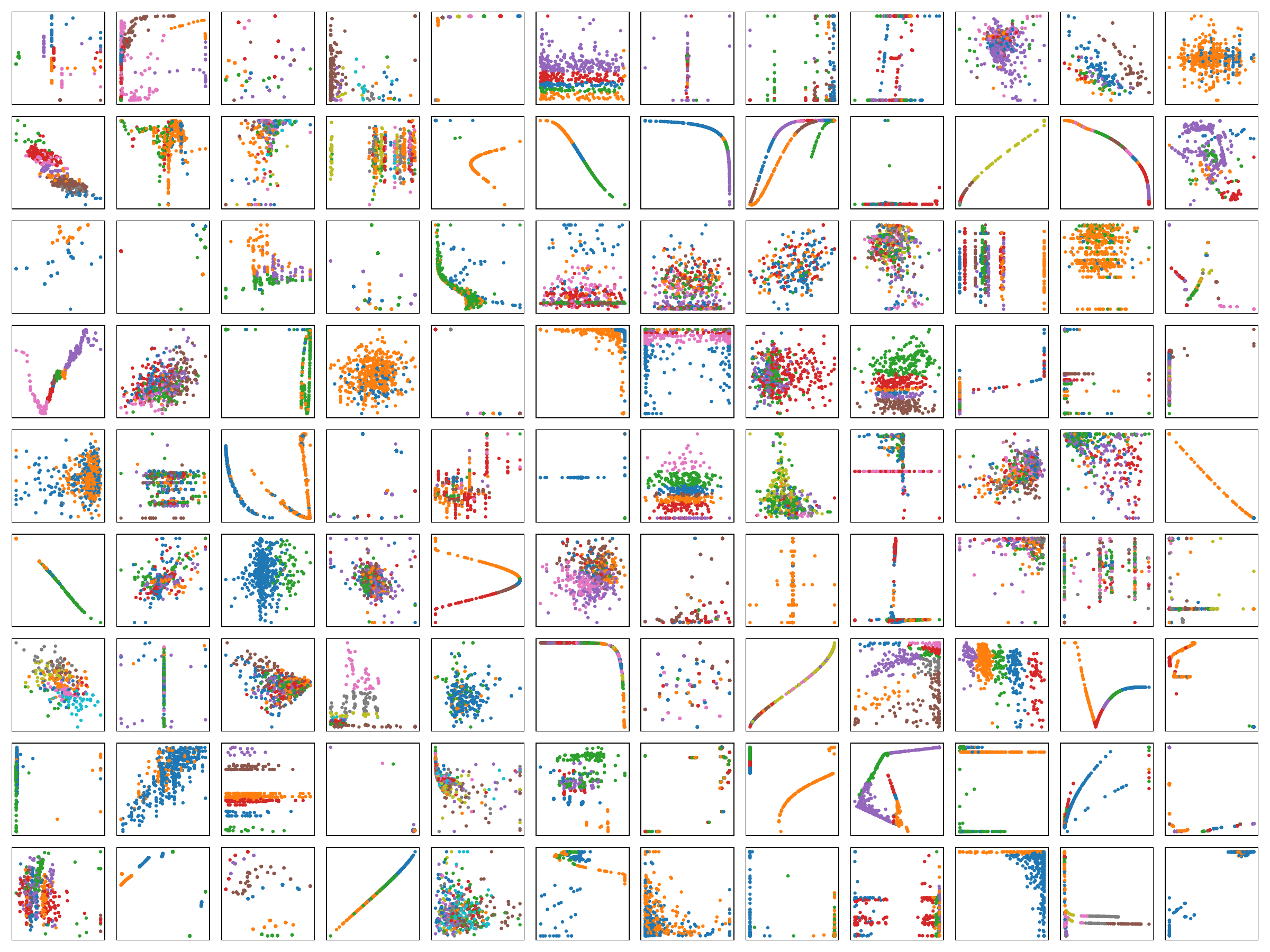}
    \caption{\textbf{Random classification datasets from the prior.} Datasets contain 500 samples and two columns in $x$. The color shows the class label. Only datasets containing at least 10 unique values on each axis are shown (for visualization purposes, since otherwise the points can overlap a lot).}
    \label{fig:prior_datasets}
\end{figure*}

\begin{figure}
    \centering
    \includegraphics[width=\linewidth]{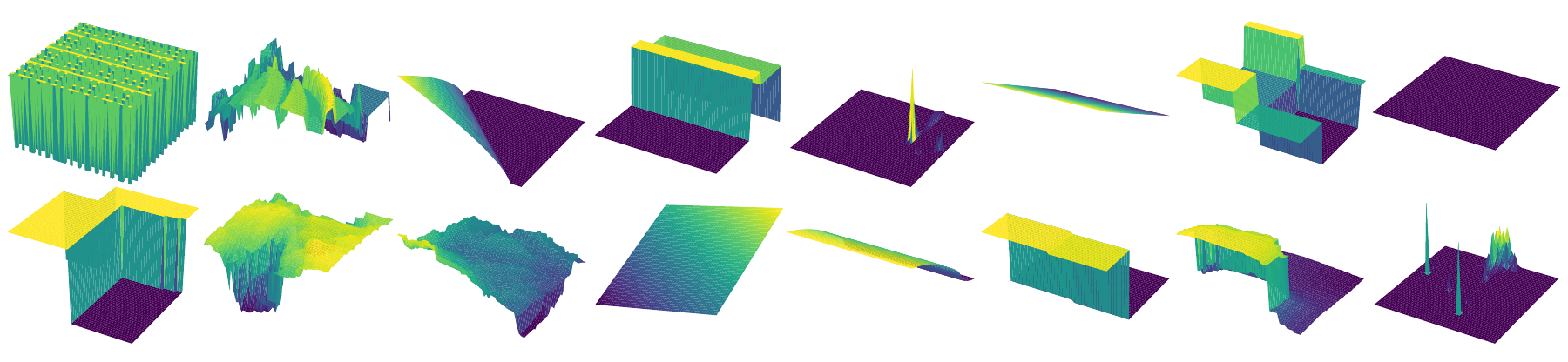}
    \caption{\textbf{Samples of RandomNNFunction.} We use inputs from $[-3, 3]^2$ and one-dimensional output.}
    \label{fig:random_nn_function}
\end{figure}

\begin{figure}
    \centering
    \includegraphics[width=\linewidth]{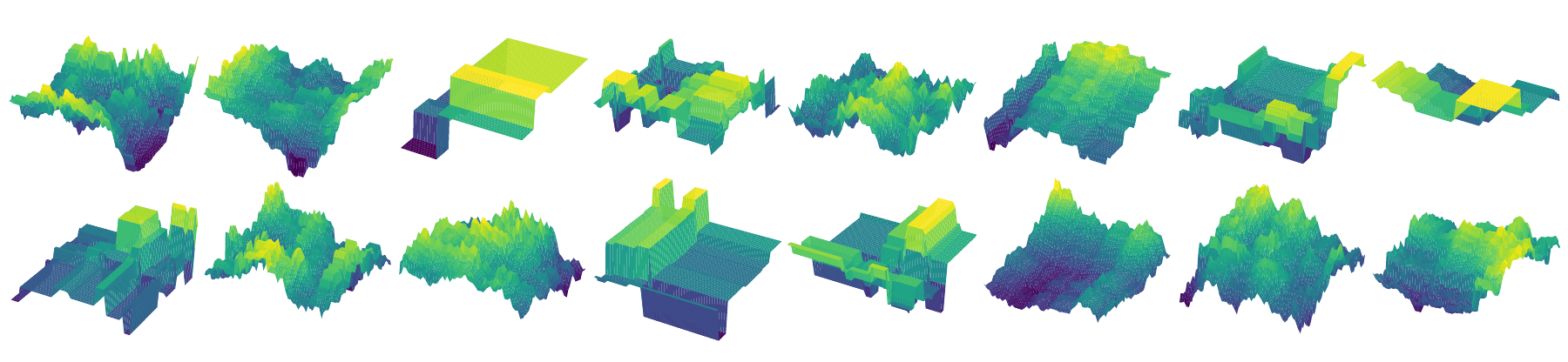}
    \caption{\textbf{Samples of RandomTreeFunction.} We use inputs from $[-3, 3]^2$ and one-dimensional output.}
    \label{fig:random_tree_function}
\end{figure}

\begin{figure}
    \centering
    \includegraphics[width=\linewidth]{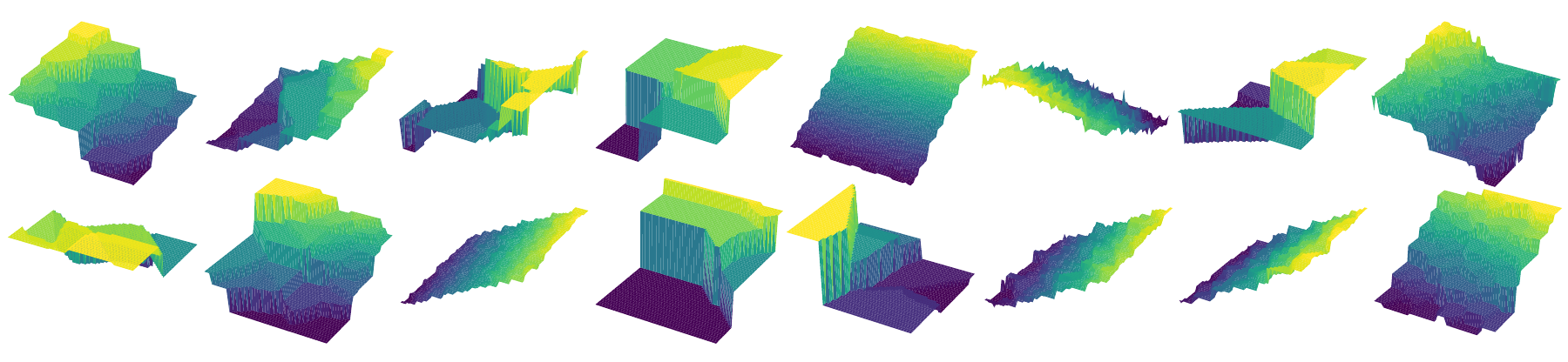}
    \caption{\textbf{Samples of RandomDiscretizationFunction.} We use inputs from $[-3, 3]^2$ and one-dimensional output.}
    \label{fig:random_discretization_function}
\end{figure}

\begin{figure}
    \centering
    \includegraphics[width=\linewidth]{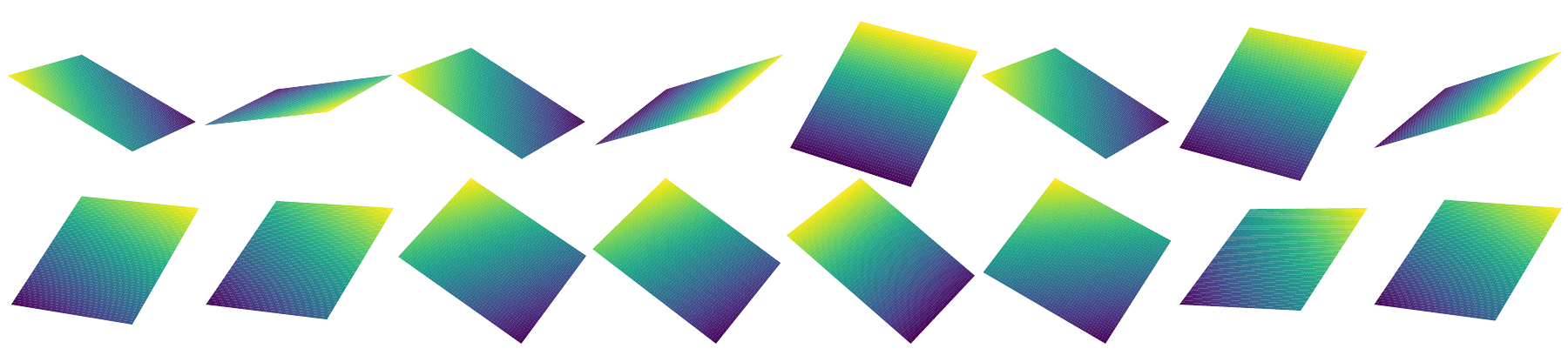}
    \caption{\textbf{Samples of RandomLinearFunction.} We use inputs from $[-3, 3]^2$ and one-dimensional output.}
    \label{fig:random_linear_function}
\end{figure}

\begin{figure}
    \centering
    \includegraphics[width=\linewidth]{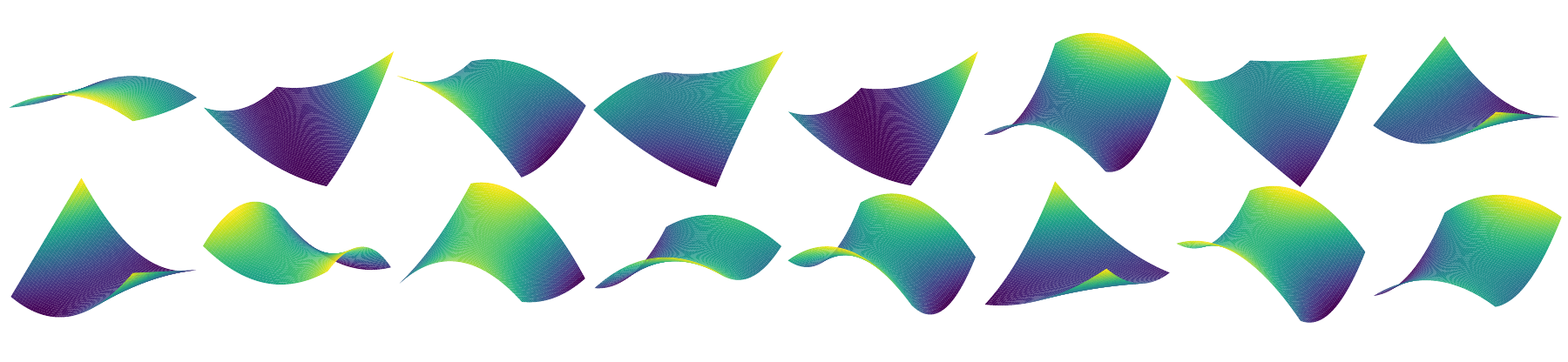}
    \caption{\textbf{Samples of RandomQuadraticFunction.} We use inputs from $[-3, 3]^2$ and one-dimensional output.}
    \label{fig:random_quadratic_function}
\end{figure}

\begin{figure}
    \centering
    \includegraphics[width=\linewidth]{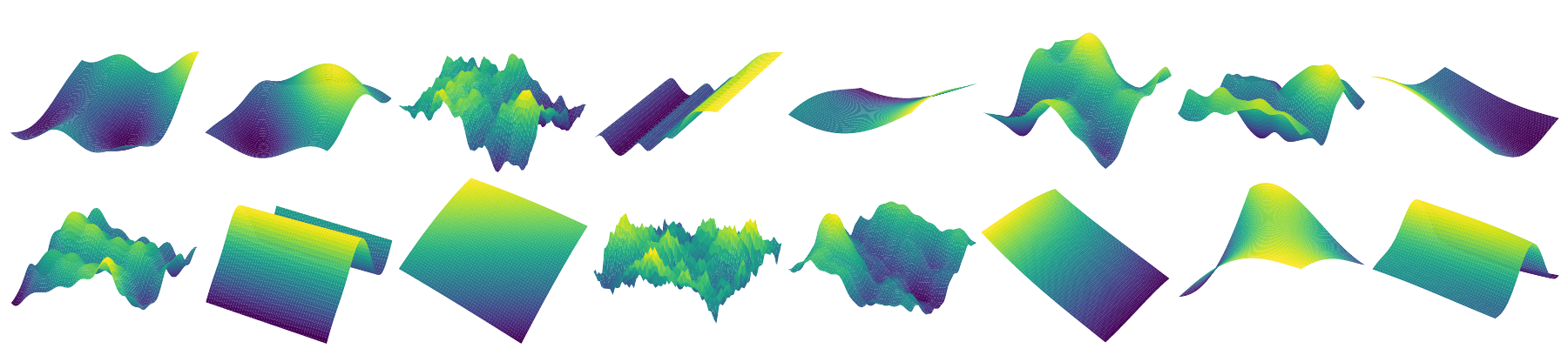}
    \caption{\textbf{Samples of RandomGPFunction.} We use inputs from $[-3, 3]^2$ and one-dimensional output.}
    \label{fig:random_GP_function}
\end{figure}

\begin{figure}
    \centering
    \includegraphics[width=\linewidth]{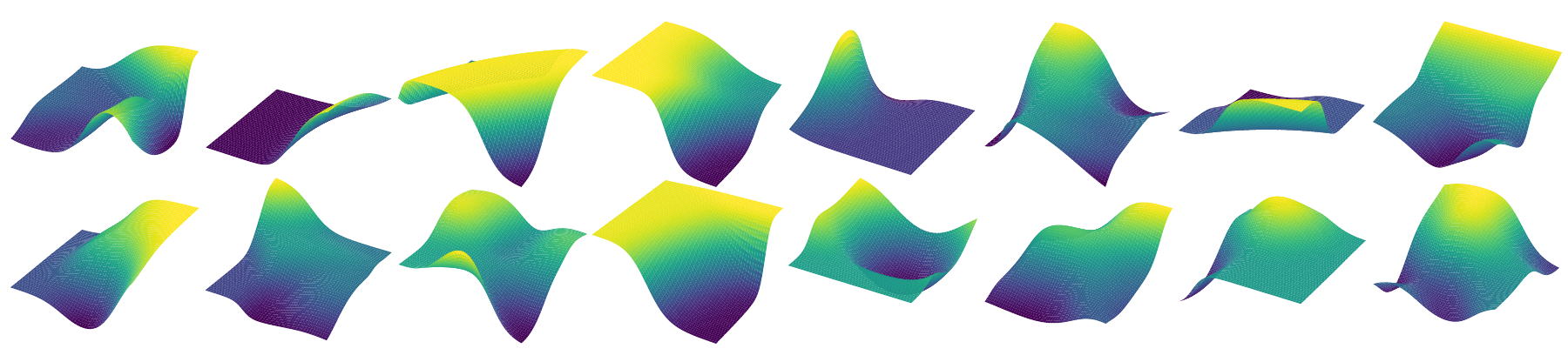}
    \caption{\textbf{Samples of RandomEMAssignmentFunction.} We use inputs from $[-3, 3]^2$ and one-dimensional output.}
    \label{fig:random_em_function}
\end{figure}

\begin{figure}
    \centering
    \includegraphics[width=\linewidth]{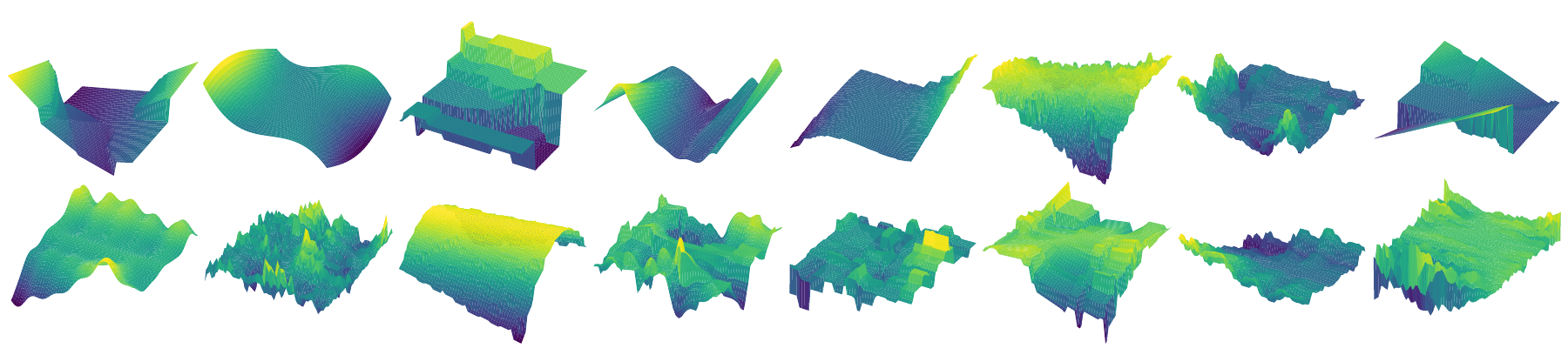}
    \caption{\textbf{Samples of RandomProductFunction.} We use inputs from $[-3, 3]^2$ and one-dimensional output.}
    \label{fig:random_product_function}
\end{figure}

\begin{figure}
    \centering
    \includegraphics[width=\linewidth]{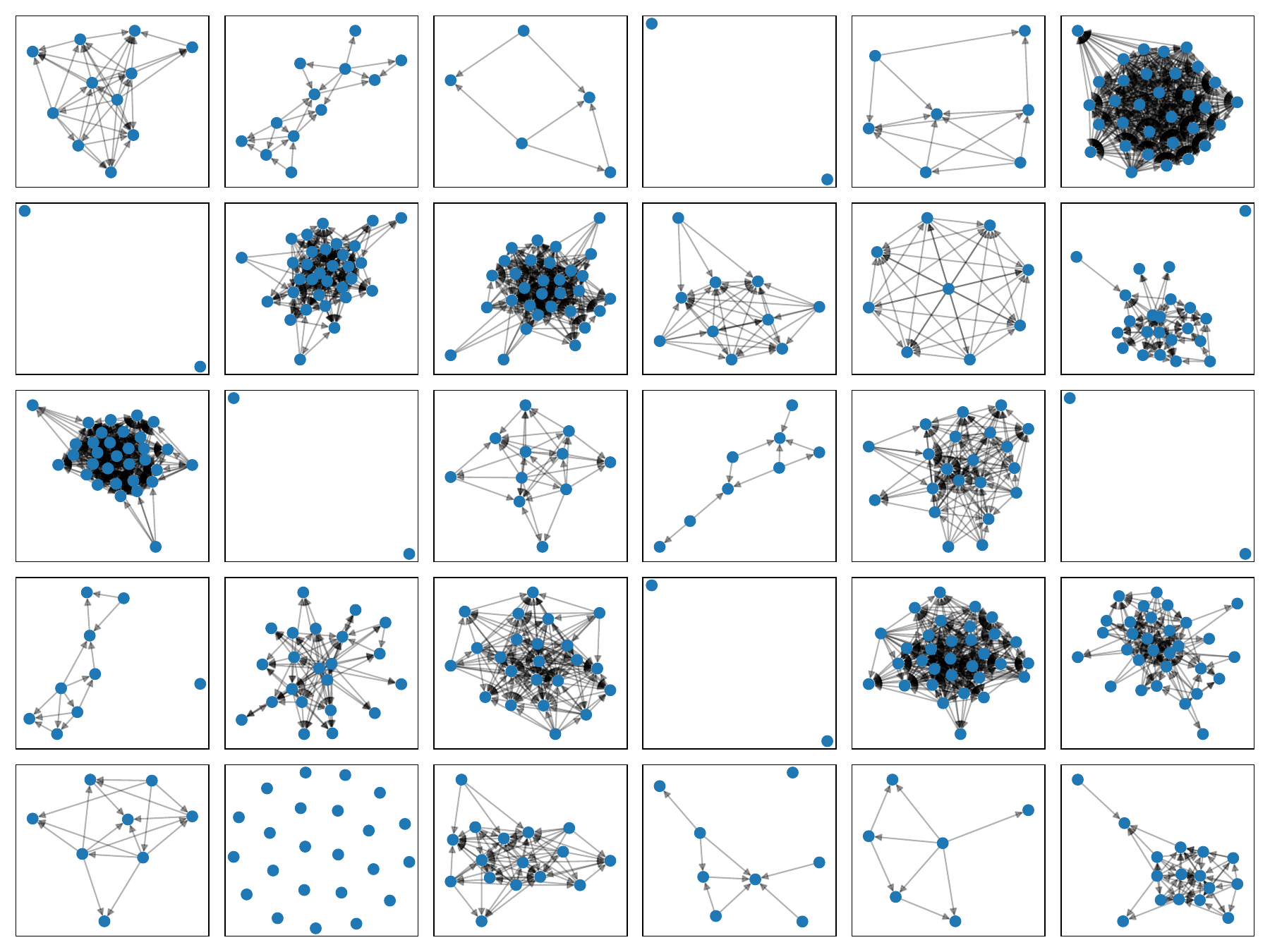}
    \caption{\textbf{Randomly sampled graphs.} Graphs are not filtered (this would require knowing the assignment of columns to nodes).}
    \label{fig:random_graph}
\end{figure}

\begin{figure}
    \centering
    \includegraphics[width=\linewidth]{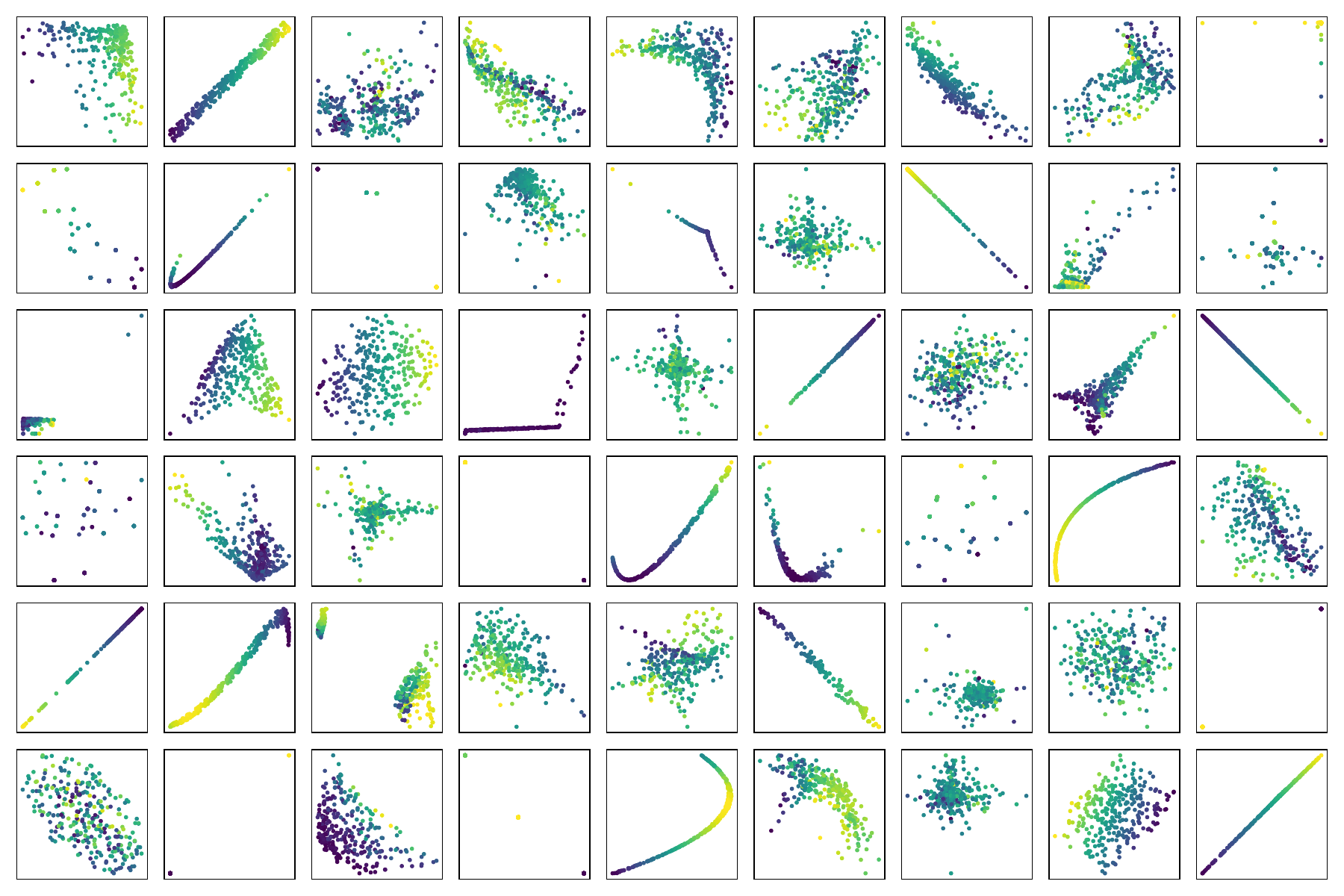}
    \caption{\textbf{Samples of RandomPoints from the prior.} We sample 300 three-dimensional points and encode the third dimension through the color.}
    \label{fig:random_points}
\end{figure}

\begin{figure}
    \centering
    \includegraphics[width=\linewidth]{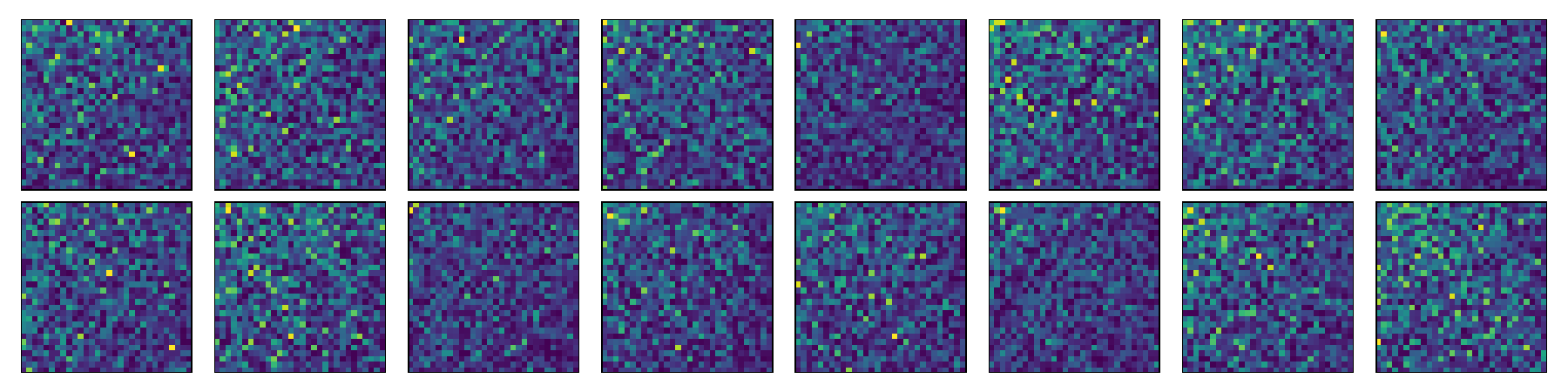}
    \caption{\textbf{Samples of RandomGaussianMatrix.} We sample $30\times 30$ matrices and show the absolute values of their entries. We permute their indices by sorting the absolute values of the top left- and right-singular vectors of their absolute values, respectively.}
    \label{fig:random_gaussian_matrix}
\end{figure}

\begin{figure}
    \centering
    \includegraphics[width=\linewidth]{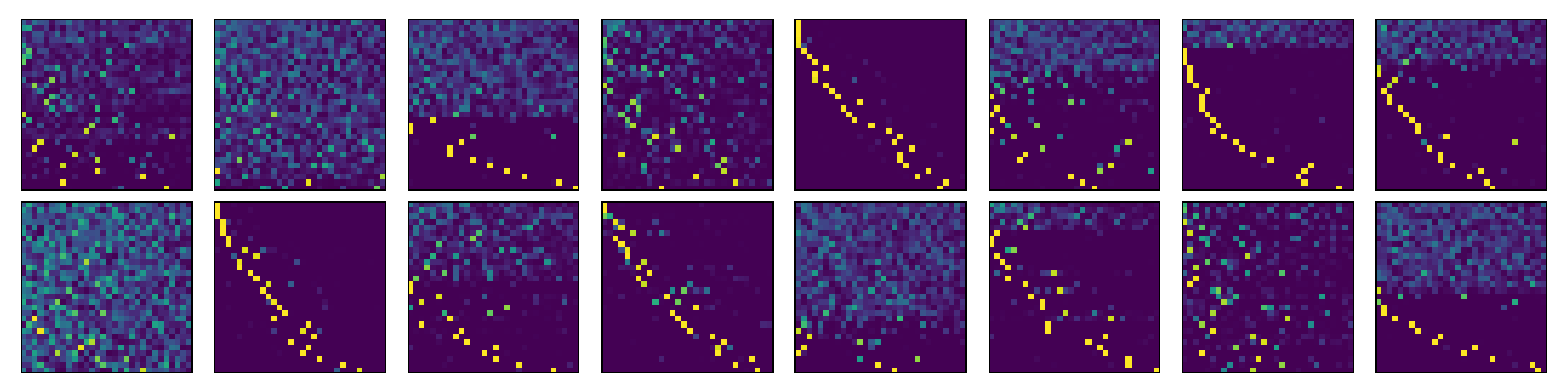}
    \caption{\textbf{Samples of RandomWeightsMatrix.} We sample $30\times 30$ matrices and show the absolute values of their entries. We permute their indices by sorting the absolute values of the top left- and right-singular vectors of their absolute values, respectively.}
    \label{fig:random_weights_matrix}
\end{figure}

\begin{figure}
    \centering
    \includegraphics[width=\linewidth]{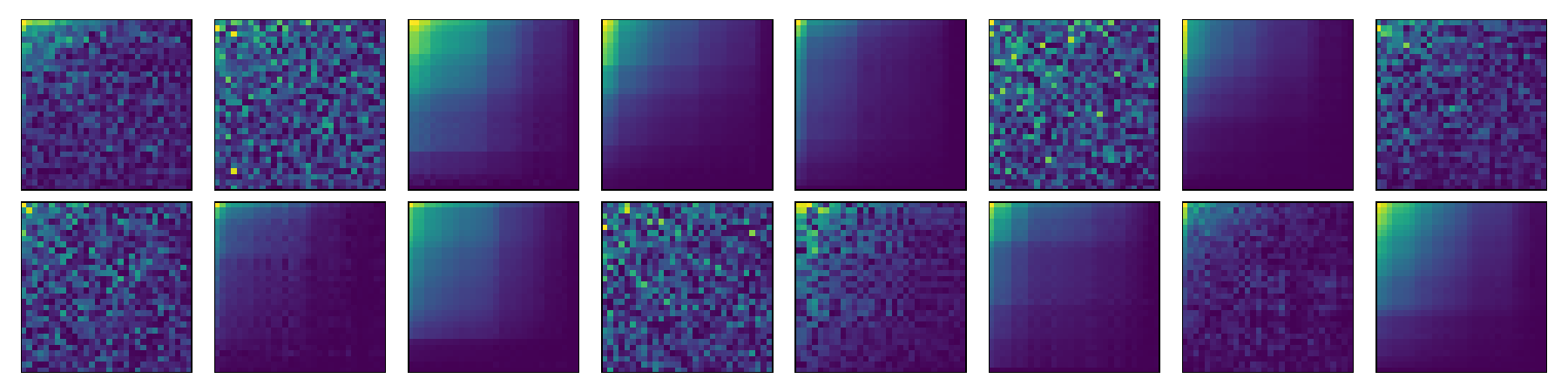}
    \caption{\textbf{Samples of RandomSingularValuesMatrix.} We sample $30\times 30$ matrices and show the absolute values of their entries. We permute their indices by sorting the absolute values of the top left- and right-singular vectors of their absolute values, respectively.}
    \label{fig:random_eigendecay_matrix}
\end{figure}

\begin{figure}
    \centering
    \includegraphics[width=\linewidth]{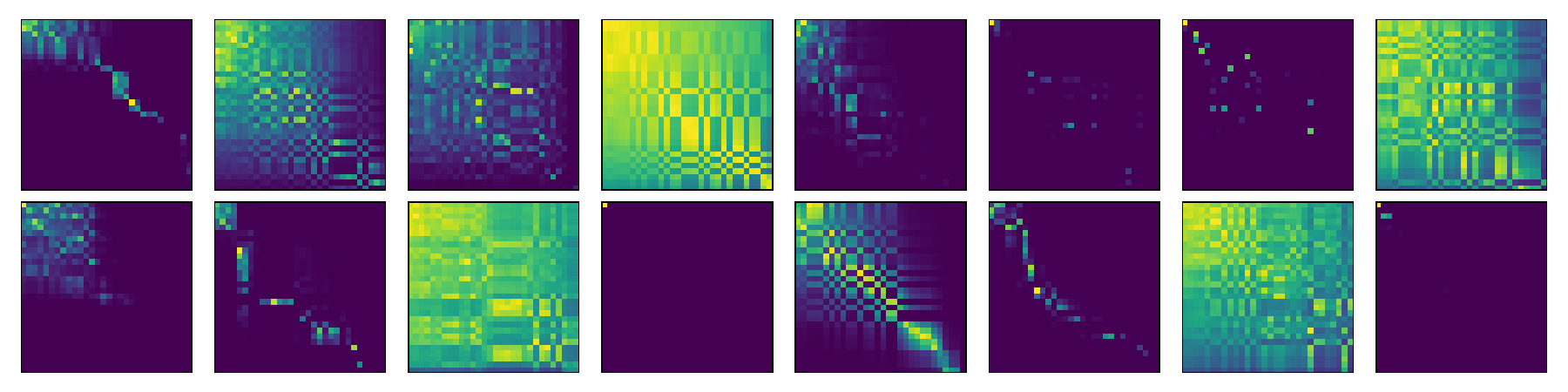}
    \caption{\textbf{Samples of RandomKernelMatrix.} We sample $30\times 30$ matrices and show the absolute values of their entries. We permute their indices by sorting the absolute values of the top left- and right-singular vectors of their absolute values, respectively.}
    \label{fig:random_kernel_matrix}
\end{figure}

\begin{figure}
    \centering
    \includegraphics[width=\linewidth]{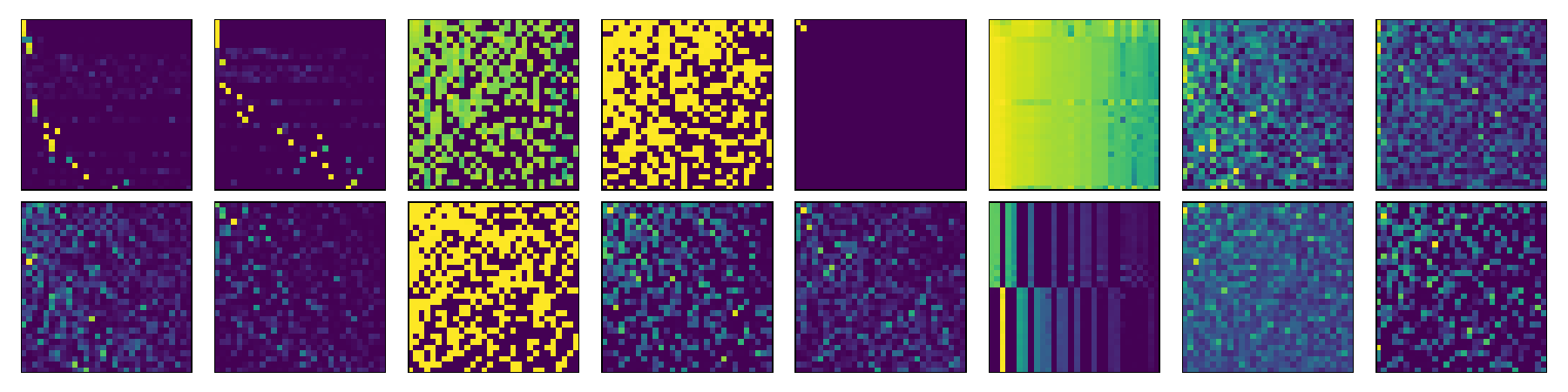}
    \caption{\textbf{Samples of RandomActivationMatrix.} We sample $30\times 30$ matrices and show the absolute values of their entries. We permute their indices by sorting the absolute values of the top left- and right-singular vectors of their absolute values, respectively.}
    \label{fig:random_activation_matrix}
\end{figure}

\FloatBarrier

\section{Path Smoothness for Gaussian Processes} \label{sec:appendix:gp_smoothness}

In the following, we prove a result for the smoothness of functions sampled from Gaussian processes. We quantify the smoothness in terms of which Sobolev spaces $H^s(B)$ the functions $f$ belong to, for some domain $B \subseteq \bbR^d$ and smoothness $s \geq 0$. There are different ways to define Sobolev spaces, and we refer the curious reader to the literature, but essentially functions in $H^s(B)$ have an $s$-th derivative that is square-integrable. For non-integer $s$, the Sobolev-Slobodeckij norm treats the fractional part $s - \lfloor s \rfloor$ using a Hölder-like criterion on the $\lfloor s \rfloor$-th derivative.

\textbf{Notation.} We write $\GP(0, k)$ for the distribution of Gaussian processes with mean zero and covariance kernel $k$. We assume that such Gaussian processes are defined on a complete probability space. We say that two stochastic processes $(X_t)_{t \in T}, (Y_t)_{t \in T}$ on the same probability space $(\Omega, \calF, P)$ are a modification (or version) of each other if $P(X_t = Y_t) = 1$ for all $t$. Moreover, for integrable $g: \bbR^d \to \bbR_{\geq 0}$, we denote its inverse Fourier transform as $\check g(x) = \int e^{i\langle \omega, x\rangle} g(\omega) ~d\omega$.

In the following theorem, it might be possible to relax the criterion $q > 2d$ to $q > d$ by using other results from the literature. However, in this case the result would no longer guarantee that the paths are continuous.

\begin{theorem}[Smoothness of GP sample paths] \label{thm:gp_smoothness}
    Let $g: \bbR^d \to \bbR_{\geq 0}$ be integrable and even such that there exist constants $c, C, r_0 > 0$ and $q > 2d$ with
    \begin{equation}
        c\|\omega\|^{-q} \leq g(\omega) \leq C\|\omega\|^{-q}\quad \text{for all $\omega \in \bbR^d$ with $\|\omega\| \geq r_0$.} \label{eq:spectral_condition}
    \end{equation}
    Then, $k: \bbR^d \times \bbR^d \to \bbR, k(x, x') := \check g(x - x')$ is a kernel.

    Let $X \sim \GP(0, k)$, let $B \subseteq \bbR^d$ be an arbitrary ball, and set $s_* := (q-d)/2$. 
    Then,
    \begin{itemize}
        \item For every $s < s_*$, there exists a modification $Y$ of $X$ such that $P(Y|_B \in H^s(B)) = 1$.
        \item For every $s \geq s_*$ and every modification $Y$ of $X$, $P(Y|_B \in H^s(B)) = 0$.
    \end{itemize}
\end{theorem}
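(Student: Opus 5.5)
First I would show that $k$ is a kernel. This is Bochner's theorem in the easy direction. For points $x_1,\dots,x_N$ and coefficients $a\in\bbR^N$,
\[
\sum_{j,l} a_j a_l \check g(x_j-x_l)=\int \Bigl|\sum_j a_j e^{i\langle \omega,x_j\rangle}\Bigr|^2 g(\omega)\,d\omega\geq 0 .
\]
Because $g$ is even, $\check g$ is real-valued and symmetric. This is presumably the content of \Cref{lemma:sobolev_kernels} cited earlier, so I would invoke it.

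Second, I would identify the reproducing kernel Hilbert space $H_k$ with a Sobolev space. The condition \eqref{eq:spectral_condition}, together with integrability and boundedness of $g$ away from infinity, means that $g(\omega)$ behaves like $(1+\|\omega\|^2)^{-q/2}$ at infinity. The upper bound is immediate. For the lower bound, the region $\|\omega\|\le r_0$ is the issue: $g$ could vanish there, so the equivalence is only up to low frequencies.

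The standard route is to compare $k$ with the Matérn/Sobolev kernel $k_{q/2}$, whose spectral density is exactly $(1+\|\omega\|^2)^{-q/2}$ and whose RKHS is $H^{q/2}(\bbR^d)$. On the ball $B$, restricting to $B$ should give $H_k|_B$ continuously embedded in $H^{q/2}(B)$. The reverse direction holds modulo the low-frequency part, and I would handle that part by noting that band-limited functions are analytic and hence in every $H^s(B)$. This suffices for both directions of the path statement.

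Third, I would apply Steinwart's theorem on sample path regularity (Steinwart, ``Convergence types and rates in generic Karhunen–Loève expansions\ldots'', Theorem 4.2 / Corollary). It characterizes $P(X|_B\in H)=1$ via a nuclear (Hilbert–Schmidt) embedding $H_k|_B\hookrightarrow H$. The same result gives a zero–one law, so otherwise the probability is $0$ for every modification.

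It therefore remains to decide when $H^{q/2}(B)\hookrightarrow H^s(B)$ is Hilbert–Schmidt on a bounded Lipschitz domain. The entropy and approximation numbers of this embedding decay like $i^{-(q/2-s)/d}$. The embedding is Hilbert–Schmidt iff $\sum_i i^{-(q-2s)/d}<\infty$, that is, iff $(q-2s)/d>1$, which is $s<(q-d)/2=s_*$.

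The requirement $q>2d$ gives $s_*>d/2$. This makes continuous modifications available via the embedding $H^s\subset C(B)$ for $s>d/2$, and it is what Steinwart's continuity assumptions need.

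The main obstacle is the second step: a rigorous norm equivalence of $H_k|_B$ with $H^{q/2}(B)$ when $g$ is only controlled for large $\|\omega\|$. My first attempt would be to write $g=g_1+g_2$, with $g_1$ comparable to $(1+\|\omega\|^2)^{-q/2}$ everywhere and $g_2$ compactly supported. The catch is that $g_2$ may be negative. In that case I would instead use
\[
k\le C'k_{q/2}
\qquad\text{and}\qquad
c'k_{q/2}\le k+k_{\mathrm{bl}},
\]
where $k_{\mathrm{bl}}$ is a band-limited kernel. This gives the two required inclusions. The zero-probability direction then follows because $X+X_{\mathrm{bl}}$, with an independent band-limited GP $X_{\mathrm{bl}}$ whose paths are smooth, has the Sobolev-kernel law up to RKHS equivalence. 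Lack of membership in $H^s(B)$ is then inherited by $X$.
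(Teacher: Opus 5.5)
Your plan has the same architecture as the paper's proof. You neutralize the low frequencies of $g$ with an independent band-limited (hence smooth) Gaussian process, and you compare spectral densities with $(1+\|\omega\|^2)^{-q/2}$ to identify the RKHS with $H^{q/2}$. You then apply the Luki\'c--Beder/Steinwart Hilbert--Schmidt criterion and locate the threshold $s<(q-d)/2$. You obtain that threshold from approximation numbers $a_i\asymp i^{-(q/2-s)/d}$, where the paper cites Steinwart's Lemma~6.9; that is fine. Your inequality $c'k_{q/2}\le k+k_{\mathrm{bl}}$ (with $k_{\mathrm{bl}}$ the kernel of $(1+\|\omega\|^2)^{-q/2}\bbone_{B_{r_0}}$ and $c'=\min\{c,1\}$) needs no extra assumption, and it alone settles the zero-probability bullet. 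Suppose some modification of $X$ lay in $H^s(B)$ with positive probability. Adding a smooth modification of an independent $X_{\mathrm{bl}}$ and using the zero-one law would give $\calH_{k+k_{\mathrm{bl}}}|_B\ll H^s(B)$. Aronszajn's inclusion theorem would then give $H^{q/2}(B)\ll H^s(B)$, which is impossible for $s\ge s_*$.

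The step that does not go through is the upper bound $k\le C'k_{q/2}$. You call it immediate by assuming $g$ is bounded on compact sets, but that is not a hypothesis: on $B_{r_0}$, $g$ is only integrable. For instance, $g(\omega)=\|\omega\|^{-d/2}$ near the origin is admissible. Then $\calH_k$ contains functions whose Fourier transform is not square-integrable, so $\calH_k\not\subseteq H^{q/2}(\bbR^d)$ and $k\le C'k_{q/2}$ fails. The claimed RKHS equivalence of $k+k_{\mathrm{bl}}$ with the Sobolev kernel fails for the same reason.

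The positive-probability bullet therefore needs the band-limited device on this side as well. Split $g=g_{\mathrm{hi}}+g_{\mathrm{lo}}$ with $g_{\mathrm{hi}}=g\bbone_{\bbR^d\setminus B_{r_0}}\le C'(1+\|\omega\|^2)^{-q/2}$ everywhere. The piece $g_{\mathrm{lo}}=g\bbone_{B_{r_0}}$ is integrable with compact support, so its kernel is $C^\infty$ by differentiating under the integral. Then realize $X$ in law as $X_{\mathrm{hi}}+X_{\mathrm{lo}}$ with independent summands. This is exactly the paper's Step~1.

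The same step also resolves your worry that the remainder $g_2$ may be negative. Rather than writing $g$ as a repaired density plus a signed correction, the paper uses $g+g_0=g_*+g_{\mathrm{lo}}$ with $g_*=g_{\mathrm{hi}}+g_0$, where all four pieces are nonnegative. It then transfers both bullets between $X$ and $X_*$ through the modification $\tilde X-X_{\mathrm{lo}}+X_0$.

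Finally, the Steinwart criterion requires $H^s(B)$ to be an RKHS. So treat $s\le d/2$ in the first bullet by passing to some $s'\in(d/2,s_*)$ and using $H^{s'}(B)\subseteq H^s(B)$.
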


\newcommand{\ghi}{g_{\mathrm{hi}}}
\newcommand{\glo}{g_{\mathrm{lo}}}
\newcommand{\gz}{g_0}
\newcommand{\hglo}{\check g_{\mathrm{lo}}}
\newcommand{\hgz}{\check g_0}

\newcommand{\Xhi}{X_{\mathrm{hi}}}
\newcommand{\Xlo}{X_{\mathrm{lo}}}
\newcommand{\Xz}{X_0}
\newcommand{\Xst}{X_*}
\newcommand{\khi}{k_{\mathrm{hi}}}
\newcommand{\klo}{k_{\mathrm{lo}}}
\newcommand{\kz}{k_0}
\newcommand{\kst}{k_*}

\begin{proof}
    \textbf{Step 1: ``Repair'' the spectral density in the center.} 
    It follows from \Cref{lemma:sobolev_kernels} that $k$ is a kernel. 
    To apply the remainder of \Cref{lemma:sobolev_kernels}, we need a spectral density $g_*$ satisfying
    \begin{equation}
        c_*(1+\|\omega\|^2)^{-q/2} \leq g_*(\omega) \leq C_*(1 + \|\omega\|^2)^{-q/2} \label{eq:sobolev_spectral_condition}
    \end{equation}
    for suitable constants $c_*, C^* > 0$ and all $\omega \in \bbR^d$, not just those with large enough radius. Our function $g$ may not satisfy this since Eq.~\eqref{eq:spectral_condition} only needs to hold for large enough $\|\omega\|$.
    Using the ball $B_{r_0} := \{\omega \in \bbR^d \mid \|\omega\| \leq r_0\}$, define
    \begin{align*}
        \ghi & := g \cdot \bbone_{\bbR^d \setminus B_{r_0}} \\
        \glo & := g \cdot \bbone_{B_{r_0}} \\
        \gz(\omega)  & := (1 + \|\omega\|^2)^{-q/2} \bbone_{B_{r_0}}(\omega)
    \end{align*}
    Then, $g = \ghi + \glo$, and we can set $g_* = \ghi + \gz$. Since all of these functions are non-negative, integrable, and even, we can also consider their associated kernels. If $\Xhi \sim \GP(0, \khi), \Xlo \sim \GP(0, \klo), \Xz \sim \GP(0, \kz)$ are independent, then $X := \Xhi + \Xlo \sim \GP(0, k)$ and $X_* := \Xhi + \Xz \sim \GP(0, k_*)$.
    
    \textbf{Step 2: Equivalence to repaired version.} 
    The inverse Fourier transforms $\hglo, \hgz$ are infinitely differentiable: for example, since $\glo$ is supported on a ball of radius $r_0$, we have
    \begin{align*}
        \hglo'(x) & = \frac{\partial}{\partial x} \int e^{i\langle x, \omega\rangle} \glo(\omega) ~d\omega = \int \omega e^{i\langle x, \omega\rangle} \glo(\omega) ~d\omega~,        
    \end{align*}
    where the integration and differentiation can be exchanged because
    \begin{equation*}
        \int \|\omega e^{i\langle x, \omega\rangle} \glo(\omega)\| ~d\omega \leq r_0 \int_{\|\omega\| \leq r_0} \left| \glo(\omega)\right| ~d\omega < \infty~.
    \end{equation*}
    Hence, the associated kernels $\klo, \kz$ are infinitely differentiable. For a given smoothness $s \in \bbR$, using Theorem 4.1 of \cite{da2023sample} and choosing a suitable modification of $\Xlo, \Xz$, we almost surely have $\Xlo|B, \Xz|B \in H^s(B)$. Hence, if there exists a modification $\tilde X$ of $X$ with $P(\tilde X|_B \in H^s(B)) = p$ for some $p \in [0, 1]$, then $\tilde X_* := \tilde X - \Xlo + \Xz$ is a modification of $X_*$ with $P(\tilde X_*|_B \in H^s(B)) = p$. The same holds with switched roles of $X$ and $X_*$. Hence, it suffices to prove the desired smoothness results for $X_*$ instead of $X$.
    
    \textbf{Step 3: Smoothness of sample paths for $X_*$.} By \Cref{lemma:sobolev_kernels} and Eq.~\eqref{eq:sobolev_spectral_condition}, the RKHS $\calH_*$ of $k_*$ is equivalent to the Sobolev space $H^{q/2}(\bbR^d)$, and therefore $\calH_*|_B$ is equivalent to $H^{q/2}(\bbR^d)|_B$, which is equivalent to $H^{q/2}(B)$. Hence, the inclusion operator $I: \calH_*|_B \to H^{q/2}(B)$ is bounded and so is its inverse. For Hilbert spaces $A \subseteq B$, \cite{steinwart} defines in Definition 1.1 that $A \ll B$ if the associated inclusion operator from $A \to B$ is Hilbert-Schmidt. From Lemma 6.9 of \cite{steinwart}, if $s > d/2$, then $H^{q/2}(B) \ll H^s(B)$ if and only if $s < (q-d)/2 = s_*$. By Theorem 15 of \cite{bell2016trace}, since $I, I^{-1}$ are bounded, we obtain $\calH_*|_B \ll H^s(B)$ if and only if $s < (q-d)/2$.     
    By Theorem 1.2 in \cite{steinwart} (which is a reformulation of a result of \citealt{lukic2001stochastic}), the smoothness criterion for $X_*$ is equivalent to $\calH_*|_B \ll H^s(B)$, which completes the proof for the cases $s > d/2$. But since the spaces for $s \leq d/2$ are supersets of those for $s > d/2$, the statement extends to them as well.
\end{proof}

\begin{lemma}[Fourier characterization of Sobolev kernels] \label{lemma:sobolev_kernels}
    Let $g: \bbR^d \to \bbR_{\geq 0}$ be even and integrable. Then, $k: \bbR^d \times \bbR^d \to \bbR$, $k(x, y) = \check g(x - y) \in \bbR$ is a kernel. Moreover, if there exist constants $C > 0$ and $s > d/2$ with
    \begin{equation}
        C^{-1}(1 + \|\omega\|^2)^{-s} \leq g(\omega) \leq C(1 + \|\omega\|^2)^{-s} \label{eq:sobolev_condition_lemma}
    \end{equation}
    for all $\omega \in \bbR^d$, then the reproducing kernel Hilbert space (RKHS) $\calH_k$ associated with $k$ is equivalent to the Sobolev space $H^s(\bbR^d)$, meaning that they are equal as sets and their norms are equivalent.
\end{lemma}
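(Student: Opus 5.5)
The plan has two parts: positive definiteness via Bochner's easy direction, then identification of the RKHS through its Fourier-domain description.

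\textbf{Kernel property.} Since $g$ is even and real, $\check g(x) = \int \cos\langle x,\omega\rangle g(\omega)\,d\omega$ is real-valued. For points $x_1,\dots,x_N$ and coefficients $a \in \bbR^N$ we compute
\begin{equation*}
\sum_{i,j} a_i a_j \check g(x_i - x_j) = \int \Big|\sum_i a_i e^{i\langle x_i,\omega\rangle}\Big|^2 g(\omega)\,d\omega \geq 0 .
\end{equation*}
Symmetry follows from evenness of $\check g$. Hence $k$ is a symmetric positive semidefinite function, i.e.\ a kernel.

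\textbf{RKHS equals Sobolev space.} Here I would use the standard description of the RKHS of a translation-invariant kernel with integrable spectral density. Let $\calH_g$ be the space of $f \in L^2(\bbR^d)$ (continuous) whose Fourier transform satisfies $\int |\hat f(\omega)|^2 / g(\omega)\,d\omega < \infty$, with the norm given by that integral up to the $(2\pi)^d$ normalization constant. I would then verify two things:
\begin{itemize}
\item $k(\cdot,x) \in \calH_g$, since its Fourier transform is $e^{-i\langle x,\omega\rangle} g(\omega)$ up to constants, so the norm integral is $\int g < \infty$.
\item The reproducing property $\langle f, k(\cdot,x)\rangle = \int \hat f(\omega) e^{i\langle x,\omega\rangle}\,d\omega = f(x)$ holds by Fourier inversion.
\end{itemize}
Fourier inversion is valid because $\hat f \in L^1$: by Cauchy--Schwarz, $\int|\hat f| \leq (\int |\hat f|^2/g)^{1/2}(\int g)^{1/2}$. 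By uniqueness of the RKHS, $\calH_k = \calH_g$. Alternatively, I would cite Wendland's \emph{Scattered Data Approximation}, Thm.~10.12, which states exactly this.

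\textbf{Comparison with $H^s$.} Under \eqref{eq:sobolev_condition_lemma}, $1/g(\omega)$ lies between $C^{-1}(1+\|\omega\|^2)^s$ and $C(1+\|\omega\|^2)^s$. The $H^s(\bbR^d)$ norm is $\int |\hat f|^2 (1+\|\omega\|^2)^s\,d\omega$, so the two norms are equivalent with constants $C^{\pm1/2}$ and the two sets coincide.

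\textbf{Main obstacle.} The delicate point is the identification of $\calH_k$ with $\calH_g$: completeness of $\calH_g$ and pointwise evaluation via inversion. This is where $s > d/2$ matters, since it guarantees continuity and $\hat f \in L^1$. I would handle it by citing Wendland rather than reproving it.
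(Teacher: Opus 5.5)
Your argument is correct, but it takes a different route from the paper. The paper never writes down the Fourier-weighted norm $\int|\hat f|^2/g$. Instead it proceeds in three steps:
\begin{itemize}
\item It realizes $k$ through the feature map $\phi(x)(\omega)=e^{-i\langle x,\omega\rangle}\sqrt{g(\omega)}$ into $L^2(\bbR^d,\bbC)$. Positive definiteness is then automatic; your Bochner computation is exactly $\|\sum_i a_i\phi(x_i)\|^2$.
\item It describes the RKHS abstractly as $\{\langle h,\phi(\cdot)\rangle : h\in L^2\}$ with the infimum norm (Steinwart--Christmann, Thm.~4.21).
\item It transfers between $g$ and the reference density $g_0(\omega)=(1+\|\omega\|^2)^{-s}$ by multiplying $h$ by the bounded ratio $\sqrt{g_0/g}$ or its inverse. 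That the RKHS of $g_0$ is $H^s(\bbR^d)$ is cited from the literature.
\end{itemize}
Your route instead identifies the RKHS explicitly via Fourier inversion and compares it directly with the Bessel-potential definition of the $H^s$ norm. This needs no reference kernel, but you must check the RKHS axioms by hand. The paper's comparison step uses only $C^{-1}g_0\le g\le Cg_0$ and no Fourier analysis of RKHS elements.

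One caution about your fallback citation. Wendland's Thm.~10.12 (and the Sobolev corollary after it, which is essentially this lemma) assumes $\check g\in L^1(\bbR^d)$, and the hypotheses here do not guarantee that. For example, take $d=1$ and $g(\omega)=(1+\omega^2)^{-1}\bigl(1+\tfrac12\bbone_{[-1,1]}(\omega)\bigr)$. It satisfies the two-sided bound with $s=1$, yet $\check g(x)=\pi e^{-|x|}+\frac{\sin x}{2x}+O(x^{-2})$ is not integrable. The paper avoids this because it invokes the Sobolev identification only for $g_0$, whose Mat\'ern-type kernel is integrable.

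So carry out your direct argument rather than citing Wendland. Since $g$ is bounded and integrable, it lies in $L^2$. Hence $k(\cdot,x)\in L^2$, with Fourier transform proportional to $e^{-i\langle x,\omega\rangle}g$. Completeness holds because $f\mapsto\hat f/\sqrt g$ maps $\calH_g$ isometrically onto the closed subspace of $h\in L^2$ with $h(-\omega)=\overline{h(\omega)}$. For such $h$, the function $h\sqrt g$ lies in $L^1\cap L^2$ and is the Fourier transform of a continuous real-valued $L^2$ function.
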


\begin{proof}
    \textbf{Step 1: Constructing the RKHS via a feature space.} Define $g_0: \bbR^d \to \bbR_{\geq 0}, \omega \mapsto (1 + \|\omega\|^2)^{-s}$ and $g_1 := g$. Since $s > d/2$, $g_0$ is integrable, and $g_1$ is integrable by assumption. We will define feature maps into $H := L^2(\bbR^d, \bbC)$, the Hilbert space of complex-valued square-integrable functions on $\bbR^d$. For $i \in \{0, 1\}$, define
    \begin{equation*}
        \phi_i: \bbR^d \to L^2(\bbR^d, \bbC), \quad \phi_i(x)(\omega) := e^{-i \langle x, \omega\rangle} \sqrt{g_i(\omega)}~,
    \end{equation*}
    which is valid since $g \in L^1$ implies $\sqrt{g} \in L^2$. This feature map is associated with the kernel
    \begin{align}
        k_i(x, y) & = \langle \phi_i(x), \phi_i(y) \rangle_{H} = \int \overline{\phi_i(x)(\omega)} \phi_i(y)(\omega) ~d\omega = \int e^{i\langle x-y, \omega\rangle} g_i(\omega) ~d\omega = \check g_i(x - y)~, \label{eq:spectral_kernel}
    \end{align}
    which is real-valued since $g_i$ is even. Especially, $k = k_1$ is a real-valued kernel. Now, by Theorem 4.21 in \cite{steinwart2008support}, the associated RKHS is
    \begin{equation*}
        \calH_i := \calH_{g_i} = \{x \mapsto \langle h, \phi_i(x)\rangle_{H} \mid h \in H\}
    \end{equation*}
    with norm
    \begin{equation*}
        \|f\|_{\calH_i} = \inf \{\|h\|_H \mid h \in H, f = \langle h, \phi_i(\cdot)\rangle_H\}~.
    \end{equation*}

    \textbf{Step 2: Equivalence of the RKHSs.} Now, by Eq.~\eqref{eq:sobolev_condition_lemma}, we have $C^{-1} g_0 \leq g_1 \leq Cg_0$. If $f \in \calH_0$, then there exists $h \in H$ with $f = \langle h, \phi_0(\cdot)\rangle_H$, and therefore $f = \langle h\sqrt{g_0/g_1}, \phi_1(\cdot) \rangle_H \in \calH_1$, since $\|h\sqrt{g_0/g_1}\|_H \leq \sqrt{C} \|h\|_H$. Together with the norm characterization above it therefore follows that $\calH_0 \subseteq \calH_1$ with $\|f\|_{\calH_1} \leq \sqrt{C} \|f\|_{\calH_0}$. By switching the roles of $g_0$ and $g_1$, we therefore conclude that $\calH_0$ and $\calH_1$ are equivalent.

    \textbf{Step 3: $\calH_0$ is the desired Sobolev space.} Eq.~\eqref{eq:spectral_kernel} yields $k_0(x, y) = \int e^{i\langle x-y, \omega\rangle} (1 + \|\omega\|^2)^{-s} ~d\omega$, which is up to a constant factor exactly the kernel of (an equivalent version of) the Sobolev space $H^s(\bbR^d)$ \citep[see e.g.][section 7.1]{de2021reproducing}. By step 2, this means that the RKHS of $g$ is equivalent to $H^s(\bbR^d)$.
\end{proof}

\newpage

\section{Inference Optimization for \name} \label{sec:appendix:inference_optim}

\subsection{Efficient attention computation via selective query-key-value projections}
\label{sec:appendix:inference_optim:select_qkv}

We implement two attention optimizations that reduce redundant computation by selectively computing query, key, and value projections based on the specific attention patterns required in each stage of \name.

\paragraph{Row-wise inter-feature interaction.}
During the row-wise interaction, we prepend $c$ learnable [CLS] tokens (we use $c=4$ as TabICL) to the feature embeddings and use only their final outputs, which are concatenated into a single vector as the row representation for the subsequent in-context learning. Since only the [CLS] token outputs are required, we optimize the final block of $\TFrow$ by restricting the query computation to these $c$ positions while allowing them to attend to the full sequence ([CLS] tokens and all features) as keys and values. Concretely, given a sequence of length $c + m$, the final block computes attention outputs only for the first $c$ query positions:
\begin{equation*}
    \text{Attention}(\mathbf{Q}_{1:c}, \mathbf{K}_{1:c+m}, \mathbf{V}_{1:c+m})
\end{equation*}
where $\mathbf{Q}_{1:c} \in \mathbb{R}^{c \times d}$ represents queries from [CLS] tokens only, while $\mathbf{K}, \mathbf{V} \in \mathbb{R}^{(c+m) \times d}$ span the full sequence. This reduces the query projection cost from $\mathcal{O}((c+m) \cdot d^2)$ to $\mathcal{O}(c \cdot d^2)$ and the attention computation from $\mathcal{O}((c+m)^2 \cdot d)$ to $\mathcal{O}(c \cdot (c+m) \cdot d)$ in the final block.

\paragraph{Dataset-wise in-context learning.}
During the final in-context learning, test samples learn from training samples via cross-attention, where test queries attend only to training keys and values. Since test samples never serve as context for other samples, computing their key and value projections is unnecessary. We therefore compute key and value projections only for the $n_{\text{train}}$ training samples, while computing queries for all $n_{\text{train}} + n_{\text{test}}$ samples:
\begin{equation*}
    \text{Attention}(\mathbf{Q}_{1:n_{\text{train}}+n_{\text{test}}}, \mathbf{K}_{1:n_{\text{train}}}, \mathbf{V}_{1:n_{\text{train}}})
\end{equation*}
This reduces the key and value projection costs from $\mathcal{O}((n_{\text{train}} + n_{\text{test}}) \cdot d^2)$ to $\mathcal{O}(n_{\text{train}} \cdot d^2)$ each.

\paragraph{Layer normalization reuse.}
We adopted the pre-norm setting in the transformer block. To avoid redundant computation, we first apply layer normalization to the full input sequence, then perform slicing to extract the required subset for query, key, or value computation. This ensures that normalization statistics are computed once over the complete sequence, and the sliced representations remain properly normalized without requiring separate normalization passes for different subsets.

\subsection{Offloading technique}
\label{sec:appendix:inference_optim:disk_offloading}

\paragraph{Batch size estimation.} 
To dynamically adjust the batch size of the three transformers based on available memory and avoid out-of-memory errors, following TabICL, \name employs polynomial regression to estimate the inference peak GPU memory consumption:
\begin{equation*}
    \text{MEM} = \alpha_1 \times \text{batch\_size} + \alpha_2 \times \text{seq\_len} + \alpha_3 \times \text{batch\_size} \times \text{seq\_len} + \alpha_4
\end{equation*}
Compared to TabICL, we introduce query-aware scalable softmax in $\TFcol$ and $\TFicl$, which requires re-evaluating the memory coefficients for these components. The updated coefficients are as follows ($\TFrow$ remains unchanged from TabICL):
\begin{align*}
    \text{MEM}_{\text{col}} &= 0.146 \times \text{batch\_size} + 1.94 \times 10^{-5} \times \text{seq\_len} + 0.00488 \times \text{batch\_size} \times \text{seq\_len} + 142.91  \nonumber \\
    \text{MEM}_{\text{row}} &= -2.07 \times 10^{-5} \times \text{batch\_size} + 2.27 \times 10^{-4} \times \text{seq\_len} + 0.00537 \times \text{batch\_size} \times \text{seq\_len} + 138.54  \nonumber \\
    \text{MEM}_{\text{icl}} &= -0.04 \times \text{batch\_size} + 5.43 \times 10^{-7} \times \text{seq\_len} + 0.0195 \times \text{batch\_size} \times \text{seq\_len} + 142.84  \nonumber
\end{align*}
where the estimated memory is measured in megabytes (MB).

\name demonstrates substantially improved capability for handling large tables compared to TabICL. To provide an affordable setup for processing tables with millions of samples, we implement a hierarchical offloading strategy that extends beyond the CPU offloading of TabICL to include disk-based offloading.

\paragraph{Memory bottleneck analysis.}
Given an input $X \in \mathbb{R}^{b \times n \times m}$, where $b$, $n$, and $m$ represent the number of datasets, the number of samples, and the number of features respectively, $X$ is first reshaped to $\mathbb{R}^{(b \times m) \times n}$ and processed by $\text{TF}_{\text{col}}$ to produce feature embeddings $E \in \mathbb{R}^{(b \times m) \times n \times d}$. The memory bottleneck lies in storing this intermediate tensor $E$. For a table with one million samples and 500 features, $E$ requires approximately 250\,GB of memory (with $d=128$ and float32 precision). TabICL addresses this by offloading $E$ to CPU memory during column-wise embedding. However, 250\,GB of CPU memory remains prohibitive for many users. We therefore implement disk offloading to further alleviate memory constraints.

\paragraph{Disk offloading via memory-mapped files.}
Our disk offloading implementation leverages memory-mapped files (memmap) through NumPy, which allows the operating system to handle paging between disk and memory transparently. The key components are:
\begin{enumerate}
    \item \textbf{Pre-allocation:} Before processing begins, we pre-allocate a memory-mapped file on disk with the exact size required for the output tensor. This reserves contiguous disk space and avoids fragmentation during incremental writes.
    
    \item \textbf{Incremental writing:} During column-wise embedding, each batch's output is written directly to the memory-mapped file at the corresponding indices. This streaming approach ensures that GPU memory only holds the current batch, while completed results are persisted to disk.
    
    \item \textbf{Periodic flushing:} To balance I/O efficiency with memory usage, we periodically flush the memory-mapped file to disk after accumulating a configurable amount of data (default is 8\,GB). This prevents the operating system's page cache from consuming excessive memory.
    
    \item \textbf{Automatic cleanup:} We register a weak reference finalizer for each memory-mapped file, ensuring automatic deletion when the associated tensor is garbage collected.
\end{enumerate}

\paragraph{Asynchronous data transfer.}
To overlap GPU computation with data movement, we employ a dedicated CUDA stream for device-to-host (D2H) transfers. The workflow operates as follows: (1) GPU tensor is asynchronously copied to a pinned CPU buffer on the copy stream; (2) a CUDA event is recorded to track completion; (3) upon event completion, data is written to the final target (CPU tensor or disk); (4) the pinned buffer is returned to a buffer pool for reuse. This pipelining hides transfer latency and improves throughput. We maintain a configurable maximum number of pending asynchronous copies (default is 4) before blocking, balancing memory usage against throughput.

\paragraph{Automatic mode selection.}
We provide an inference manager that supports four offloading modes: \texttt{GPU} (keep everything on GPU), \texttt{CPU} (offload to CPU memory), \texttt{DISK} (offload to memory-mapped files), and \texttt{AUTO} (automatically choose based on available resources). In \texttt{AUTO} mode, the manager estimates the output tensor size and compares it against available GPU memory, CPU memory, and disk space with configurable safety factors to select the most appropriate storage backend.

\begin{figure}
    \centering
    \includegraphics[width=0.9\linewidth]{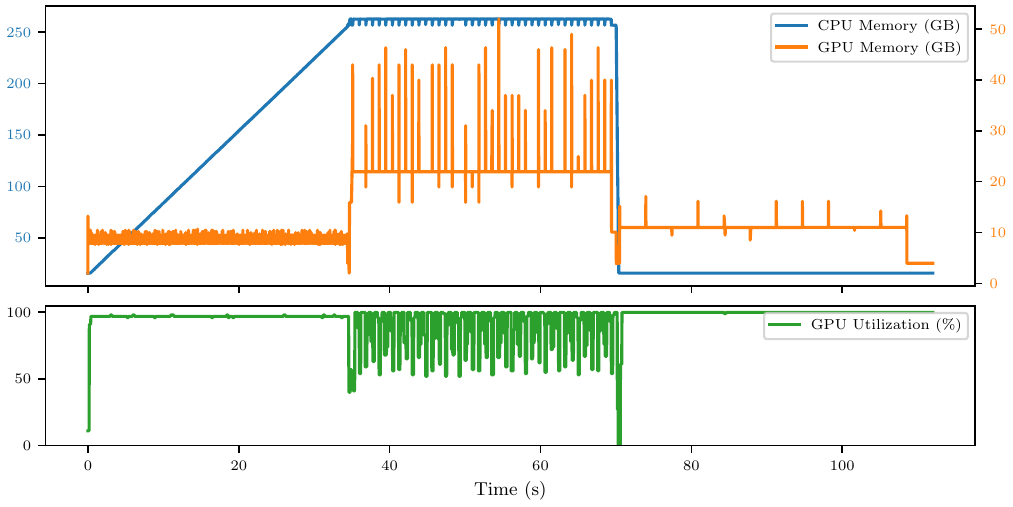}
    \caption{\textbf{CPU offloading for a table with 1M samples and 500 features.} The intermediate feature embeddings tensor $E$ requires approximately 250\,GB of storage. During column-wise embedding (0--35s), $E$ is progressively offloaded to CPU memory, causing CPU memory usage to increase linearly. During row-wise interaction (35--70s), batches are loaded from CPU to GPU for computation, resulting in fluctuating GPU utilization due to CPU-GPU communication overhead. The forward pass completes in 115 seconds with peak GPU memory of 50\,GB. However, the 250\,GB CPU memory requirement remains prohibitive for most systems.}
    \label{fig:cpu_offloading}
\end{figure}

\begin{figure}
    \centering
    \includegraphics[width=0.9\linewidth]{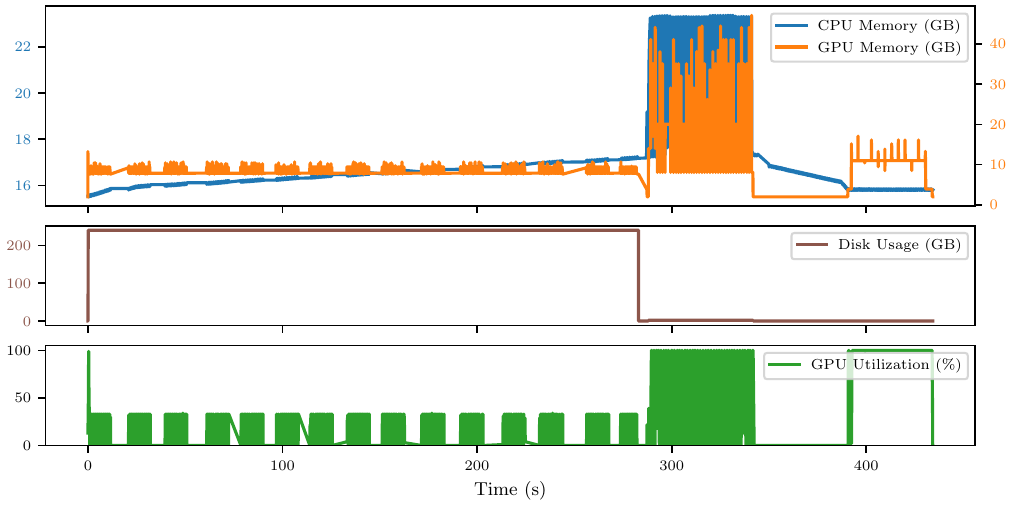}
    \caption{\textbf{Disk offloading for a table with 1M samples and 500 features.} We first pre-allocate a 250\,GB memory-mapped file on disk. During column-wise embedding (0--280s), feature embeddings are streamed directly to disk. The periodic drops in GPU utilization correspond to synchronization points where the asynchronous copy manager drains pending transfers and flushes data to disk. During row-wise interaction (280--350s), data is loaded from disk in batches. The total forward pass takes 450 seconds---approximately 4$\times$ slower than CPU offloading due to disk I/O latency, but with dramatically reduced memory requirements (CPU memory under 24\,GB and GPU memory under 50\,GB) that make million-scale inference accessible on commodity hardware.}
    \label{fig:disk_offloading}
\end{figure}

Figures \ref{fig:cpu_offloading} and \ref{fig:disk_offloading} illustrate the resource utilization profiles for CPU and disk offloading respectively, processing a table with 1 million samples and 500 features (80\% training, 20\% test) on an H100 GPU with FlashAttention-3 and automatic mixed precision enabled. CPU offloading achieves faster execution (115s vs.\ 450s) but requires 250\,GB of RAM. Disk offloading trades speed for accessibility, requiring only 24\,GB of CPU memory and 50\,GB of GPU memory while using 250\,GB of disk space.

\newpage

\section{Quantile Distribution}
\label{sec:appendix:quantile_dist}

\subsection{Problem setup}

For regression tasks, given an input feature vector $x$, \name predicts a set of conditional quantiles:
\begin{equation*}
    \hat{Q}(\alpha | x) \quad \text{for } \alpha \in \{\alpha_1, \alpha_2, \ldots, \alpha_K\}
\end{equation*}
where $\alpha_k$ represents probability levels. In this work, we set $K = 999$ and choose uniformly spaced  probability levels between $\alpha_L=0.001$ and $\alpha_R=0.999$, that is, $\boldsymbol{\alpha} = \{0.001, 0.002, \ldots, 0.999\}$. However, these raw predicted quantiles present several challenges \citep{chernozhukov2010quantile,bondell2010noncrossing}:
\begin{enumerate}
    \item \textbf{Quantile crossing}: \name may predict non-monotonic quantiles, i.e., $\hat{Q}(\alpha_i) > \hat{Q}(\alpha_j)$ for $\alpha_i < \alpha_j$, which violates the fundamental property of quantile functions.
    \item \textbf{Incomplete support}: Predictions are only available for $\alpha \in [\alpha_L, \alpha_R]$, leaving the extreme tails undefined.
    \item \textbf{Lack of analytical functions}: Raw quantiles do not directly provide probability density function (PDF), cumulative distribution function (CDF), or analytical moments required by many downstream applications.

\end{enumerate}

Therefore, we propose an approach to construct a probabilistic distribution from predicted quantiles, addressing these challenges by:
\begin{enumerate}
    \item \textbf{Monotonicity enforcement}: Correcting quantile crossing and enforcing monotonic quantiles for a valid quantile function.
    \item \textbf{Tail extrapolation}: Extending the distribution beyond $[\alpha_L, \alpha_R]$ using parametric exponential tail models with data-inferred parameters.
    \item \textbf{Analytical statistics}: Providing closed-form expressions for PDF, CDF, and analytical moments.
\end{enumerate}

\subsection{Quantile function}
The complete quantile function is defined over three regions:
\begin{equation*}
Q(\alpha) = \begin{cases}
    Q_{\text{left}}(\alpha) & \text{if } \alpha < \alpha_L \text{ (left tail)} \\
    Q_{\text{spline}}(\alpha) & \text{if } \alpha_L \leq \alpha \leq \alpha_R \text{ (interior)} \\
    Q_{\text{right}}(\alpha) & \text{if } \alpha > \alpha_R \text{ (right tail)}
\end{cases}
\end{equation*}

In the interior region $[\alpha_L, \alpha_R]$, the quantile function is modeled as a piecewise linear spline connecting the predicted quantile knots. For $\alpha \in [\alpha_i, \alpha_{i+1}]$ where $i \in \{1, 2, \ldots, K-1\}$:
\begin{equation*}
    Q_{\text{spline}}(\alpha) = q_i + \frac{q_{i+1} - q_i}{\alpha_{i+1} - \alpha_i} (\alpha - \alpha_i) = q_i + m_i (\alpha - \alpha_i)
\end{equation*}
where the slope of segment $i$ is:
\begin{equation*}
    m_i = \frac{q_{i+1} - q_i}{\alpha_{i+1} - \alpha_i} = \frac{\Delta q_i}{\Delta \alpha_i}
\end{equation*}

The slopes $m_i$ must be non-negative for a valid quantile function, which is ensured by the monotonicity correction described later. To extrapolate the distribution beyond the observed quantile range $[\alpha_L, \alpha_R]$, we employ parametric exponential tail models suitable for sub-exponential distributions (e.g., Gaussian)~\citep{beirlant2006statistics}.

For the left tail ($\alpha < \alpha_L$):
\begin{equation*}
    Q_{\text{left}}(\alpha) = \beta_L \ln(\alpha) + c_L  \nonumber
\end{equation*}
where $\beta_L > 0$ is the scale parameter and $c_L$ is the intercept determined by continuity at the boundary.

Requiring $Q_{\text{left}}(\alpha_L) = q_L$ gives:
\begin{equation*}
    c_L = q_L - \beta_L \ln(\alpha_L)  \nonumber
\end{equation*}

Thus, the complete left tail formula is:
\begin{equation*}
    Q_{\text{left}}(\alpha) = q_L + \beta_L \ln\left(\frac{\alpha}{\alpha_L}\right)
\end{equation*}

For the right tail ($\alpha > \alpha_R$):
\begin{equation*}
    Q_{\text{right}}(\alpha) = -\beta_R \ln(1 - \alpha) + c_R  \nonumber
\end{equation*}
where $\beta_R > 0$ is the scale parameter.

Requiring $Q_{\text{right}}(\alpha_R) = q_R$ gives:
\begin{equation*}
    c_R = q_R + \beta_R \ln(1 - \alpha_R)  \nonumber
\end{equation*}

Thus, the complete right tail formula is:
\begin{equation*}
    Q_{\text{right}}(\alpha) = q_R - \beta_R \ln\left(\frac{1-\alpha}{1-\alpha_R}\right)
\end{equation*}

The derivative ${dQ}/{d\alpha}$ is crucial for PDF computation:
\begin{equation*}
\frac{dQ}{d\alpha}(\alpha)=
\begin{cases}
\frac{dQ_{\text{left}}}{d\alpha} = \dfrac{\beta_L}{\alpha}, & 0<\alpha<\alpha_L \\[6pt]
\frac{dQ_{\text{spline}}}{d\alpha} = \dfrac{q_{i+1}-q_i}{\alpha_{i+1}-\alpha_i},
& \alpha\in[\alpha_i,\alpha_{i+1}) \\[10pt]
\frac{dQ_{\text{right}}}{d\alpha} = \dfrac{\beta_R}{1-\alpha}, & \alpha_R<\alpha<1
\end{cases}
\end{equation*}

\subsection{Quantile crossing correction}
\label{sec:appendix:quantile_dist:quantile_cross}

A quantile crossing violation occurs when for some $i < j$:
\begin{equation*}
    \hat{Q}(\alpha_i) > \hat{Q}(\alpha_j) \quad \text{despite } \alpha_i < \alpha_j
\end{equation*}
This violates the fundamental monotonicity requirement of quantile functions. We provide three ways to handle this issue:
\paragraph{(1) No correction.}
The simplest approach is to ignore crossing violations. This may be acceptable when crossings are rare. However, this can lead to invalid probability densities in regions where crossings occur.

\paragraph{(2) Sorting.}
A straightforward correction is to sort the predicted quantiles:
\begin{equation*}
    \mathbf{q}^{\text{sorted}} = \text{sort}(\hat{q}_1, \hat{q}_2, \ldots, \hat{q}_K)
\end{equation*}
This guarantees monotonicity with $O(K \log K)$ complexity. However, sorting may destroy the correspondence between quantile values and their original probability levels, which can distort the distribution shape.

\paragraph{(3) Isotonic regression.}
The optimal correction in the $L^2$ sense is given by isotonic regression \citep{barlow1972isotonic}, which solves:
\begin{equation*}
    \mathbf{q}^* = \operatornamewithlimits{arg\,min}_{\mathbf{q} : q_1 \leq q_2 \leq \cdots \leq q_K} \sum_{k=1}^{K} w_k (\hat{q}_k - q_k)^2
\end{equation*}
where $w_k > 0$ are optional weights (by default, $w_k = 1$). This problem admits a unique solution that can be computed in $O(K)$ time via the Pool Adjacent Violators Algorithm (PAVA) \citep{best1990active}. PAVA iteratively merges adjacent blocks that violate monotonicity and replaces each merged block with its weighted average. Despite its linear per-sample complexity, PAVA is inherently a 1D, data-dependent procedure in which the sequence of merges depends on local violations and
therefore introduces sequential dependencies along the quantile index. As a result, it is not straightforward to implement PAVA as a single fully vectorized operation, which limits its practical throughput because we often need to process large batches of predictions (e.g., the entire test set).

We leverage the just-in-time compilation of Numba \citep{Lam2015NumbaAL} to optimize the PAVA implementation, achieving effective CPU parallelism. As shown in Figure~\ref{fig:pava_performance}, the Numba-based PAVA implementation achieves competitive performance with \texttt{torch.sort} on CPU for large batch sizes, while being faster than the PAVA implementation of Scipy \citep{virtanen2020scipy}. 

However, the Numba-based implementation cannot compete with \texttt{torch.sort} on GPU, which relies on highly optimized parallel sorting primitives implemented in dedicated CUDA kernels. Given this performance gap and the empirical observation that quantile crossing is rare in the predictions of \name, we use sorting by default for monotonicity enforcement.

\begin{figure}[t]
    \centering
    \includegraphics[width=0.45\textwidth]{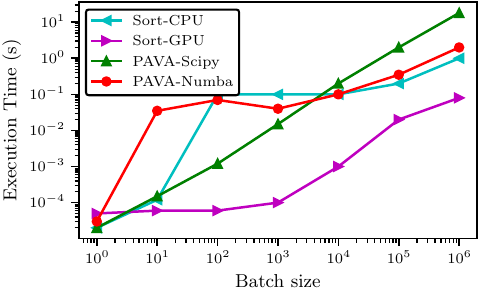}
    \caption{Execution time comparison of different monotonicity enforcement methods across batch sizes ($K=999$ quantiles per sample). \textbf{Sort-GPU} (\texttt{torch.sort} on CUDA) achieves the best performance. \textbf{Numba} (our parallel PAVA implementation) and \textbf{Sort-CPU} show competitive scaling on CPU. \textbf{Scipy} (\texttt{scipy.optimize.isotonic\_regression}) processes samples sequentially, becoming slower than Numba at large batch sizes.}
    \label{fig:pava_performance}
\end{figure}

\subsection{Tail parameter estimation}
\label{sec:appendix:quantile_dist:tail_estimate}

The exponential tail scale parameters $\beta_L$ and $\beta_R$ are estimated from the boundary quantiles using log-space linear regression. For the left tail, the model is $Q(\alpha) = \beta_L \ln(\alpha) + c_L$. Given $K_{\text{tail}}$ quantiles in the left tail region (default is 20), we estimate $\beta_L$ using ordinary least squares:
\begin{equation*}
    \hat{\beta}_L = \frac{\text{Cov}(Q, \ln \alpha)}{\text{Var}(\ln \alpha)} \bigg|_{\alpha \in \{\alpha_1, \ldots, \alpha_{K_{\text{tail}}}\}}
\end{equation*}

Specifically:
\begin{equation*}
    \hat{\beta}_L = \frac{\sum_{k=1}^{K_{\text{tail}}} (q_k - \bar{q})(\ln \alpha_k - \overline{\ln \alpha})}{\sum_{k=1}^{K_{\text{tail}}} (\ln \alpha_k - \overline{\ln \alpha})^2}
\end{equation*}

For the right tail, the model is $Q(\alpha) = -\beta_R \ln(1-\alpha) + c_R$. The estimation is analogous:
\begin{equation*}
    \hat{\beta}_R = -\frac{\text{Cov}(Q, \ln(1-\alpha))}{\text{Var}(\ln(1-\alpha))} \bigg|_{\alpha \in \{\alpha_{K-K_{\text{tail}}+1}, \ldots, \alpha_K\}}
\end{equation*}

The estimated parameters are clamped to ensure numerical stability: $\beta \in [\beta_{\min}, \beta_{\max}] = [0.01, 100]$.

\subsection{Cumulative distribution function (CDF)}
\label{subsubsec:cdf}

The CDF $F(z) = P(Z \leq z)$ is the inverse of the quantile function: $F(z) = Q^{-1}(z)$.

\textbf{Spline region} ($z \in [q_i, q_{i+1})$):
\begin{equation*}
    F_{\text{spline}}(z) = \alpha_i + \frac{z - q_i}{q_{i+1} - q_i}(\alpha_{i+1} - \alpha_i)
\end{equation*}

\textbf{Left tail} ($z < q_L$):
From $z = q_L + \beta_L \ln(\alpha/\alpha_L)$, we solve for $\alpha$:
\begin{equation*}
    F_{\text{left}}(z) = \alpha_L \exp\left(\frac{z - q_L}{\beta_L}\right)
\end{equation*}

\textbf{Right tail} ($z > q_R$):
\begin{equation*}
    F_{\text{right}}(z) = 1 - (1-\alpha_R) \exp\left(-\frac{z - q_R}{\beta_R}\right)
\end{equation*}

\subsection{Probability density function (PDF)}
The PDF is related to the quantile function derivative by the inverse function theorem:
\begin{equation*}
    f(z) = \frac{1}{Q'(F(z))} = \frac{1}{\frac{dQ}{d\alpha}\big|_{\alpha = F(z)}}
\end{equation*}

The PDF computation procedure is:
\begin{enumerate}
    \item Compute $\alpha = F(z)$ using the appropriate CDF formula
    \item Compute $\frac{dQ}{d\alpha}$ at $\alpha$ using the appropriate derivative formula
    \item Return $f(z) = \left(\frac{dQ}{d\alpha}\right)^{-1}$
\end{enumerate}

\textbf{Spline region} ($z \in [q_i, q_{i+1})$):
\begin{equation*}
    f_{\text{spline}}(z) = \frac{1}{m_i} = \frac{\alpha_{i+1} - \alpha_i}{q_{i+1} - q_i}
\end{equation*}

\textbf{Left tail} ($z < q_L$):
\begin{equation*}
    f_{\text{left}}(z) = \frac{F(z)}{\beta_L} = \frac{\alpha_L}{\beta_L} \exp\left(\frac{z - q_L}{\beta_L}\right)
\end{equation*}

\textbf{Right tail} ($z > q_R$):
\begin{equation*}
    f_{\text{right}}(z) = \frac{1 - F(z)}{\beta_R} = \frac{1-\alpha_R}{\beta_R} \exp\left(-\frac{z - q_R}{\beta_R}\right)
\end{equation*}

The log probability density is computed directly to avoid numerical issues with very small densities:
\begin{equation*}
    \ln f(z) = -\ln\left(\frac{dQ}{d\alpha}\bigg|_{\alpha = F(z)}\right)
\end{equation*}

\subsection{Continuous ranked probability score (CRPS)}

The CRPS for a distribution with CDF $F$ and observation $z$ is:
\begin{equation*}
    \text{CRPS}(F, z) = \int_{-\infty}^{\infty} \left(F(y) - \mathbf{1}_{y \geq z}\right)^2 dy
\end{equation*}
where $\mathbf{1}_{y \geq z}$ is the indicator function. This can be equivalently expressed in quantile space:
\begin{equation*}
    \text{CRPS}(F, z) = \int_0^1 2 \rho_\alpha(z - Q(\alpha)) d\alpha
\end{equation*}
where $\rho_\alpha(u) = u(\alpha - \mathbf{1}_{u < 0})$ is the pinball loss. The CRPS decomposes as:
\begin{equation*}
    \text{CRPS}(F, z) = \int_0^{F(z)} 2\alpha(z - Q(\alpha)) d\alpha + \int_{F(z)}^1 2(1-\alpha)(Q(\alpha) - z) d\alpha
\end{equation*}

We compute CRPS analytically by integrating over each region.

\subsubsection{CRPS contribution from spline region}
\newcommand{\diff}{\mathop{}\!\mathrm{d}}

For segment $i$ with $\alpha \in [\alpha_i, \alpha_{i+1}]$ and $Q(\alpha) = q_i + m_i(\alpha - \alpha_i)$, we let $r = \min(\max(F(z), \alpha_i), \alpha_{i+1})$ be the clamped CDF value. The contribution to CRPS from segment $i$ is:
\begin{equation*}
    \text{CRPS}_i = I_1^{(i)} + I_2^{(i)}
\end{equation*}
where:
\begin{align*}
    I_1^{(i)} &= (z - q_i)(r^2 - \alpha_i^2) - 2m_i\left(\frac{r^3}{3} - \frac{\alpha_i r^2}{2} + \frac{\alpha_i^3}{6}\right) \\
    I_2^{(i)} &= 2\int_r^{\alpha_{i+1}} (1-\alpha)(Q(\alpha) - z) \diff \alpha
\end{align*}

For the first integral ($\alpha \leq F(z)$):
\begin{align*}
    I_1^{(i)} &= \int_{\alpha_i}^r 2\alpha(z - Q(\alpha)) \diff \alpha \\
    &= \int_{\alpha_i}^r 2\alpha(z - q_i - m_i(\alpha - \alpha_i)) \diff \alpha \\
    &= 2(z - q_i)\int_{\alpha_i}^r \alpha \diff \alpha - 2m_i\int_{\alpha_i}^r \alpha(\alpha - \alpha_i) \diff \alpha
\end{align*}
Computing each integral:
\begin{align*}
    \int_{\alpha_i}^r \alpha \diff \alpha &= \frac{r^2 - \alpha_i^2}{2} \\
    \int_{\alpha_i}^r \alpha(\alpha - \alpha_i) \diff \alpha &= \int_{\alpha_i}^r (\alpha^2 - \alpha_i \alpha) \diff \alpha = \frac{r^3 - \alpha_i^3}{3} - \alpha_i\frac{r^2 - \alpha_i^2}{2}
\end{align*}
Substituting yields the formula for $I_1^{(i)}$. The second integral $I_2^{(i)}$ is computed analogously.

The total spline CRPS is:
\begin{equation*}
    \text{CRPS}_{\text{spline}} = \sum_{i=1}^{K-1} \text{CRPS}_i
\end{equation*}

\subsubsection{CRPS contribution from exponential left tail}
For the left exponential tail with $Q(\alpha) = q_L + \beta_L\ln(\alpha/\alpha_L)$, let $\tilde{\alpha} = \min(F(z), \alpha_L)$ be the clamped CDF value and $b_L = q_L - \beta_L\ln\alpha_L$. We have:
\begin{equation*}
    \text{CRPS}_{\text{left}} = (z - b_L)(\alpha_L^2 - 2\alpha_L + 2\tilde{\alpha}) + \alpha_L^2 \beta_L\left(-\ln\alpha_L + \frac{1}{2}\right) + T_{\text{left}}
\end{equation*}
where:
\begin{equation*}
    T_{\text{left}} = \begin{cases}
        2\alpha_L\beta_L(\ln\alpha_L - 1) + 2\tilde{\alpha}(-z + b_L + \beta_L) & \text{if } z < q_L \\
        0 & \text{otherwise}
    \end{cases}
\end{equation*}

\subsubsection{CRPS contribution from exponential right tail}

For the right exponential tail with $Q(\alpha) = q_R - \beta_R\ln((1-\alpha)/(1-\alpha_R))$, let $\tilde{\alpha} = \max(F(z), \alpha_R)$ be the clamped CDF value. We have:
\begin{equation*}
    \text{CRPS}_{\text{right}} = (z - b_R)(-1 - \alpha_R^2 + 2\tilde{\alpha}) + a_R\left(-\frac{(1+\alpha_R)^2}{2} + (\alpha_R^2 - 1)\ln(1-\alpha_R) + 2\tilde{\alpha}\right) + T_{\text{right}}
\end{equation*}
where $a_R = -\beta_R$, $b_R = q_R + \beta_R\ln(1-\alpha_R)$, and:
\begin{equation*}
    T_{\text{right}} = \begin{cases}
        2(1-\tilde{\alpha})(z - b_R) & \text{if } z > q_R \\
        2a_R(1-\alpha_R)\ln(1-\alpha_R) & \text{otherwise}
    \end{cases}
\end{equation*}

\subsection{Moment calculations}

\subsubsection{Mean}
The mean of a distribution can be computed as:
\begin{equation*}
    \mathbb{E}[Z] = \int_0^1 Q(\alpha) d\alpha
\end{equation*}

\textbf{Spline contribution}:
\begin{equation*}
    \int_{\alpha_L}^{\alpha_R} Q_{\text{spline}}(\alpha) d\alpha = \sum_{i=1}^{K-1} \frac{(q_i + q_{i+1})\Delta\alpha_i}{2}
\end{equation*}

\textbf{Left tail contribution:}
\begin{align*}
    \int_0^{\alpha_L} Q_{\text{left}}(\alpha) \diff \alpha &= \int_0^{\alpha_L} (q_L + \beta_L\ln(\alpha/\alpha_L)) \diff \alpha \\
    &= q_L\alpha_L + \beta_L\int_0^{\alpha_L} \ln(\alpha/\alpha_L) \diff \alpha \\
    &= q_L\alpha_L + \beta_L\left[\alpha\ln(\alpha/\alpha_L) - \alpha\right]_0^{\alpha_L} \\
    &= q_L\alpha_L - \beta_L\alpha_L = \alpha_L(q_L - \beta_L)
\end{align*}

\textbf{Right tail contribution:}
\begin{align*}
    \int_{\alpha_R}^1 Q_{\text{right}}(\alpha) \diff \alpha &= \int_{\alpha_R}^1 (q_R - \beta_R\ln((1-\alpha)/(1-\alpha_R))) \diff \alpha \\
    &= (1-\alpha_R)(q_R + \beta_R)
\end{align*}

\textbf{Total mean:}
\begin{equation*}
    \mathbb{E}[Z] = \alpha_L(q_L - \beta_L) + \sum_{i=1}^{K-1}\frac{(q_i + q_{i+1})\Delta\alpha_i}{2} + (1-\alpha_R)(q_R + \beta_R)
\end{equation*}

\subsubsection{Variance}

The variance is computed as:
\begin{equation*}
    \text{Var}[Z] = \mathbb{E}[Z^2] - (\mathbb{E}[Z])^2
\end{equation*}
where:
\begin{equation*}
    \mathbb{E}[Z^2] = \int_0^1 Q(\alpha)^2 d\alpha
\end{equation*}

\textbf{Spline contribution}:
For a linear segment:
\begin{align*}
    \int_{\alpha_i}^{\alpha_{i+1}} Q(\alpha)^2 \diff \alpha &= \int_{\alpha_i}^{\alpha_{i+1}} (q_i + m_i(\alpha - \alpha_i))^2 \diff \alpha \\
    &= \frac{\Delta\alpha_i}{3}(q_i^2 + q_i q_{i+1} + q_{i+1}^2)
\end{align*}

Thus:
\begin{equation*}
    \mathbb{E}[Z^2]_{\text{spline}} = \sum_{i=1}^{K-1} \frac{\Delta\alpha_i(q_i^2 + q_i q_{i+1} + q_{i+1}^2)}{3}
\end{equation*}

\textbf{Left tail contribution:}
\begin{equation*}
    \mathbb{E}[Z^2]_{\text{left}} = \alpha_L(q_L^2 - 2\beta_L q_L + 2\beta_L^2)
\end{equation*}

\textbf{Right tail contribution:}
\begin{equation*}
    \mathbb{E}[Z^2]_{\text{right}} = (1-\alpha_R)(q_R^2 + 2\beta_R q_R + 2\beta_R^2)
\end{equation*}

\subsection{Empirical validation on synthetic regression tasks}
\label{sec:appendix:quantile_dist:validation}

To validate that \texttt{QuantileDistribution} correctly constructs probability distributions from the predicted quantiles of \name, we design four synthetic regression datasets with known ground-truth distributions. This allows direct comparison between predicted and true distributional quantities, PDF, and CDF.

\subsubsection{Synthetic regression datasets}

\paragraph{Dataset 1: Quadratic with homoscedastic gaussian noise.}
This serves as a simple baseline with constant variance:
\begin{equation}
    y = 0.15x^2 - 0.5 + \epsilon, \quad \epsilon \sim \mathcal{N}(0, 0.25^2)
\end{equation}
The true conditional distribution is $p(y|x) = \mathcal{N}(0.15x^2 - 0.5, 0.25^2)$. The predictive distribution should exhibit uniform spread across all $x$ values, with symmetric Gaussian PDFs centered on the quadratic mean function.

\paragraph{Dataset 2: Sinusoidal with heteroscedastic noise.}
This dataset tests the ability of \name to capture input-dependent uncertainty:
\begin{equation}
    y = \sin(2x) + 0.2x + \epsilon, \quad \epsilon \sim \mathcal{N}(0, \sigma(x)^2), \quad \sigma(x) = 0.12 + 0.1|x|
\end{equation}
The true conditional distribution is $p(y|x) = \mathcal{N}(\sin(2x) + 0.2x, \sigma(x)^2)$. The noise variance increases with $|x|$, requiring the model to predict wider quantile intervals at the boundaries than at the center.

\paragraph{Dataset 3: Step function with noise.}
This dataset tests behavior at discontinuities:
\begin{equation}
    y = \begin{cases} -1 + \epsilon & \text{if } x < 0 \\ +1 + \epsilon & \text{if } x \geq 0 \end{cases}, \quad \epsilon \sim \mathcal{N}(0, 0.3^2)
\end{equation}
The true conditional distribution is $p(y|x) = \mathcal{N}(\text{sign}(x), 0.3^2)$. At $x = 0$, the model must handle the abrupt transition between two distinct modes.

\paragraph{Dataset 4: Linear with heavy-tailed noise.}
This dataset introduces heavy-tailed behavior via a Gaussian mixture to test tail extrapolation:
\begin{equation}
    y = 0.3x + \epsilon, \quad \epsilon \sim (1-w)\mathcal{N}(0, \sigma_1^2) + w\mathcal{N}(0, \sigma_2^2)
\end{equation}
where $w = 0.1$ (10\% outlier weight), $\sigma_1 = 0.2$ (inlier scale), and $\sigma_2 = 0.8$ (outlier scale). The true conditional distribution is a Gaussian mixture $p(y|x) = 0.9 \cdot \mathcal{N}(0.3x, 0.2^2) + 0.1 \cdot \mathcal{N}(0.3x, 0.8^2)$. This creates heavier tails than a pure Gaussian, testing whether the exponential tail model adequately captures extreme quantiles.

\subsubsection{Visualization and analysis}
Figure~\ref{fig:quantile_dist_analysis} presents a comprehensive 6-row $\times$ 4-column visualization comparing predicted distributions (solid lines) against ground-truth distributions (dashed lines). Each column corresponds to one dataset:

\begin{enumerate}
    \item \textbf{Row 1 (quantile lines)}: Training data overlaid with predicted quantile curves. The median tracks the distribution center, while extreme quantiles delineate tail behavior. For the heteroscedastic dataset (column 2), the quantile band visibly widens toward the boundaries $|x| = 3$, correctly capturing input-dependent variance.
    
    \item \textbf{Row 2 (quantile functions)}: The quantile function for three representative inputs $x \in \{-2, 0, 2\}$. Predicted curves (solid) closely match the true quantile functions (dashed) across all datasets. The crossing correction is evident, and the exponential tail extrapolation smoothly extends the curves beyond $[\alpha_L, \alpha_R] = [0.001, 0.999]$.
    
    \item \textbf{Row 3 (PDF)}: Predicted PDFs (solid) align well with true PDFs (dashed), capturing both the location and spread of the conditional distributions. The heteroscedastic dataset shows narrower peaks at $x=0$ and broader peaks at $x=\pm 2$, matching the ground truth.
    
    \item \textbf{Row 4 (CDF)}: The smooth S-curves of predicted CDFs (solid) closely follow the true CDFs (dashed).
    
    \item \textbf{Row 5 (Density heatmaps)}: A 2D visualization of the conditional density $f(y|x)$ across the input domain (log scale). The heteroscedastic dataset shows a funnel-shaped density widening with $|x|$, while the step function exhibits two distinct horizontal bands separated at $x = 0$.
    
    \item \textbf{Row 6 (resampled data)}: Synthetic samples drawn from the learned distribution via inverse transform sampling ($y = Q(U)$ where $U \sim \text{Uniform}(0,1)$). The resampled points (blue) closely match the original training data (gray).
\end{enumerate}

\begin{figure}[htbp]
    \centering
    \includegraphics[width=\textwidth]{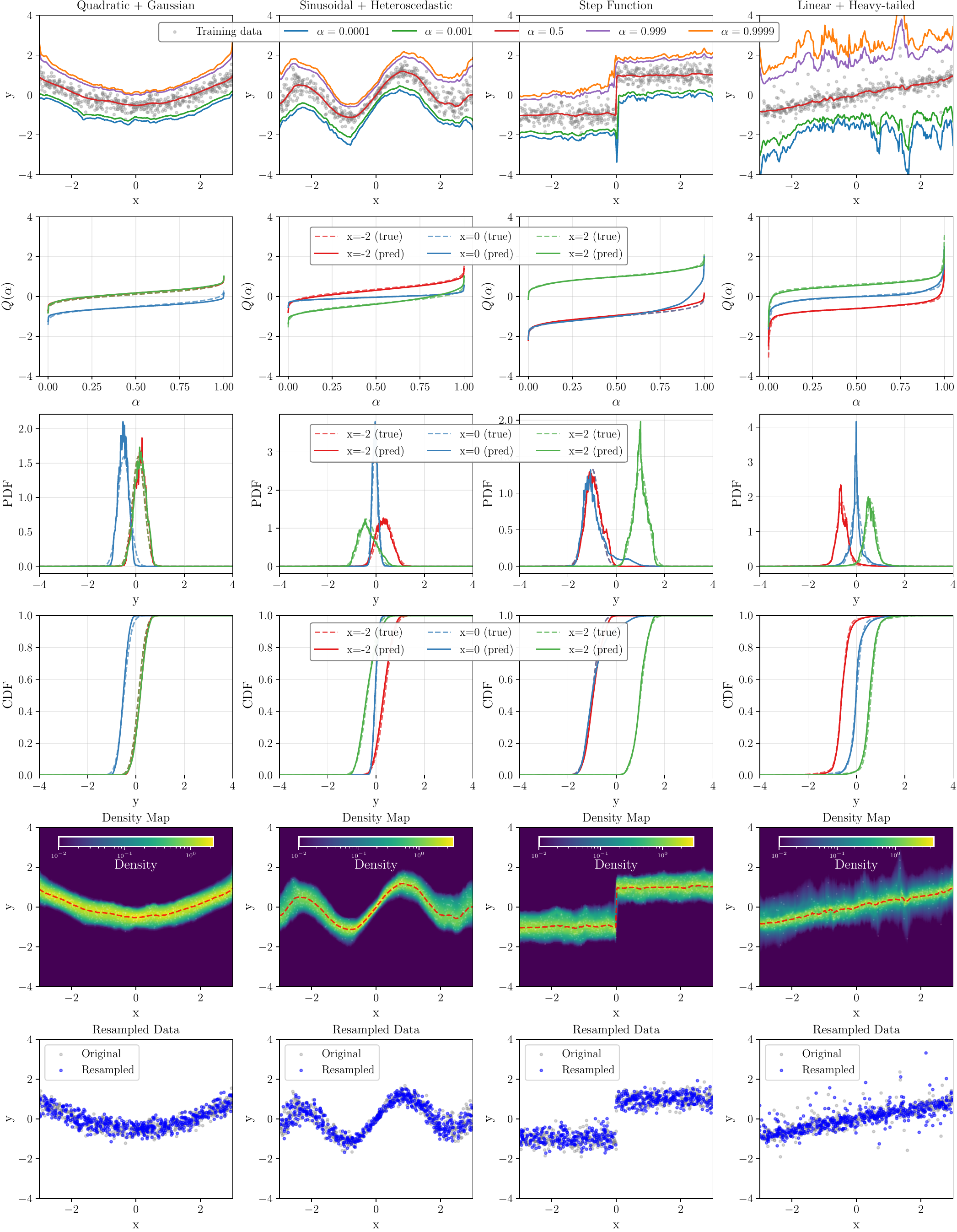}
    \caption{\textbf{Validation of \texttt{QuantileDistribution} on four synthetic regression tasks with known ground-truth distributions.}}
    \label{fig:quantile_dist_analysis}
\end{figure}

\newpage

\FloatBarrier

\section{Detailed Results on the TabArena Benchmark} \label{sec:appendix:detailed_tabarena}

\subsection{Aggregation metrics}
\label{sec:appendix:tabarena:metrics}

TabArena~\citep{tabarena} evaluates models using task-specific error metrics and aggregates results across datasets using several complementary metrics. We briefly summarize these metrics below.

\paragraph{Per-dataset error metrics.}
For each dataset $i$, TabArena computes an error metric $\mathrm{err}_i$ by averaging over all outer cross-validation folds:
\begin{itemize}[leftmargin=*,nosep]
	\item \textbf{Binary classification}: $1 - \text{ROC AUC}$
	\item \textbf{Multiclass classification}: Log-Loss
	\item \textbf{Regression}: RMSE
\end{itemize}

\paragraph{Elo rating.}
Elo is a pairwise comparison-based rating system where each model's rating predicts its expected win probability against others. A 400-point Elo gap corresponds to a 10:1 (91\%) expected win rate. For two models $A$ and $B$ with ratings $R_A$ and $R_B$, the expected win probability of $A$ is:
\begin{equation*}
	\mathbb{E}[A \text{ wins}] = \frac{1}{1 + 10^{(R_B - R_A)/400}}
\end{equation*}
Elo is based solely on wins, ties, or losses and ignores the magnitude of performance differences. This ensures each dataset contributes equally to the final ranking, avoiding bias toward certain domains or dataset properties. TabArena calibrates 1000 Elo to the performance of default RandomForest and uses 200 rounds of bootstrapping for 95\% confidence intervals.

\paragraph{Improvability.}
Improvability measures the relative error gap between a method and the best-performing method on each dataset, then averages across datasets. For a model $m$ on dataset $i$:
\begin{equation*}
	\text{Improvability}_i(m) = \frac{\mathrm{err}_i(m) - \mathrm{err}_i^*}{\mathrm{err}_i(m)} \times 100\%
\end{equation*}
where $\mathrm{err}_i^* = \min_{m'} \mathrm{err}_i(m')$ is the error of the best method on dataset $i$. Improvability is always between 0\% (optimal) and 100\%. Unlike Elo, improvability is sensitive to the magnitude of performance differences, making it more informative for practitioners who care about how much a method lags behind the best.

\paragraph{Average rank.}
For each dataset, models are ranked by their error (rank 1 is best). The average rank is simply the mean rank across all datasets:
\begin{equation*}
	\text{AvgRank}(m) = \frac{1}{|\mathcal{D}|} \sum_{i \in \mathcal{D}} \text{rank}_i(m)
\end{equation*}
Lower average rank indicates better overall performance.

\paragraph{Discussion.}
Each aggregation metric has its own strengths and limitations. Elo treats all datasets equally regardless of performance gaps, improvability captures the magnitude of differences, and rank-based metrics are robust to outliers. We primarily report improvability in the main paper for its interpretability, but provide Elo and rankings in this appendix for completeness. Across all metrics, \name consistently achieves state-of-the-art performance.

\subsection{Results on all datasets}

Figures in the section present the complete results on all 51 TabArena datasets.

\paragraph{Ranking and Elo.}
\name (default) achieves an average rank of 4.82, outperforming AutoGluon 1.4 (extreme, 4h) at 5.24 and RealTabPFN-2.5 (T+E) at 5.88. Here, AutoGluon 1.4 (extreme, 4h) refers to AutoGluon 1.4~\citep{erickson2020autogluon} with the \texttt{extreme\_quality} preset and a 4-hour training budget, which ensembles multiple model families with extensive hyperparameter tuning. Despite a single forward-pass without any tuning, \name achieves better performance than this heavily optimized ensemble system.

\paragraph{Pairwise win rates.}
The more recent AutoGluon 1.5 (extreme, 4h), which includes more recent models like TabDPT, achieves a better average rank than \name (3.88 vs. 4.82). However, the pairwise win rate matrix (\Cref{fig:tabarena_winrate_matrix}) reveals a more nuanced picture: \name achieves a 57\% win rate against AutoGluon 1.5 and a 59\% win rate against RealTabPFN-2.5 (T+E). This indicates that \name wins on the majority of datasets in head-to-head comparisons.

The discrepancy between win rate and average rank arises from how these metrics handle the magnitude of performance differences. Win rate only counts wins and losses, treating all victories equally. In contrast, average rank penalizes methods that perform poorly on certain datasets, even if they win on most others. This suggests that while \name wins more often, AutoGluon 1.5 may achieve more consistent rankings across datasets, avoiding the occasional poor performance that can inflate average rank. This suggests that \name may struggle on specific datasets that fall outside its pretraining distribution.

\paragraph{Pareto efficiency.}
As shown in \Cref{fig:pareto_front_imp,fig:pareto_front_elo}, \name dominates the Pareto front of both improvability vs.\ runtime and Elo vs.\ runtime among all tabular foundation models. \name achieves the best trade-off between predictive performance and computational cost, being both faster and more accurate than competing TFMs including RealTabPFN-2.5, TabPFN-2.5, TabICL, LimiX, and Mitra.

\begin{figure}[htbp]
    \centering
    \includegraphics[width=0.8\linewidth]{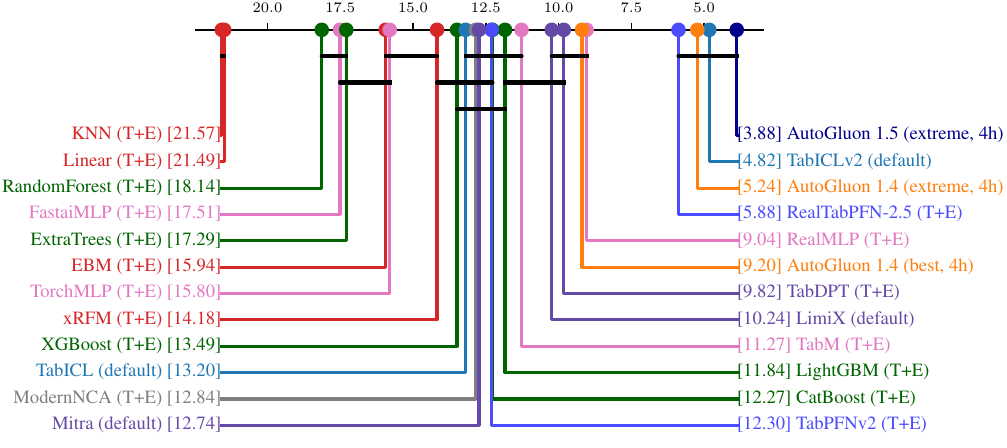}
    \caption{\textbf{Critical difference diagram on the TabArena benchmark.}}
    \label{fig:tabarena_cdd}
\end{figure}

\begin{figure}[htbp]
    \centering
    \includegraphics[width=0.85\linewidth]{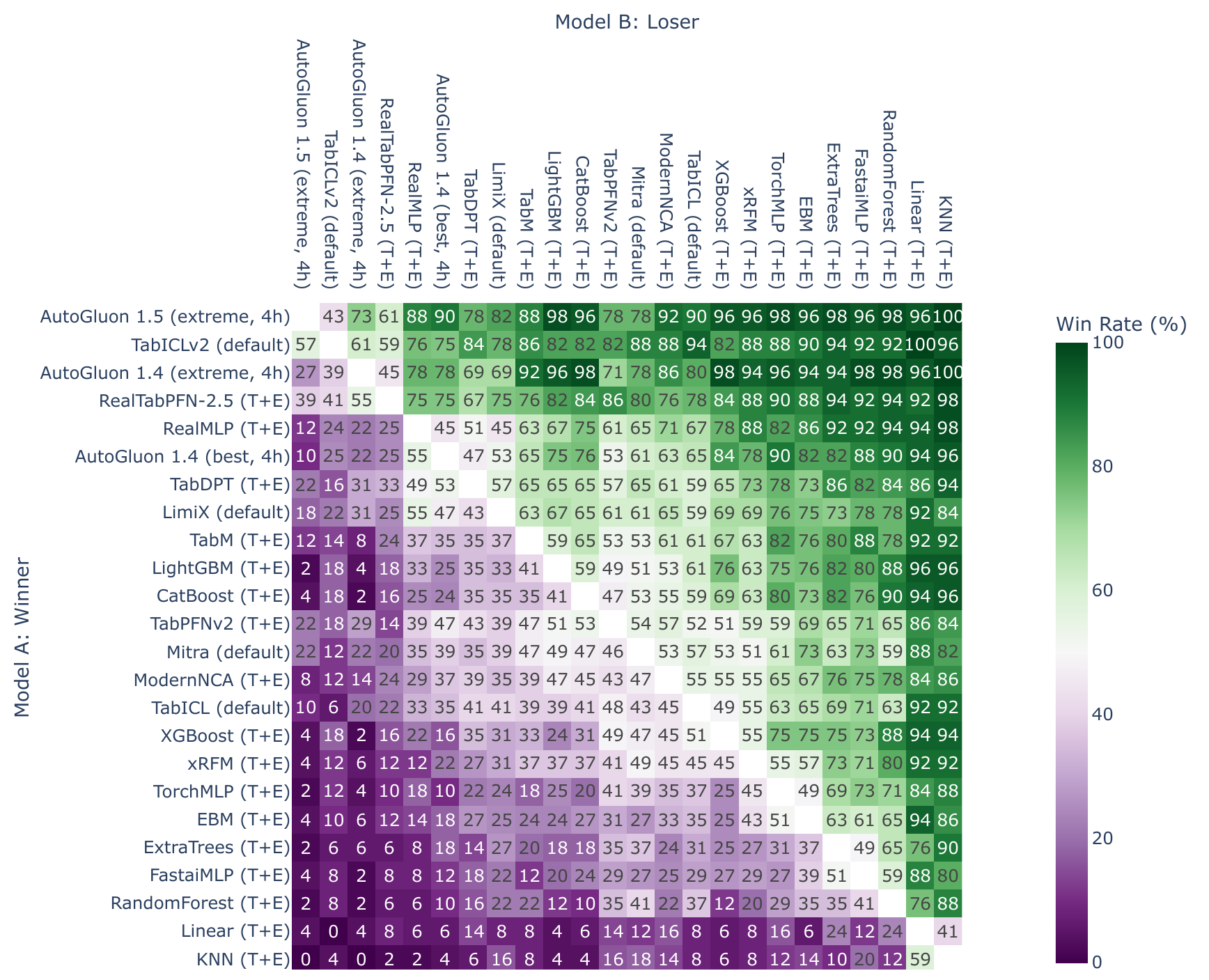}
    \caption{\textbf{Win rate matrix on the TabArena benchmark.}}
    \label{fig:tabarena_winrate_matrix}
\end{figure}

\begin{figure}[htbp]
	\centering
	\begin{subfigure}{0.49\linewidth}
		\includegraphics[width=\textwidth]{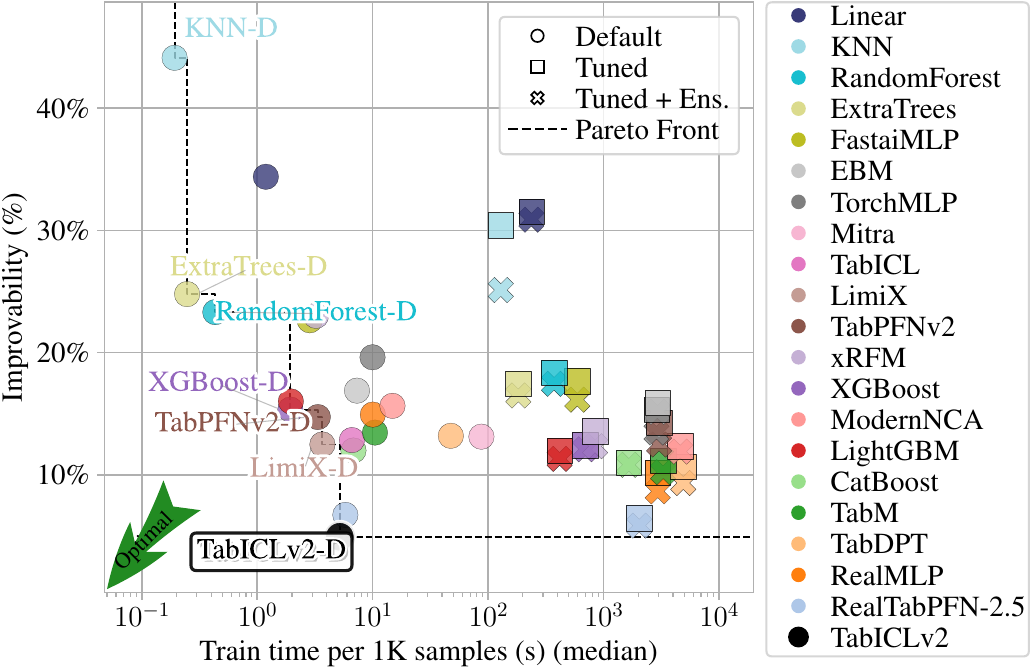}
		\caption{Pareto front of improvability and train time}
		\label{fig:tabarena_pareto_front_improvability_vs_train}
	\end{subfigure}
        \
	\begin{subfigure}{0.49\linewidth}
		\includegraphics[width=\textwidth]{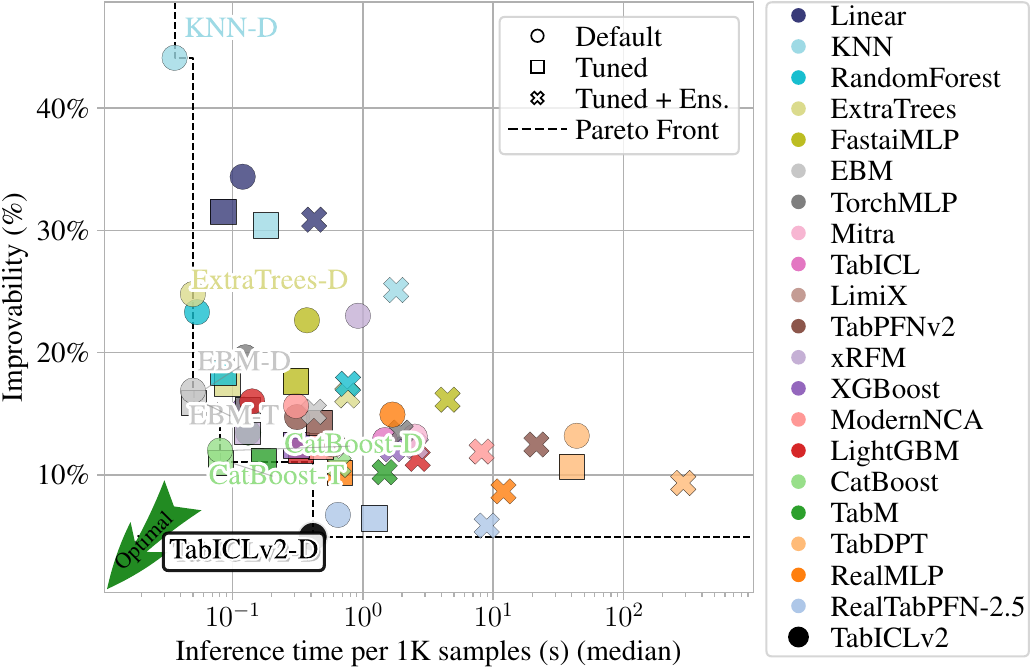}
		\caption{Pareto front of improvability and inference time}
		\label{fig:tabarena_pareto_front_improvability_vs_infer}
	\end{subfigure}
	\caption{\textbf{Pareto front of improvability and train/inference time on the TabArena benchmark.}}
	\label{fig:pareto_front_imp}
\end{figure}

\begin{figure}[htbp]
	\centering
	\begin{subfigure}{0.49\linewidth}
		\includegraphics[width=\textwidth]{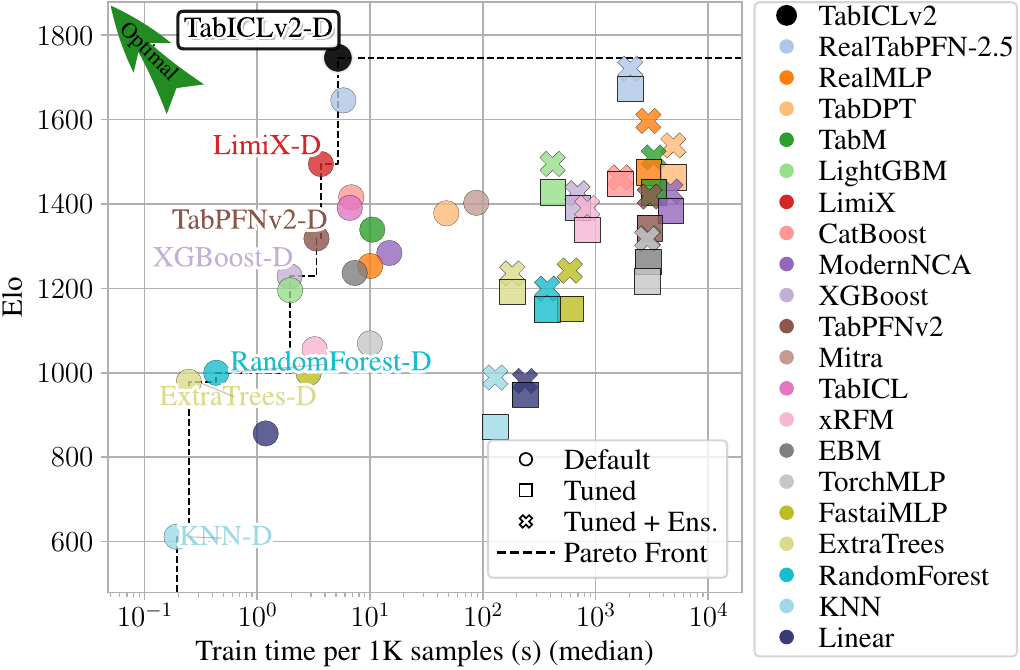}
		\caption{Pareto front of Elo and train time}
		\label{fig:tabarena_pareto_front_elo_vs_train}
	\end{subfigure}
        \
	\begin{subfigure}{0.49\linewidth}
		\includegraphics[width=\textwidth]{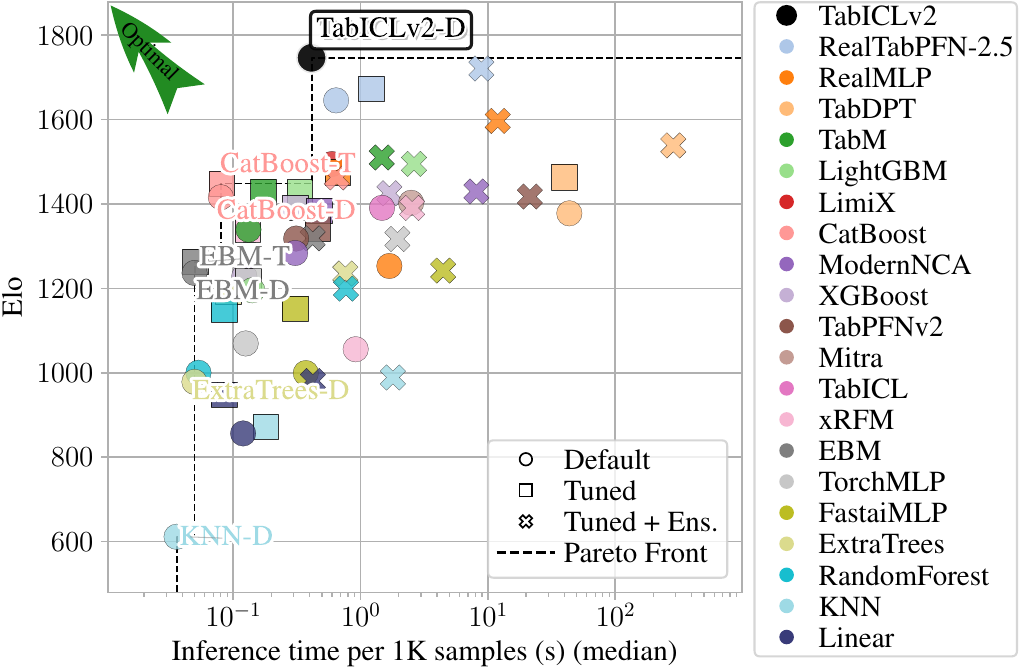}
		\caption{Pareto front of Elo and inference time}
		\label{fig:tabarena_pareto_front_elo_vs_infer}
	\end{subfigure}
	\caption{\textbf{Pareto front of Elo and train/inference time on the TabArena benchmark.}}
	\label{fig:pareto_front_elo}
\end{figure}

\begin{figure}[htbp]
    \centering
    \includegraphics[width=0.54\linewidth]{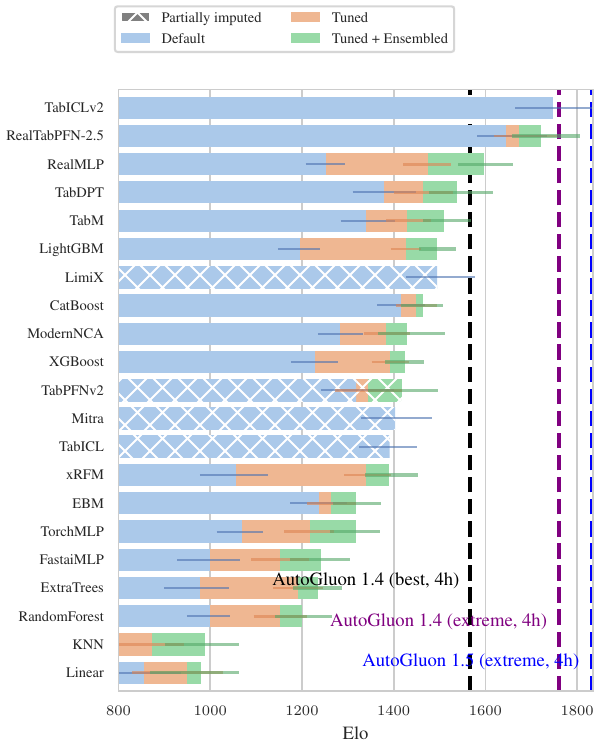}
    \caption{\textbf{TabArena Elo.}}
    \label{fig:tabarena_tuning_impact_elo}
\end{figure}

\FloatBarrier
\subsection{Results on binary classification datasets}

Figures in this section present results on 24 binary classification datasets in TabArena. \name achieves an average rank of 4.43, placing second behind AutoGluon 1.5 (extreme, 4h) at 3.83. Notably, \name substantially outperforms RealTabPFN-2.5 (T+E) at 6.90. On the Pareto fronts, \name remains the strongest among all tabular foundation models for binary classification.

\begin{figure}[htbp]
    \centering
    \includegraphics[width=0.8\linewidth]{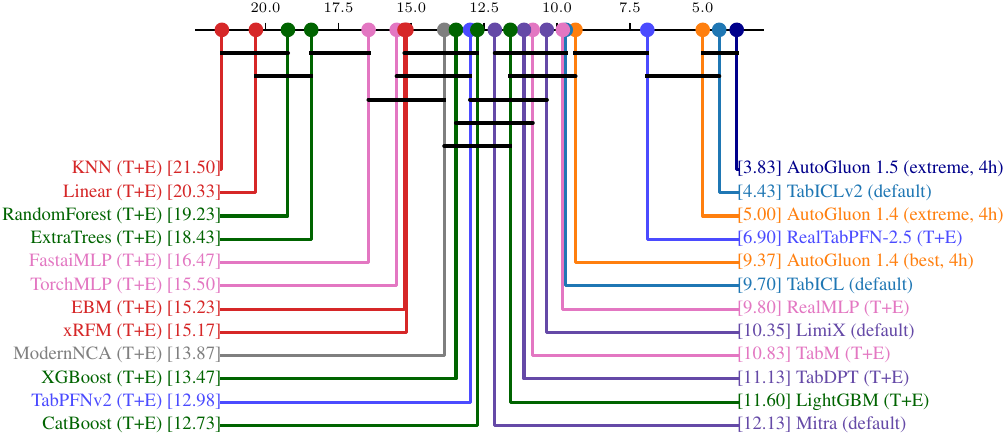}
    \caption{\textbf{Critical difference diagram on binary classification datasets of the TabArena benchmark.}}
    \label{fig:tabarena_cdd_binary}
\end{figure}

\begin{figure}[htbp]
	\centering
	\begin{subfigure}{0.49\linewidth}
		\includegraphics[width=\textwidth]{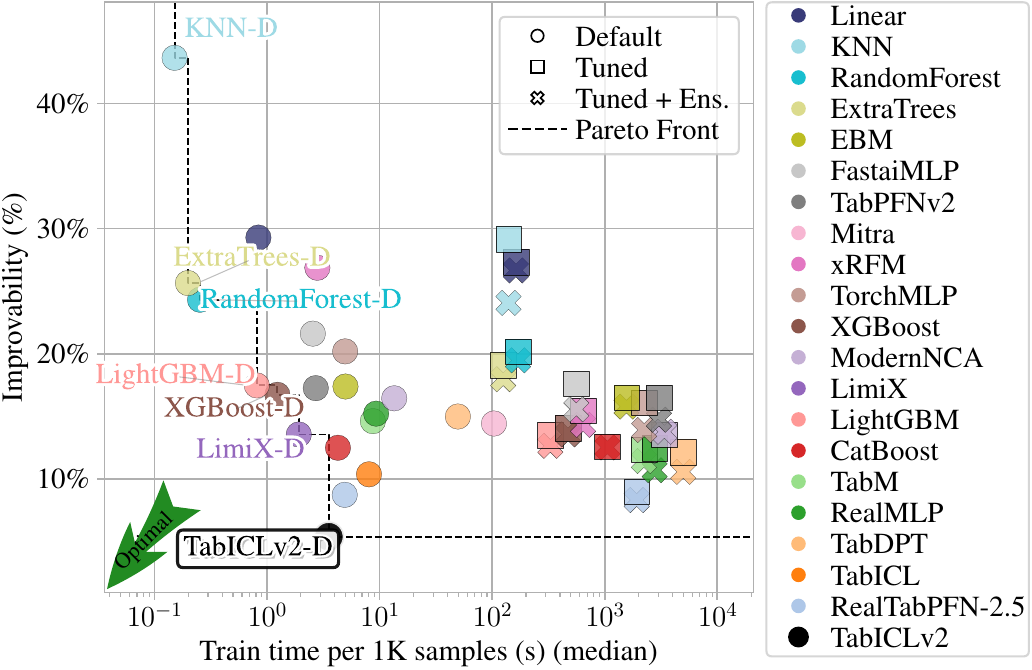}
		\caption{Pareto front of improvability and train time}
		\label{fig:tabarena_pareto_front_improvability_vs_train_binary}
	\end{subfigure}
        \
	\begin{subfigure}{0.49\linewidth}
		\includegraphics[width=\textwidth]{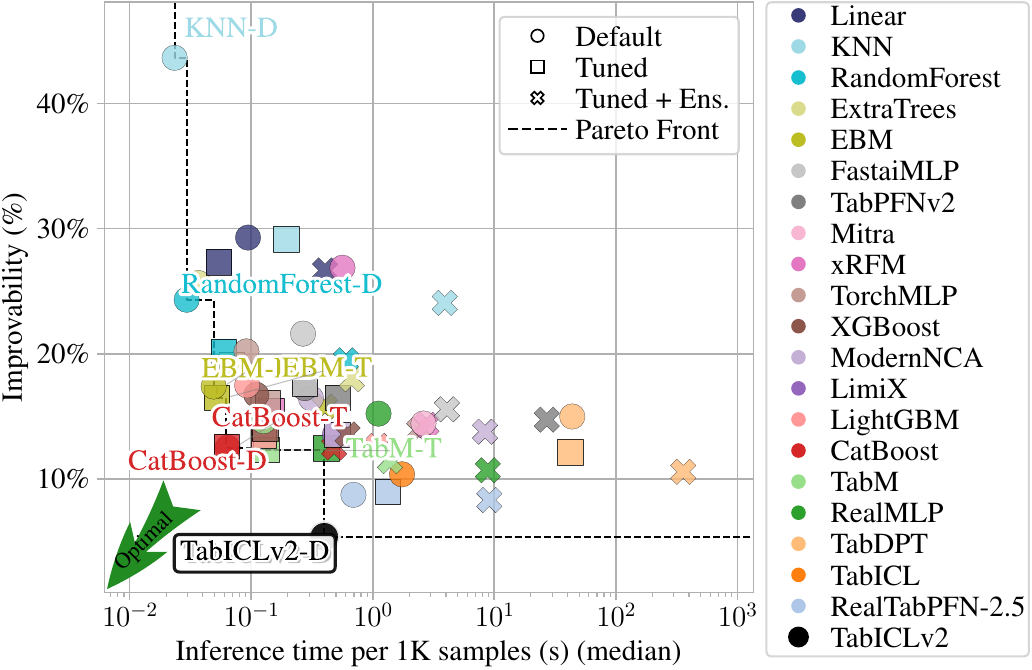}
		\caption{Pareto front of improvability and inference time}
		\label{fig:tabarena_pareto_front_improvability_vs_infer_binary}
	\end{subfigure}
	\caption{\textbf{Pareto front of improvability and train/inference time on binary classification datasets of the TabArena benchmark.}}
	\label{fig:pareto_front_imp_binary}
\end{figure}

\begin{figure}[htbp]
	\centering
	\begin{subfigure}{0.49\linewidth}
		\includegraphics[width=\textwidth]{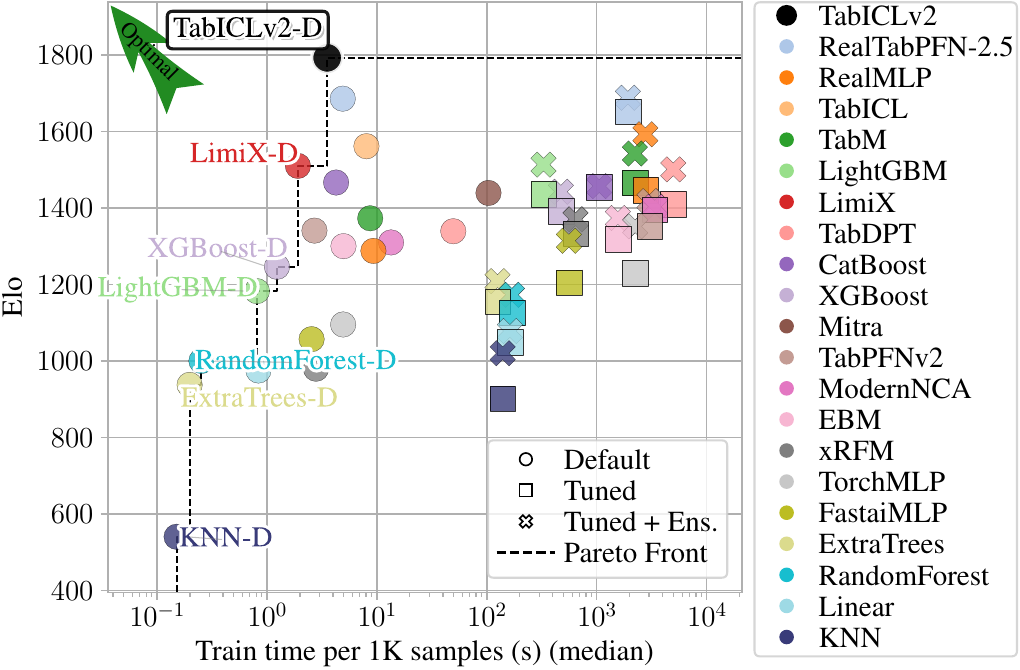}
		\caption{Pareto front of Elo and train time}
		\label{fig:tabarena_pareto_front_elo_vs_train_binary}
	\end{subfigure}
        \
	\begin{subfigure}{0.49\linewidth}
		\includegraphics[width=\textwidth]{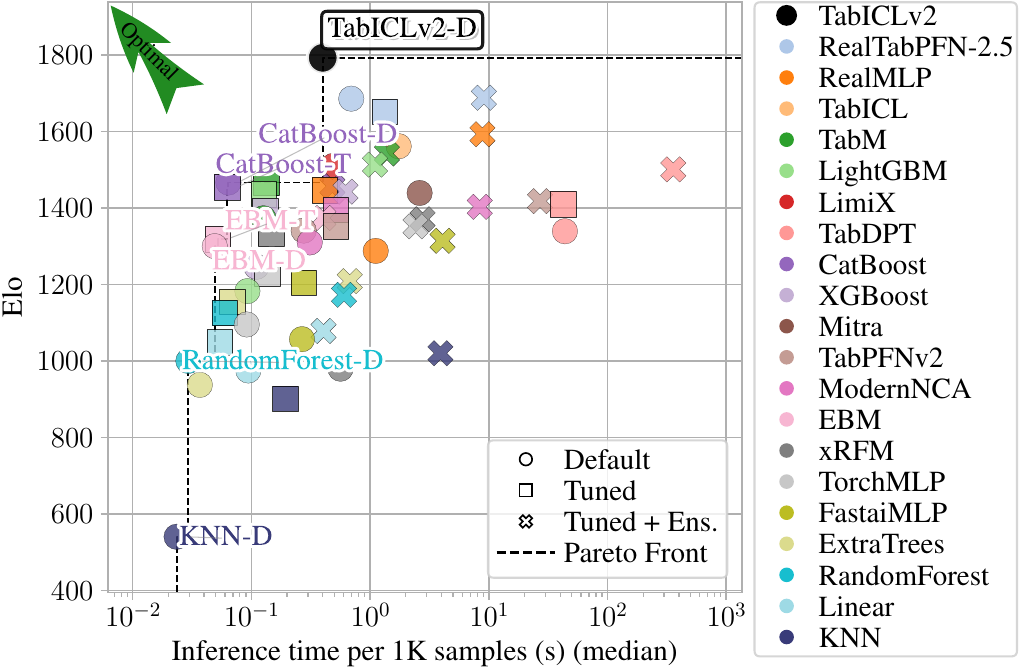}
		\caption{Pareto front of Elo and inference time}
		\label{fig:tabarena_pareto_front_elo_vs_infer_binary}
	\end{subfigure}
	\caption{\textbf{Pareto front of Elo and train/inference time on binary classification datasets of the TabArena benchmark.}}
	\label{fig:pareto_front_elo_binary}
\end{figure}

\FloatBarrier
\subsection{Results on multiclass classification datasets}

Figures in the section present results on 14 multiclass classification datasets in TabArena. On multiclass classification, \name (default) achieves an average rank of 6.75, behind AutoGluon 1.5 (extreme, 4h) at 4.00, RealTabPFN-2.5 (T+E) at 4.25, and AutoGluon 1.4 (extreme, 4h) at 4.50. While \name does not surpass RealTabPFN-2.5 (T+E) on this subset, it is important to note that RealTabPFN-2.5 employs tuning and ensembling whereas \name does not perform any tuning. In addition,  \name substantially outperforms other TFMs, such as TabPFNv2 (T+E) at 8.62 and LimiX (default) at 9.31.

\begin{figure}[htbp]
    \centering
    \includegraphics[width=0.8\linewidth]{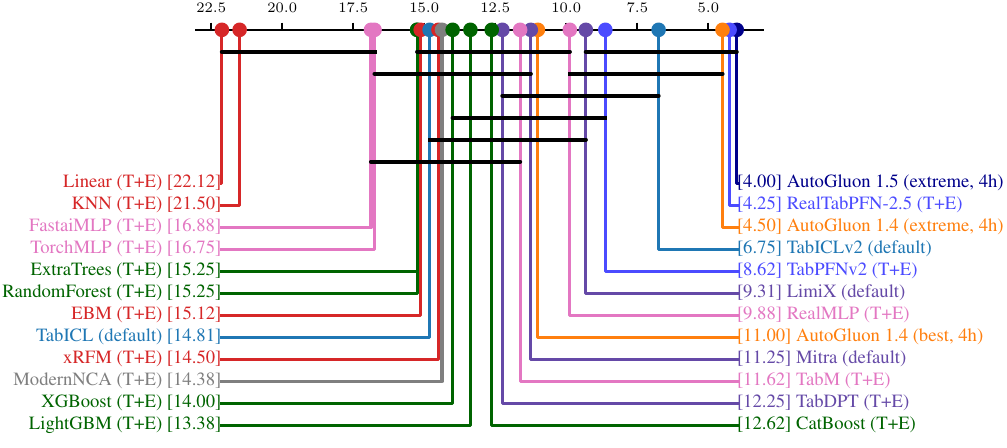}
    \caption{\textbf{Critical difference diagram on multiclass classification datasets of the TabArena benchmark.}}
    \label{fig:tabarena_cdd_multi}
\end{figure}

\begin{figure}[htbp]
	\centering
	\begin{subfigure}{0.49\linewidth}
		\includegraphics[width=\textwidth]{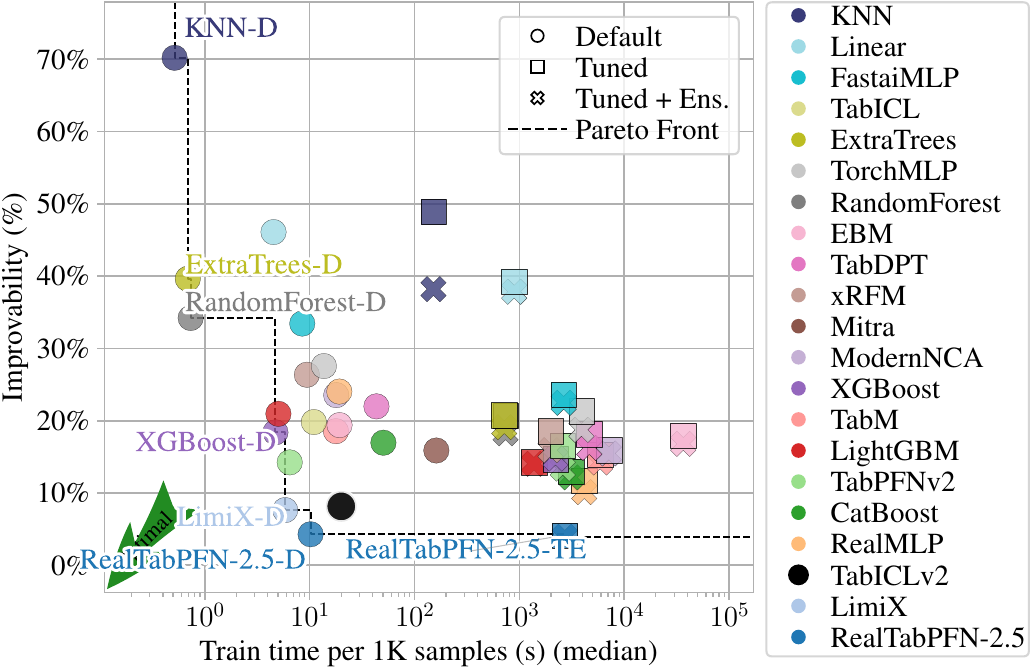}
		\caption{Pareto front of improvability and train time}
		\label{fig:tabarena_pareto_front_improvability_vs_train_multi}
	\end{subfigure}
        \
	\begin{subfigure}{0.49\linewidth}
		\includegraphics[width=\textwidth]{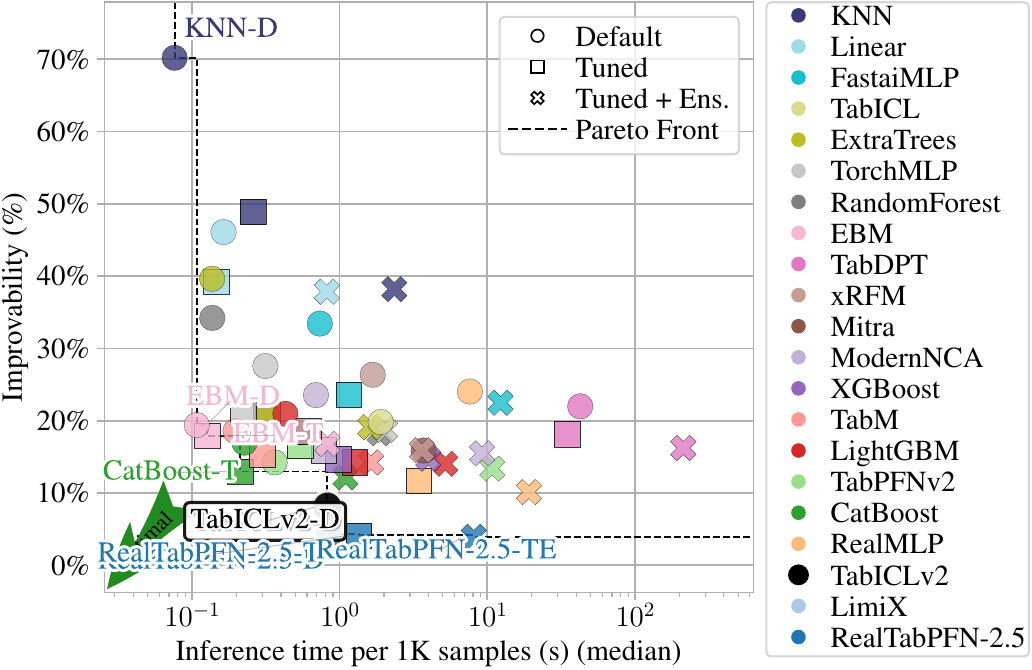}
		\caption{Pareto front of improvability and inference time}
		\label{fig:tabarena_pareto_front_improvability_vs_infer_multi}
	\end{subfigure}
	\caption{\textbf{Pareto front of improvability and train/inference time on multiclass classification datasets of the TabArena benchmark.}}
	\label{fig:pareto_front_imp_multi}
\end{figure}

\begin{figure}[htbp]
	\centering
	\begin{subfigure}{0.49\linewidth}
		\includegraphics[width=\textwidth]{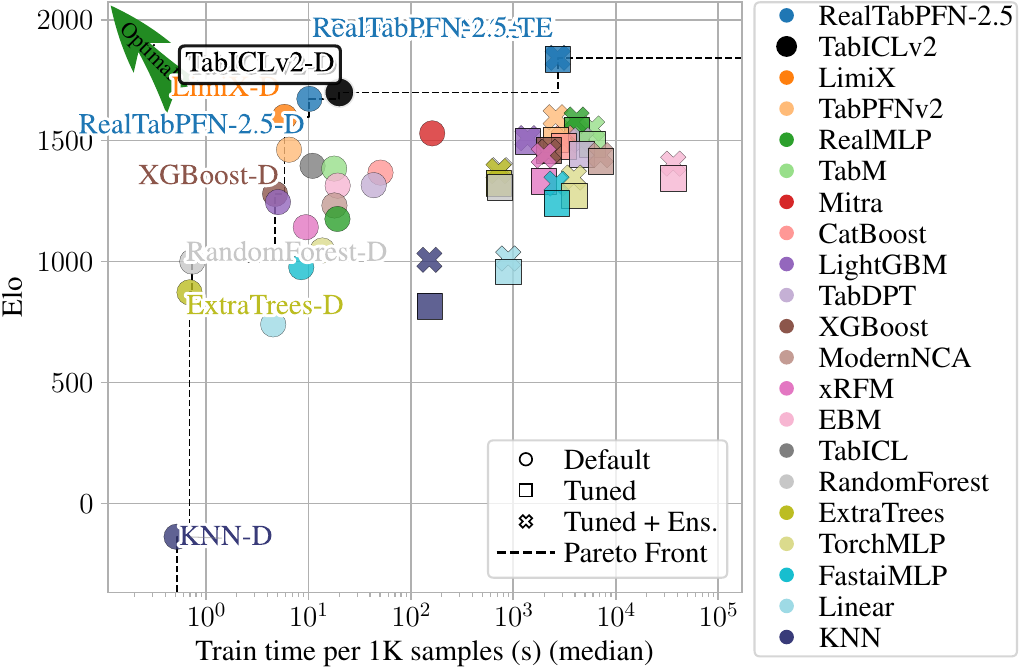}
		\caption{Pareto front of Elo and train time}
		\label{fig:tabarena_pareto_front_elo_vs_train_multi}
	\end{subfigure}
        \
	\begin{subfigure}{0.49\linewidth}
		\includegraphics[width=\textwidth]{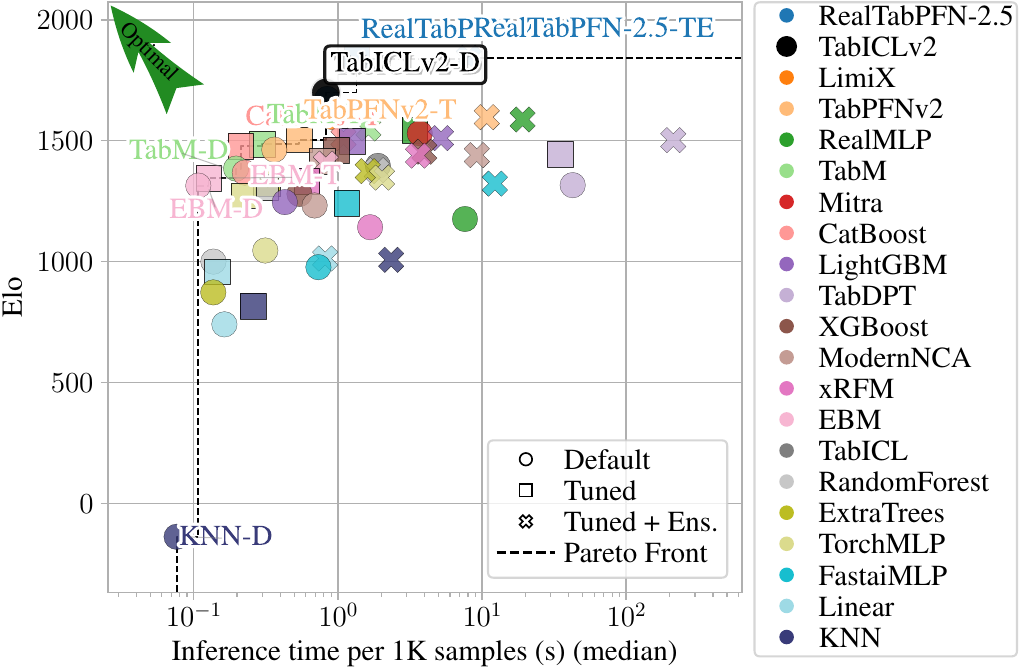}
		\caption{Pareto front of Elo and inference time}
		\label{fig:tabarena_pareto_front_elo_vs_infer_multi}
	\end{subfigure}
	\caption{\textbf{Pareto front of Elo and train/inference time on multiclass classification datasets of the TabArena benchmark.}}
	\label{fig:pareto_front_elo_multi}
\end{figure}

\FloatBarrier
\subsection{Results on regression datasets}

Figures in this section present results on 13 regression datasets in TabArena. On regression tasks, \name (default) achieves an average rank of 4.54, tying with RealTabPFN-2.5 (T+E) and trailing only AutoGluon 1.5 (extreme, 4h) at 3.92. Interestingly, TabDPT (T+E) achieves a competitive rank of 5.31 on regression, substantially better than its performance on classification tasks where it ranks much lower. TabDPT is pretrained on real-world datasets rather than synthetic data. However, its strong regression performance raises questions about potential data leakage between the training corpus of TabDPT and the regression datasets of TabArena.

\begin{figure}[htbp]
    \centering
    \includegraphics[width=0.8\linewidth]{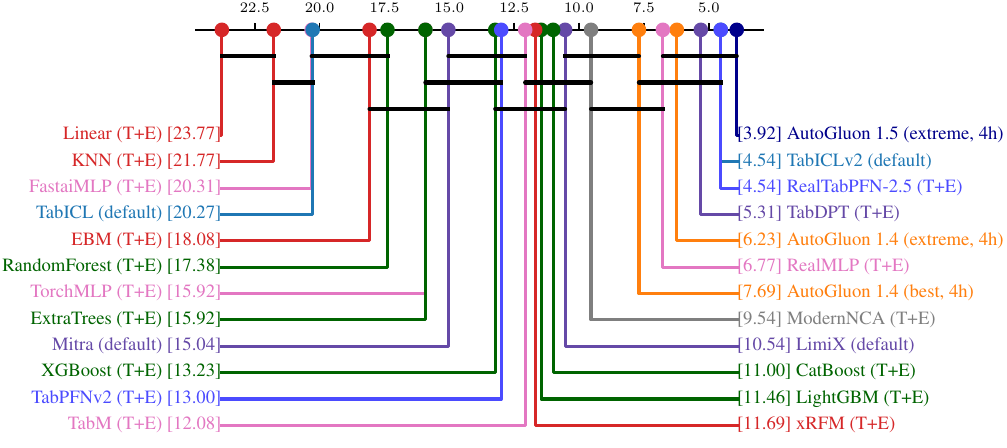}
    \caption{\textbf{Critical difference diagram on regression datasets of the TabArena benchmark.}}
    \label{fig:tabarena_cdd_reg}
\end{figure}

\begin{figure}[htbp]
	\centering
	\begin{subfigure}{0.49\linewidth}
		\includegraphics[width=\textwidth]{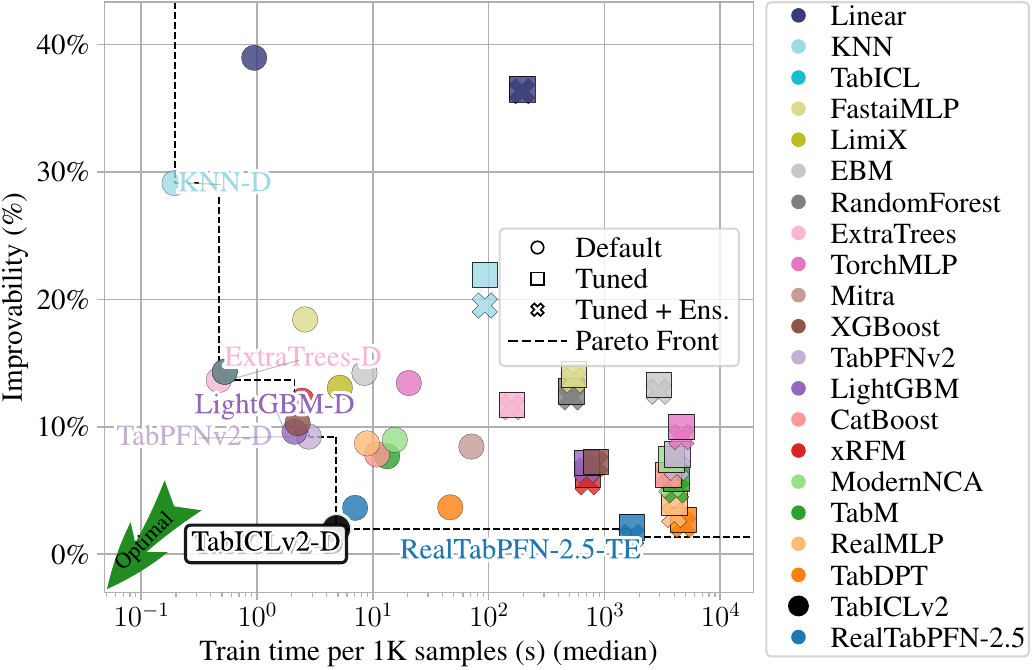}
		\caption{Pareto front of improvability and train time}
		\label{fig:tabarena_pareto_front_improvability_vs_train_reg}
	\end{subfigure}
        \
	\begin{subfigure}{0.49\linewidth}
		\includegraphics[width=\textwidth]{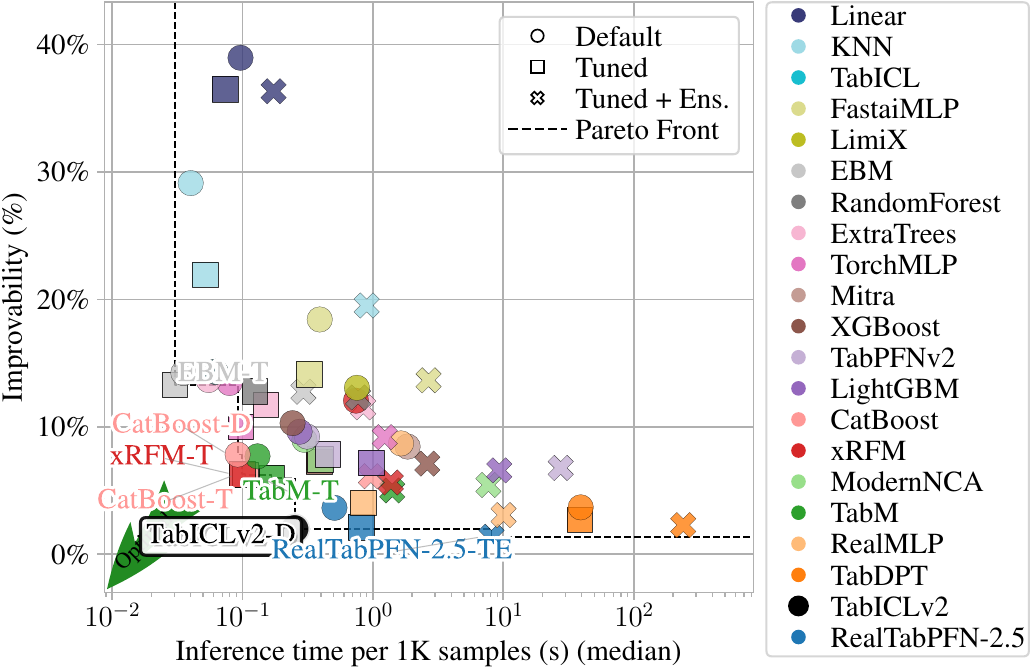}
		\caption{Pareto front of improvability and inference time}
		\label{fig:tabarena_pareto_front_improvability_vs_infer_reg}
	\end{subfigure}
	\caption{\textbf{Pareto front of improvability and train/inference time on regression datasets of the TabArena benchmark.}}
	\label{fig:pareto_front_imp_reg}
\end{figure}

\begin{figure}[htbp]
	\centering
	\begin{subfigure}{0.49\linewidth}
		\includegraphics[width=\textwidth]{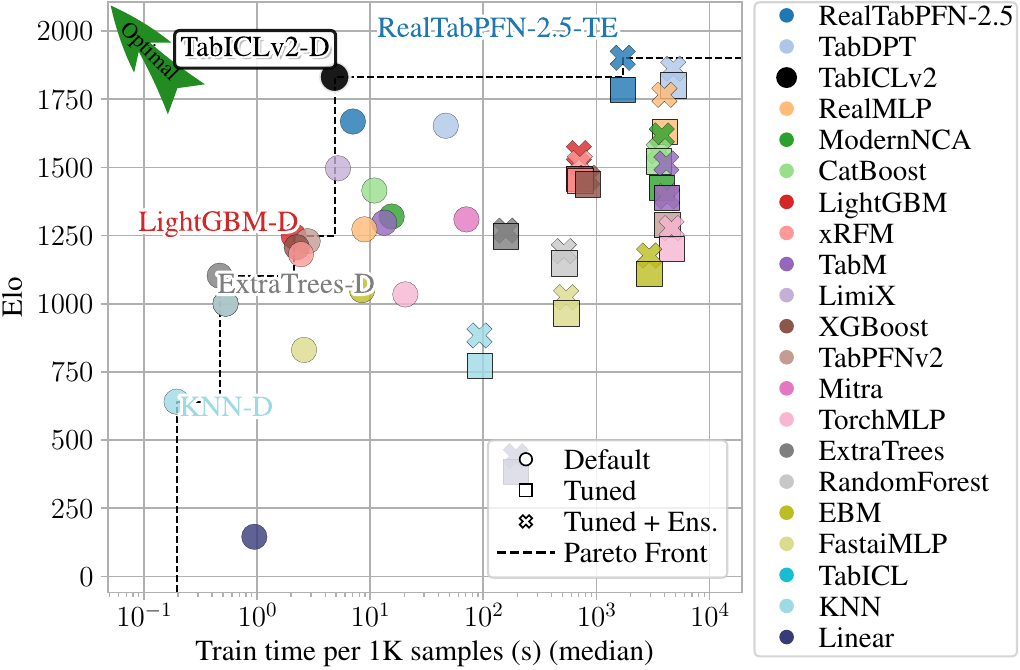}
		\caption{Pareto front of Elo and train time}
		\label{fig:tabarena_pareto_front_elo_vs_train_reg}
	\end{subfigure}
        \
	\begin{subfigure}{0.49\linewidth}
		\includegraphics[width=\textwidth]{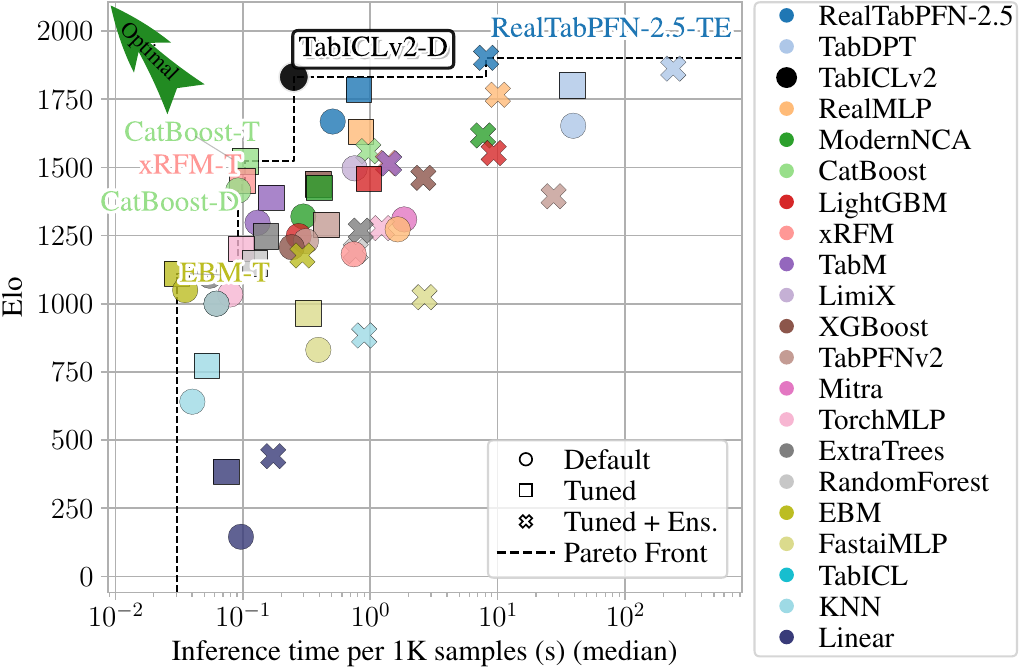}
		\caption{Pareto front of Elo and inference time}
		\label{fig:tabarena_pareto_front_elo_vs_infer_reg}
	\end{subfigure}
	\caption{\textbf{Pareto front of Elo and train/inference time on regression datasets of the TabArena benchmark.}}
	\label{fig:pareto_front_elo_reg}
\end{figure}

\newpage
\section{Detailed Results on the TALENT Benchmark} \label{sec:appendix:detailed_talent}

\subsection{Benchmark overview}

TALENT~\citep{talent} comprises 300 datasets spanning three task types:
\begin{itemize}[leftmargin=*,nosep]
	\item \textbf{Binary classification}: 120 datasets
	\item \textbf{Multiclass classification}: 80 datasets
	\item \textbf{Regression}: 100 datasets
\end{itemize}

Each dataset is split into 64\%/16\%/20\% for training, validation, and test sets, respectively. Hyperparameters are selected on the validation set based on accuracy, and final performance is reported on the held-out test set.

\paragraph{Evaluation metrics.}
Following the TALENT protocol, we use accuracy as the primary metric for classification tasks and RMSE for regression tasks. For aggregating results across datasets, we compute:
\begin{itemize}[leftmargin=*,nosep]
	\item \textbf{Improvability}: The relative error gap to the best method, using $1 - \text{accuracy}$  (classification) and RMSE (regression).
	\item \textbf{Elo}: Pairwise comparison-based rating using accuracy (classification) and negative RMSE (regression).
	\item \textbf{Average rank}: Mean rank across datasets based on accuracy (classification) or RMSE (regression).
\end{itemize}

TALENT also provides supplementary metrics including log-loss and AUC for classification, and MAE and $R^2$ for regression. In the following subsections, we present detailed results stratified by task type and dataset characteristics.

Note that, for a fair comparison, we exclude the development datasets from the main paper used for the development of \name.

\subsection{Results on all datasets}

\name achieves the best average rank of 4.66, outperforming RealTabPFN-2.5 (5.11) and TabPFN-2.5 (5.45). The pairwise win rates further confirm the dominance of \name : 62\% against RealTabPFN-2.5 and 65\% against TabPFN-2.5.

On TALENT, we evaluate both RealTabPFN-2.5 (fine-tuned on real data) and TabPFN-2.5 (not fine-tuned). RealTabPFN-2.5 outperforms TabPFN-2.5 (5.11 vs.\ 5.45), demonstrating that fine-tuning on real-world data provides measurable benefits. Note that TabArena does not report results for TabPFN-2.5, making TALENT a valuable benchmark for isolating the effect of fine-tuning on TabPFN-2.5.

\name substantially outperforms other tabular foundation models. LimiX and TabPFNv2 achieve average ranks of 8.34 and 8.82, respectively, nearly twice that of \name.

\begin{figure}[htbp]
    \centering
    \includegraphics[width=0.8\linewidth]{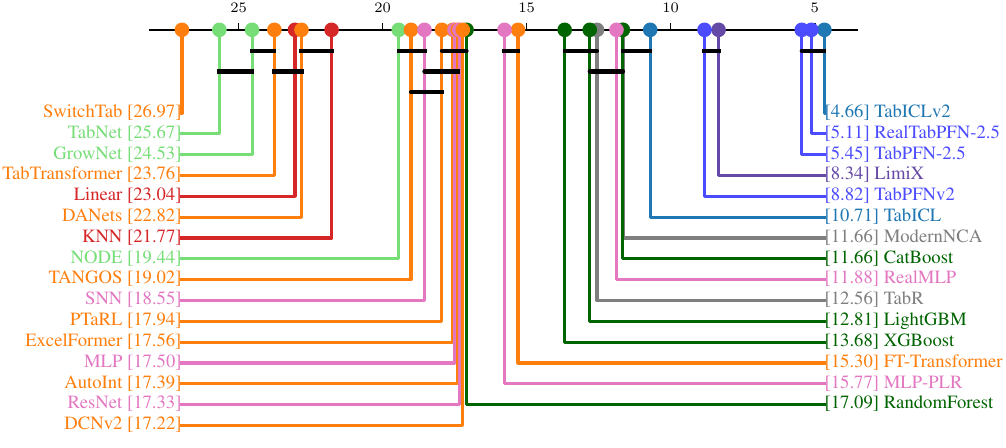}
    \caption{\textbf{Critical difference diagram on the TALENT benchmark.}}
    \label{fig:talent_cdd}
\end{figure}

\begin{figure}[htbp]
    \centering
    \includegraphics[width=0.85\linewidth]{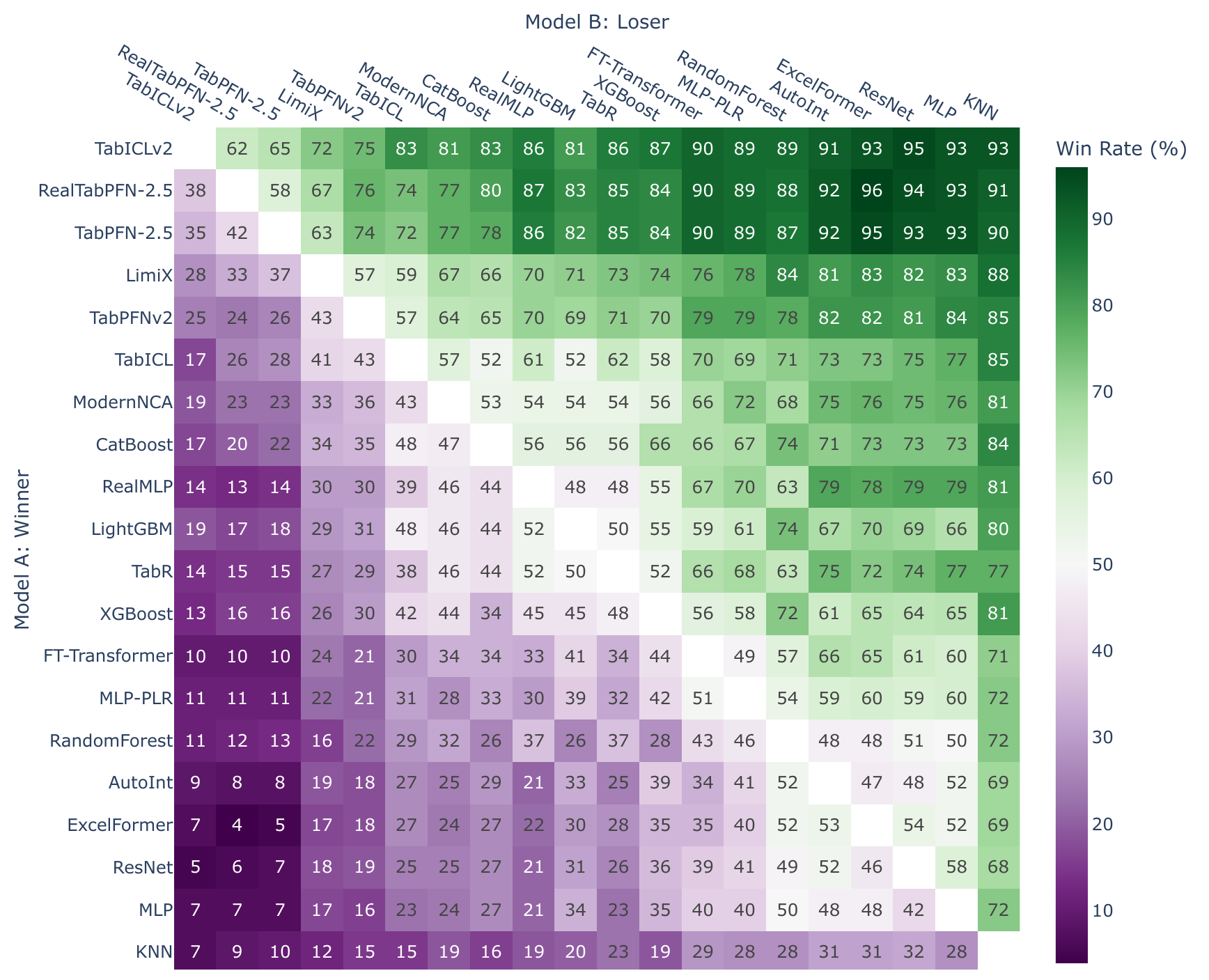}
    \caption{\textbf{Win rate matrix on the TALENT benchmark.}}
    \label{fig:talent_winrate_matrix}
\end{figure}

\begin{figure}[htbp]
    \centering
    \includegraphics[width=0.55\linewidth]{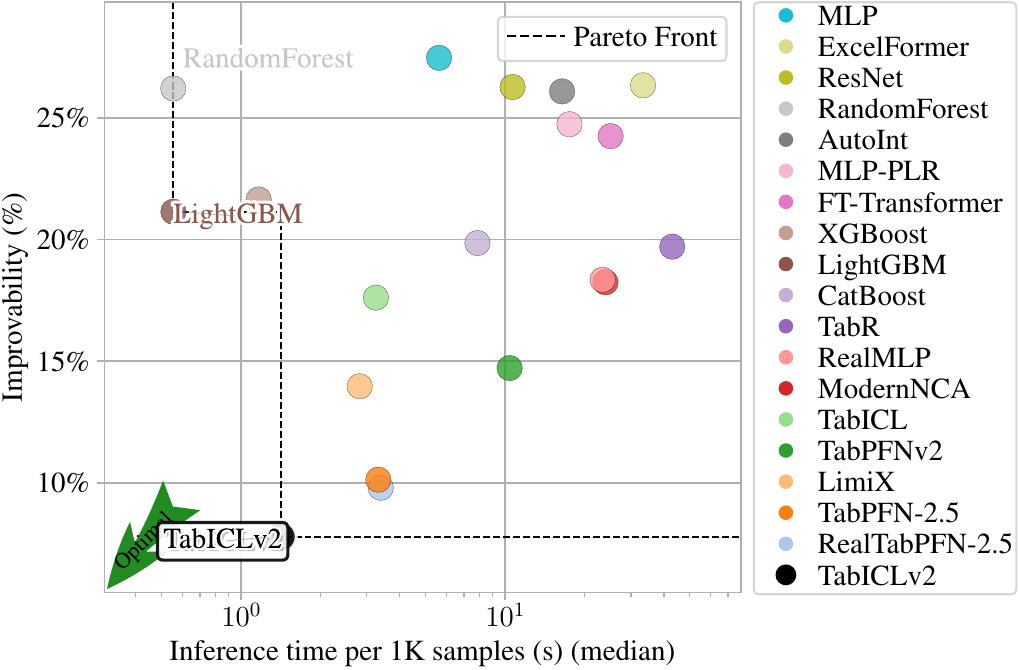}
    \caption{\textbf{Pareto front of improvability and inference time on the TALENT benchmark.}}
    \label{fig:talent_pareto_front_imp_infer}
\end{figure}

\begin{figure}[htbp]
    \centering
    \includegraphics[width=0.55\linewidth]{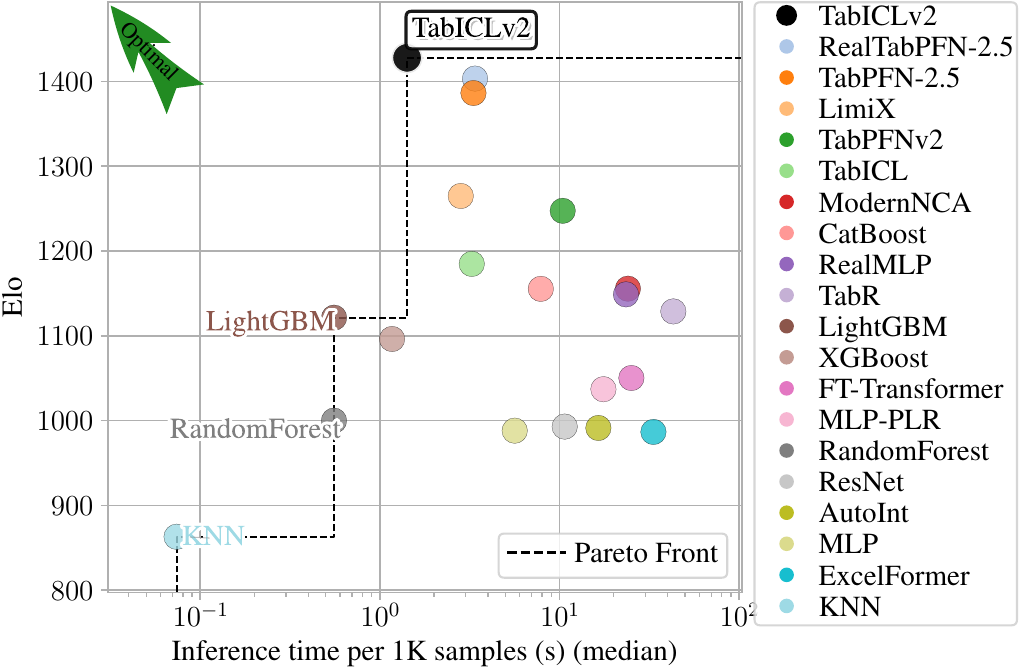}
    \caption{\textbf{Pareto front of Elo and inference time on the TALENT benchmark.}}
    \label{fig:talent_pareto_front_elo_infer}
\end{figure}

\FloatBarrier
\subsection{Results on binary classification datasets}

For binary classification datasets, \name (4.98) is essentially tied with RealTabPFN-2.5 (4.82) on accuracy, both outperforming TabPFN-2.5 (5.28), TabICL (8.48), and LimiX (8.61). However, on AUC and log-loss, \name achieves a clear lead:
\begin{itemize}[leftmargin=*,nosep]
	\item \textbf{AUC}: \name (3.31) vs.\ RealTabPFN-2.5 (4.62) vs.\ TabPFN-2.5 (5.45)
	\item \textbf{log-loss}: \name (2.83) vs.\ RealTabPFN-2.5 (3.78) vs.\ TabPFN-2.5 (4.31)
\end{itemize}

The gap between accuracy and probabilistic metrics (AUC, log-loss) reveals important differences in model behavior. Accuracy only evaluates predictions at a fixed decision threshold, whereas AUC measures ranking quality across all thresholds and log-loss evaluates probability calibration. The fact that \name outperforms RealTabPFN-2.5 on AUC and log-loss while matching on accuracy suggests that \name produces better probability estimates. This is valuable in practice, where better probabilities enable reliable uncertainty quantification and informed decision-making beyond simple class predictions.

\begin{figure}[htbp]
	\centering
	\begin{subfigure}{0.75\linewidth}
		\includegraphics[width=\textwidth]{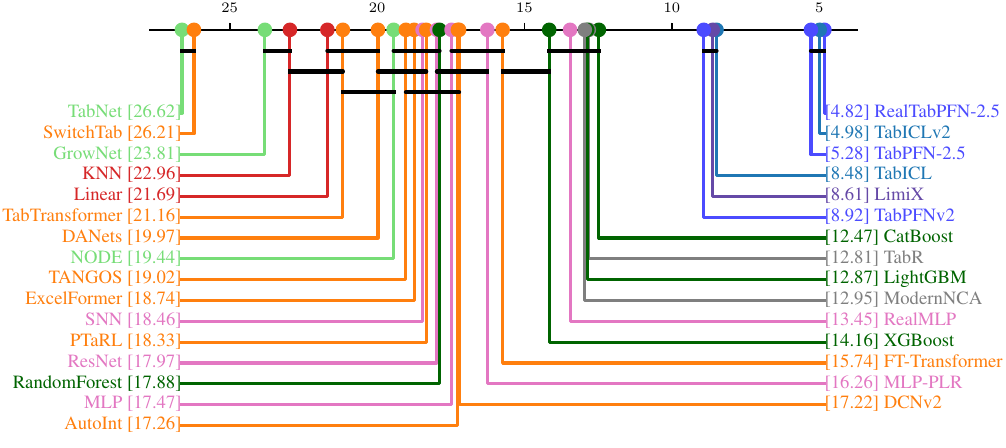}
		\caption{Accuracy}
	\end{subfigure}
        \\
	\begin{subfigure}{0.75\linewidth}
		\includegraphics[width=\textwidth]{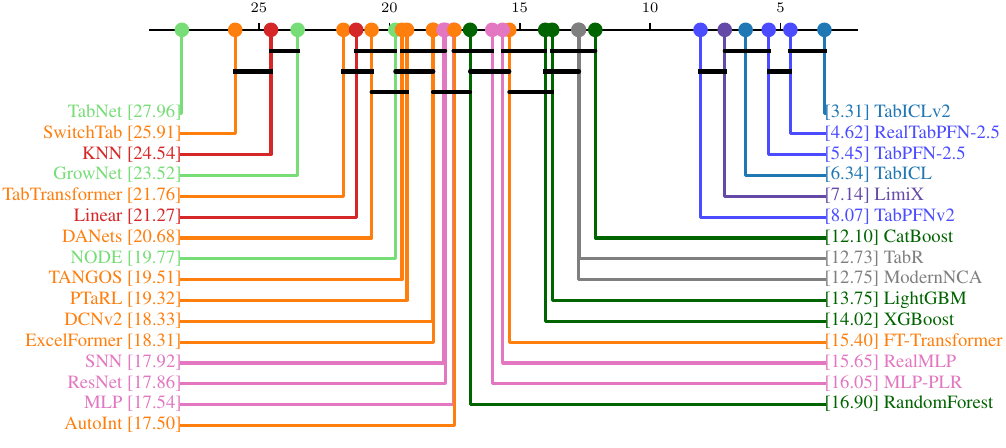}
		\caption{AUC}
	\end{subfigure}
        \\
	\begin{subfigure}{0.75\linewidth}
		\includegraphics[width=\textwidth]{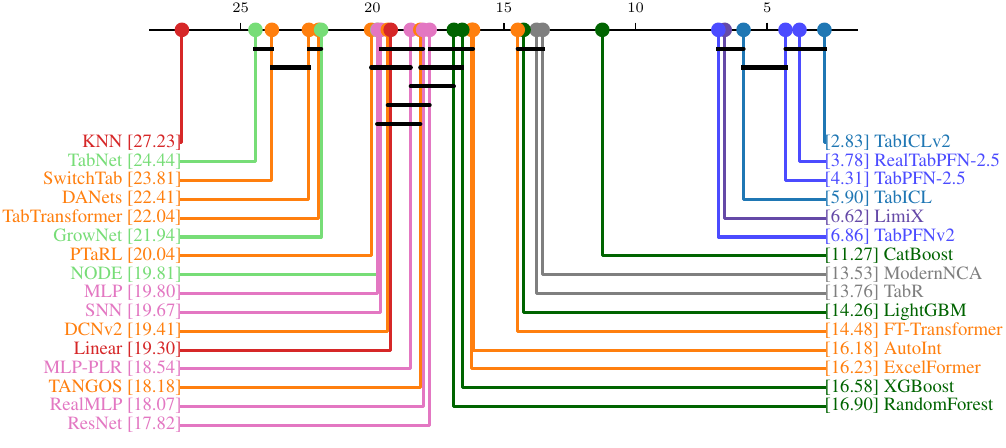}
		\caption{Log-Loss}
	\end{subfigure}
	\caption{\textbf{Critical difference diagram on binary classification datasets of the TALENT benchmark.}} \label{fig:talent_cdd_binary}
\end{figure}

\begin{figure}[htbp]
    \centering
    \includegraphics[width=0.55\linewidth]{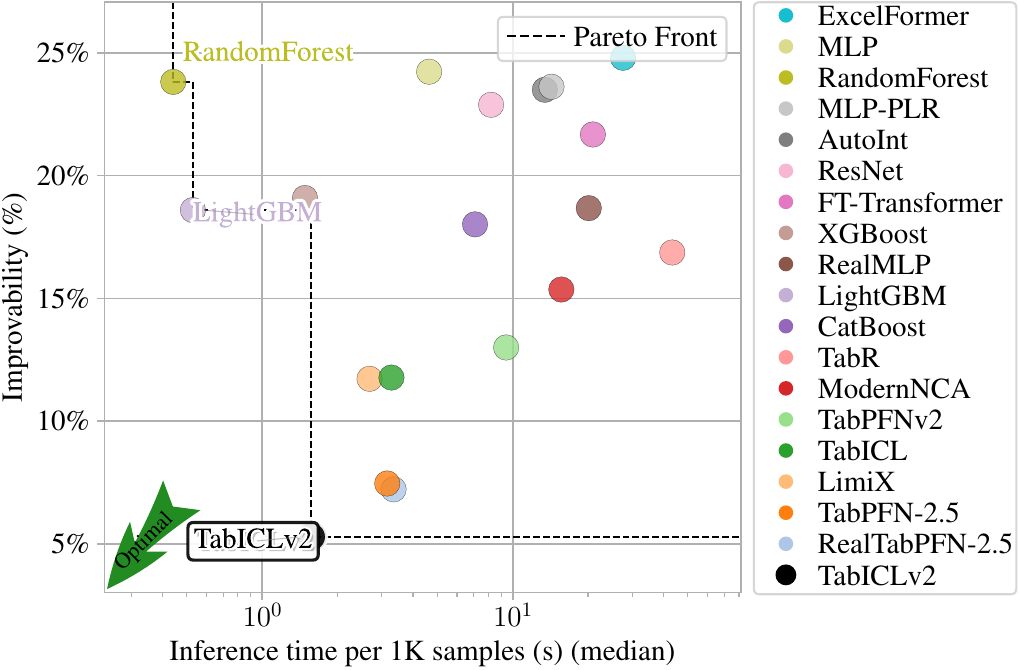}
    \caption{\textbf{Pareto front of improvability and inference time on binary classification datasets of the TALENT benchmark.}}
    \label{fig:talent_pareto_front_imp_infer_binary}
\end{figure}

\begin{figure}[htbp]
    \centering
    \includegraphics[width=0.55\linewidth]{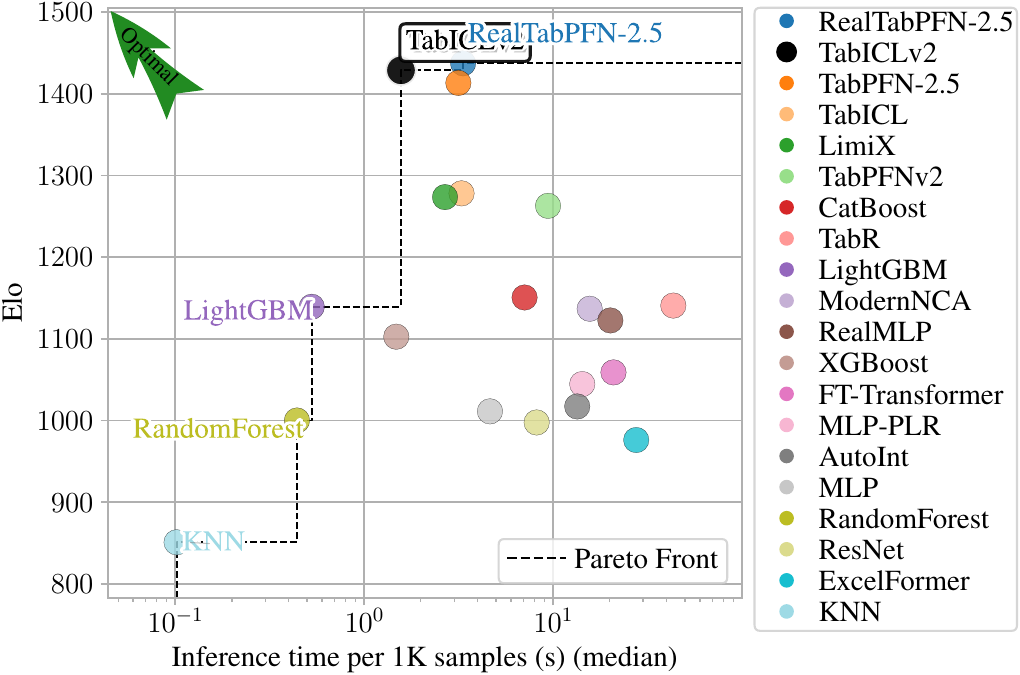}
    \caption{\textbf{Pareto front of Elo and inference time on binary classification datasets of the TALENT benchmark.}}
    \label{fig:talent_pareto_front_elo_infer_binary}
\end{figure}

\FloatBarrier
\subsection{Results on multiclass classification datasets ($\leq$ 10 classes)}

Unlike binary classification where \name and RealTabPFN-2.5 are comparable on accuracy, \name achieves clear superiority on multiclass tasks across all evaluation metrics:
\begin{itemize}[leftmargin=*,nosep]
	\item \textbf{Accuracy}: \name (4.58) vs.\ RealTabPFN-2.5 (5.64)
	\item \textbf{AUC}: \name (3.38) vs.\ RealTabPFN-2.5 (5.20)
	\item \textbf{Log-loss}: \name (2.67) vs.\ RealTabPFN-2.5 (4.48)
\end{itemize}

\begin{figure}[htbp]
	\centering
	\begin{subfigure}{0.75\linewidth}
		\includegraphics[width=\textwidth]{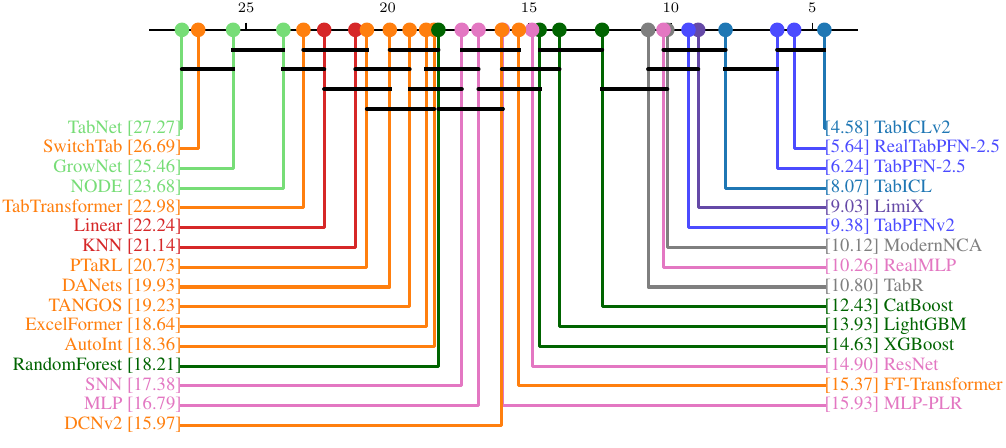}
		\caption{Accuracy}
	\end{subfigure}
        \\
	\begin{subfigure}{0.75\linewidth}
		\includegraphics[width=\textwidth]{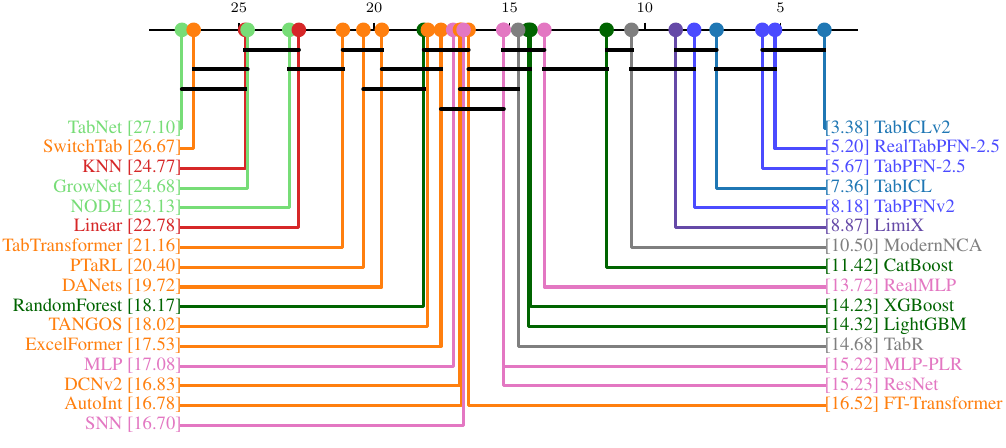}
		\caption{AUC}
	\end{subfigure}
        \\
	\begin{subfigure}{0.75\linewidth}
		\includegraphics[width=\textwidth]{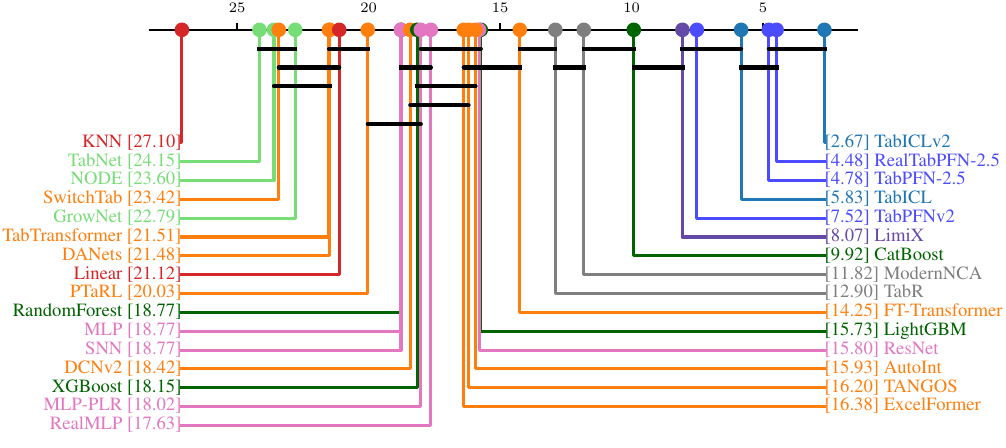}
		\caption{Log-Loss}
	\end{subfigure}
	\caption{\textbf{Critical difference diagram on multiclass classification datasets ($\leq$ 10 classes) of the TALENT benchmark.}} \label{fig:talent_cdd_multi}
\end{figure}

\begin{figure}[htbp]
    \centering
    \includegraphics[width=0.55\linewidth]{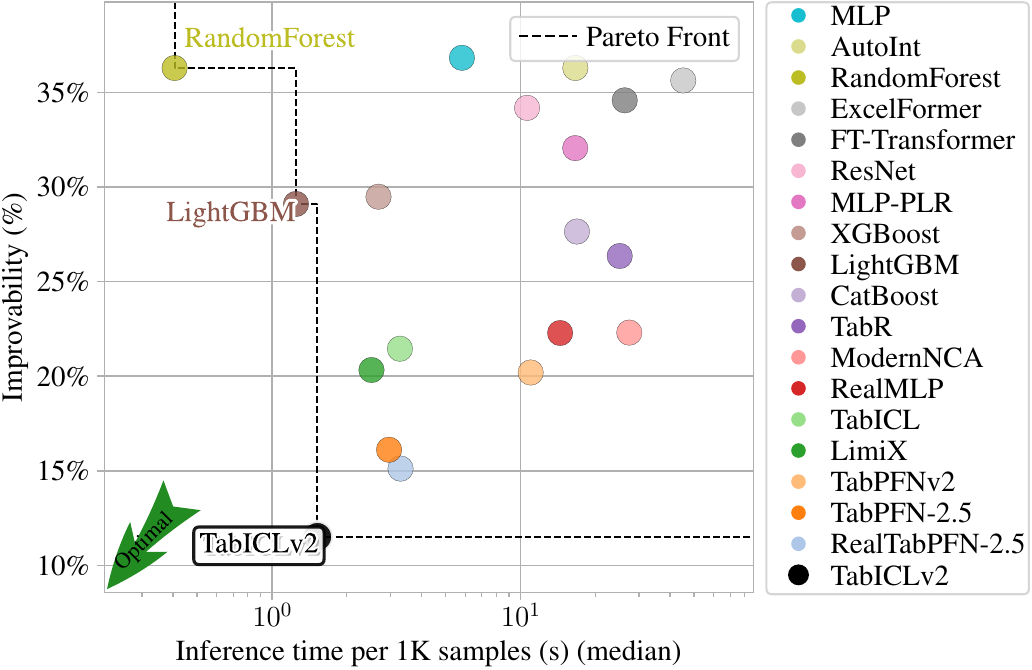}
    \caption{\textbf{Pareto front of improvability and inference time on multiclass classification datasets of the TALENT benchmark.}}
    \label{fig:talent_pareto_front_imp_infer_multi}
\end{figure}

\begin{figure}[htbp]
    \centering
    \includegraphics[width=0.55\linewidth]{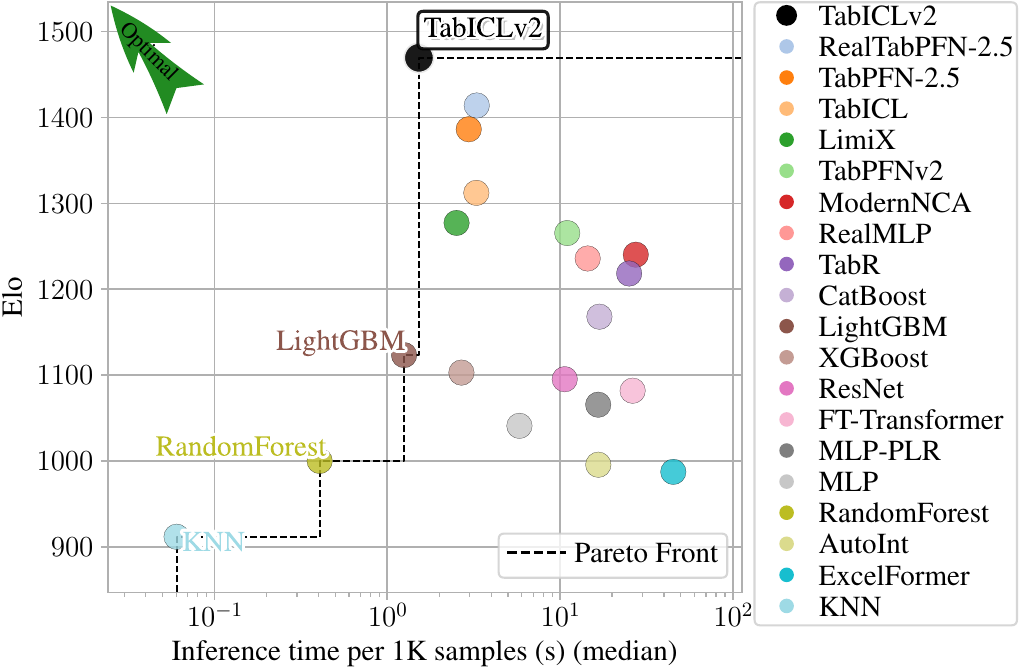}
    \caption{\textbf{Pareto front of Elo and inference time on multiclass classification datasets ($\leq$ 10 classes) of the TALENT benchmark.}}
    \label{fig:talent_pareto_front_elo_infer_multi}
\end{figure}

\FloatBarrier
\subsection{Results on multiclass classification datasets ($>$ 10 classes)}

This section presents results on 12 multiclass classification datasets with more than 10 classes in TALENT. \name clearly outperforms RealTabPFN-2.5 on many-class classification. Here, ECOC refers to the error-correcting output codes wrapper from TabPFNv2~\citep{tabpfnv2}. \name achieves strong performance both with the ECOC wrapper and with its native many-class handling via mixed-radix ensembling. Both variants outperform RealTabPFN-2.5-ECOC.

\begin{figure}[htbp]
	\centering
	\begin{subfigure}{0.75\linewidth}
		\includegraphics[width=\textwidth]{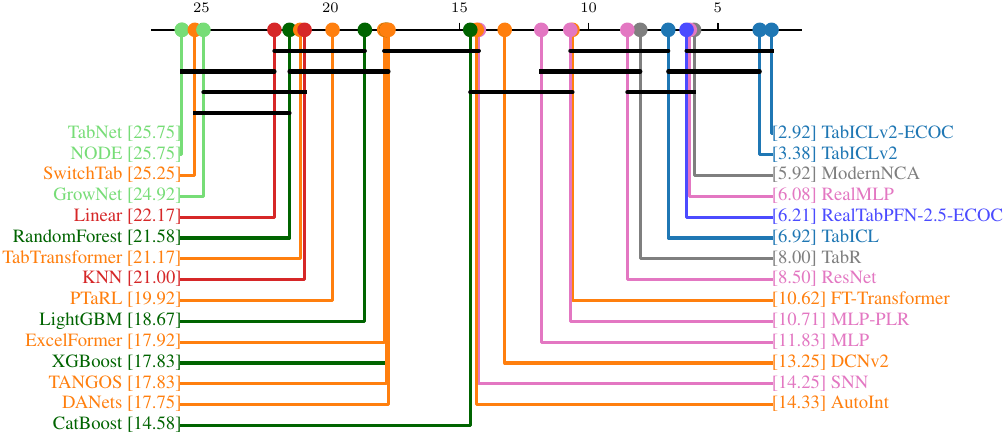}
		\caption{Accuracy}
	\end{subfigure}
        \\
	\begin{subfigure}{0.75\linewidth}
		\includegraphics[width=\textwidth]{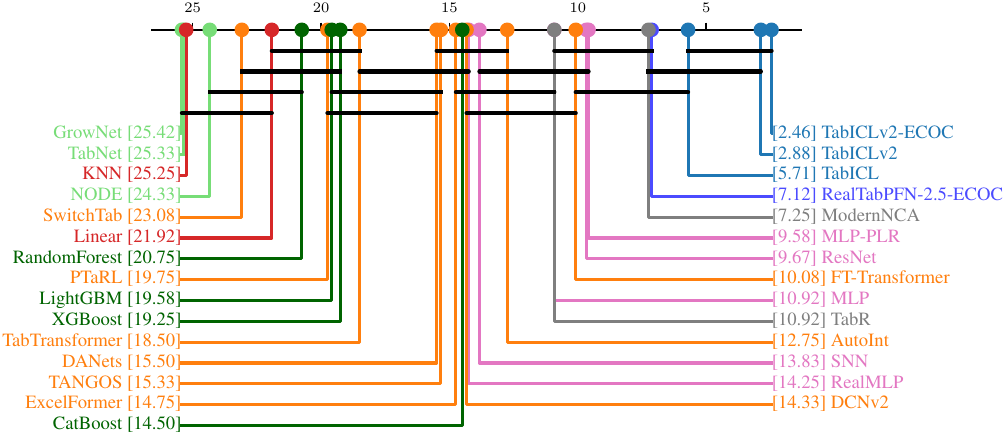}
		\caption{AUC}
	\end{subfigure}
        \\
	\begin{subfigure}{0.75\linewidth}
		\includegraphics[width=\textwidth]{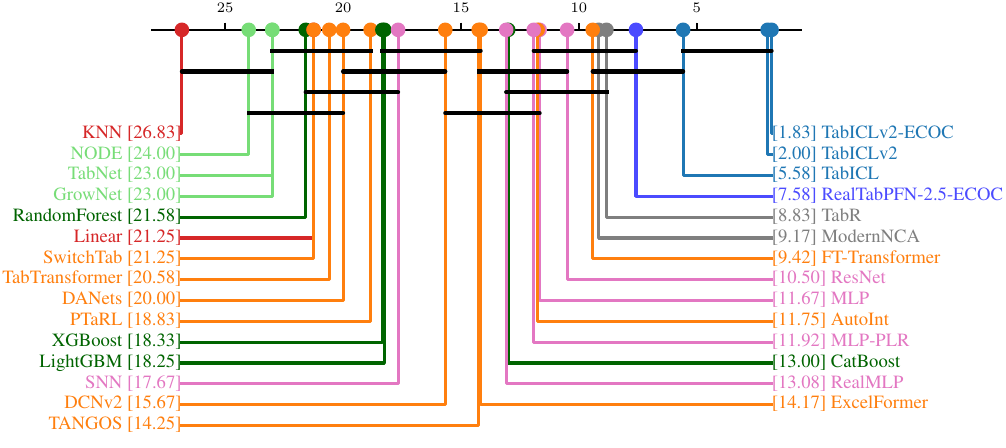}
		\caption{Log-Loss}
	\end{subfigure}
	\caption{\textbf{Critical difference diagram on multiclass classification datasets ($>$ 10 classes) of the TALENT benchmark.}} \label{fig:talent_cdd_multi_plus10}
\end{figure}

\FloatBarrier
\subsection{Results on regression datasets}

\name outperforms TabPFN-2.5 on RMSE and $R^2$, while TabPFN-2.5 has a slight edge on MAE. RMSE penalizes large errors, making it sensitive to outliers and tail behavior, while MAE treats all errors equally. The quantile regression of \name, trained with pinball loss across 999 quantiles, is designed to capture the full conditional distribution rather than optimizing for a single point estimate. This distributional focus leads to better performance on RMSE and $R^2$, which reward accurate modeling of variance and extreme values. The slight disadvantage on MAE suggests that the bin-based approach of TabPFN-2.5 may be marginally better optimized for median prediction, though the difference is small (4.43 vs.\ 4.63).

\begin{figure}[htbp]
	\centering
	\begin{subfigure}{0.75\linewidth}
		\includegraphics[width=\textwidth]{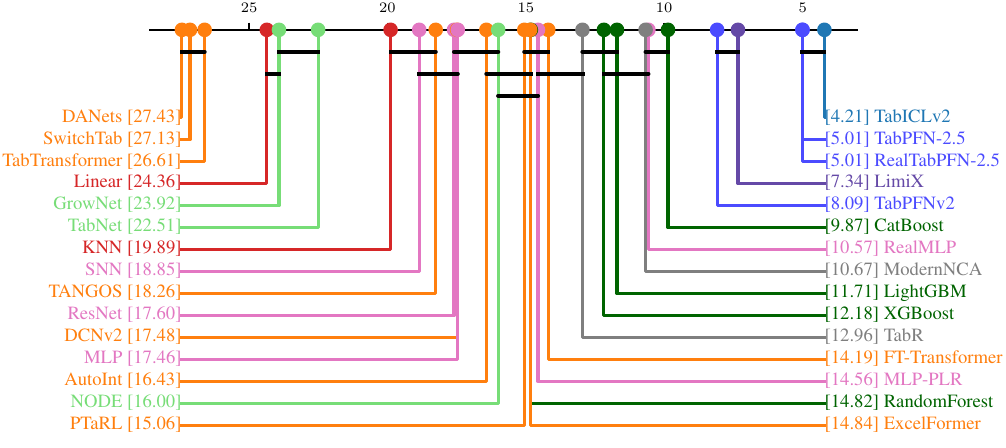}
		\caption{RMSE}
	\end{subfigure}
        \\
	\begin{subfigure}{0.75\linewidth}
		\includegraphics[width=\textwidth]{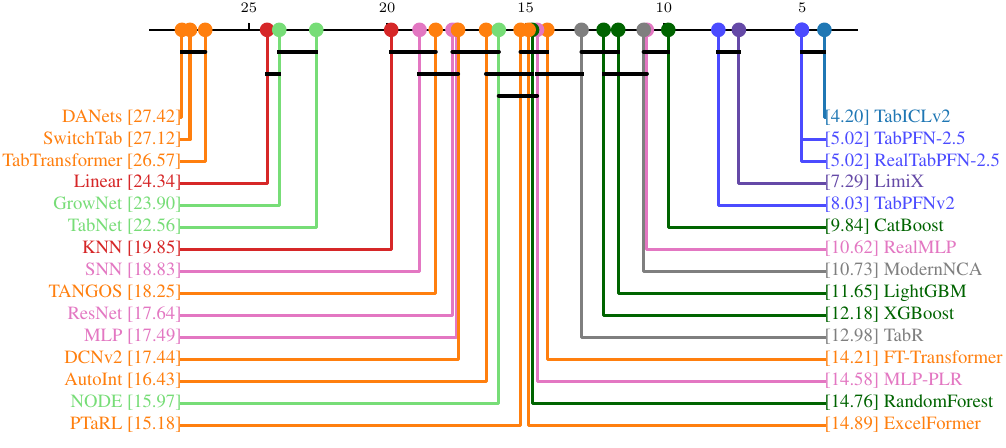}
		\caption{R2}
	\end{subfigure}
        \\
	\begin{subfigure}{0.75\linewidth}
		\includegraphics[width=\textwidth]{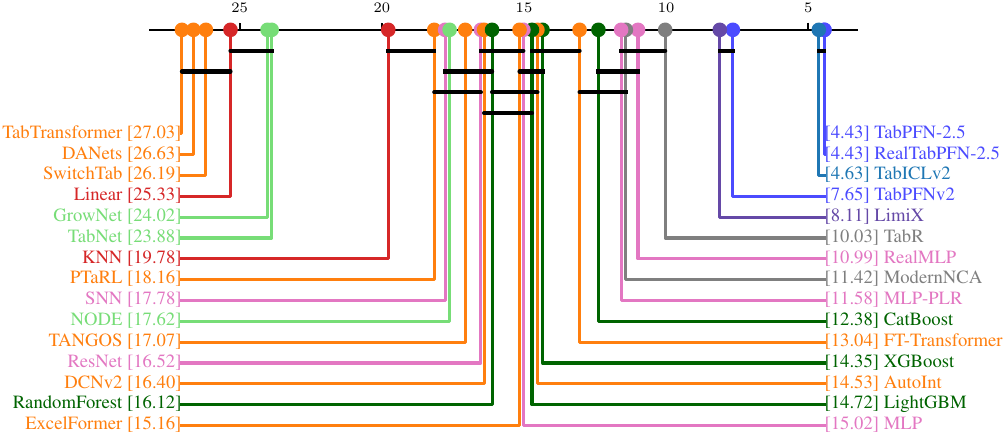}
		\caption{MAE}
	\end{subfigure}
	\caption{\textbf{Critical difference diagram on regression datasets of the TALENT benchmark.}} \label{fig:talent_cdd_reg}
\end{figure}

\begin{figure}[htbp]
    \centering
    \includegraphics[width=0.55\linewidth]{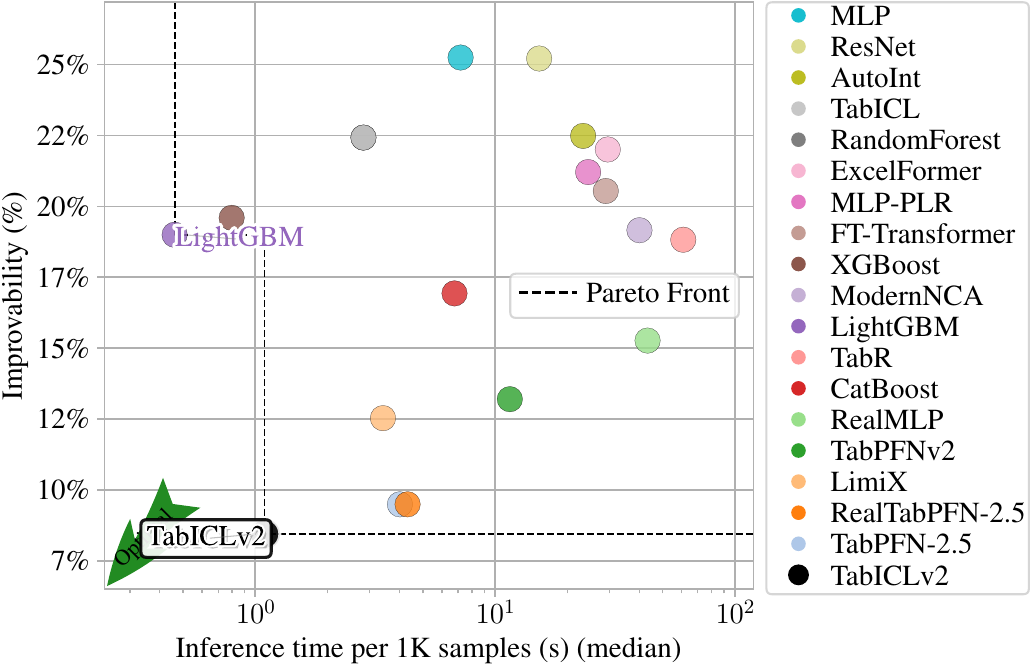}
    \caption{\textbf{Pareto front of improvability and inference time on regression datasets of the TALENT benchmark.}}
    \label{fig:talent_pareto_front_imp_infer_reg}
\end{figure}

\begin{figure}[htbp]
    \centering
    \includegraphics[width=0.55\linewidth]{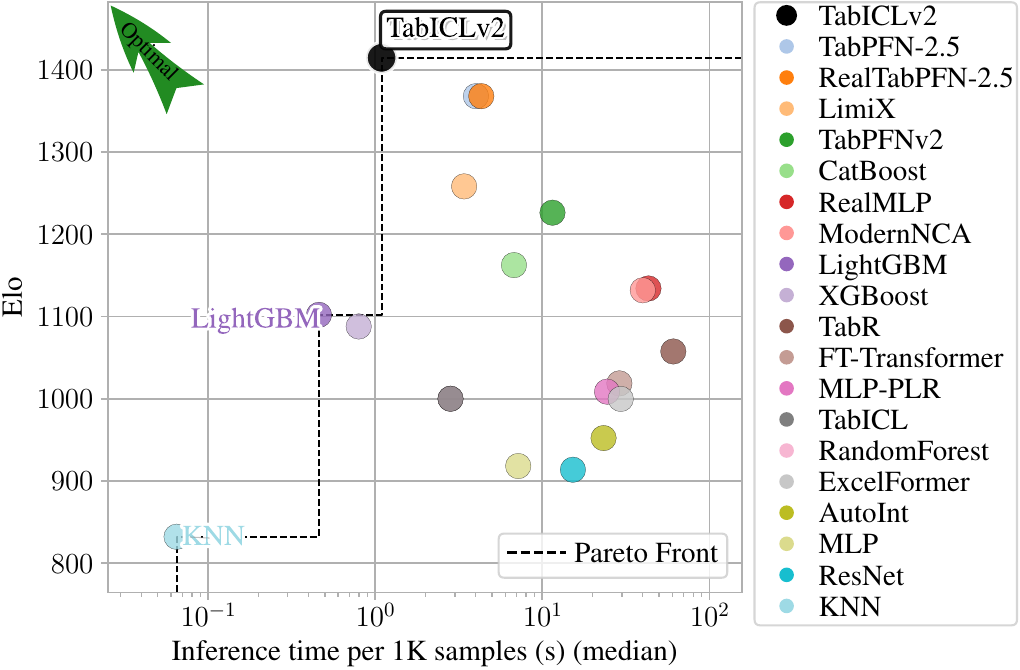}
    \caption{\textbf{Pareto front of Elo and inference time on regression datasets of the TALENT benchmark.}}
    \label{fig:talent_pareto_front_elo_infer_reg}
\end{figure}

\FloatBarrier
\subsection{Results on small datasets with less than 10K samples}

\name and RealTabPFN-2.5 achieve comparable performance on small datasets, the regime where tabular foundation models are originally designed to excel.

\begin{figure}[htbp]
    \centering
    \includegraphics[width=0.8\linewidth]{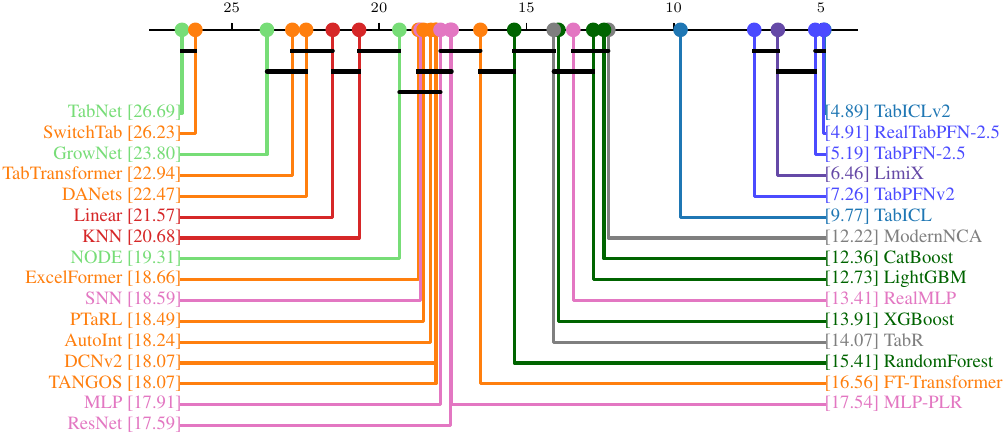}
    \caption{\textbf{Critical difference diagram on small datasets of the TALENT benchmark.}}
    \label{fig:talent_cdd_small}
\end{figure}

\begin{figure}[htbp]
    \centering
    \includegraphics[width=0.55\linewidth]{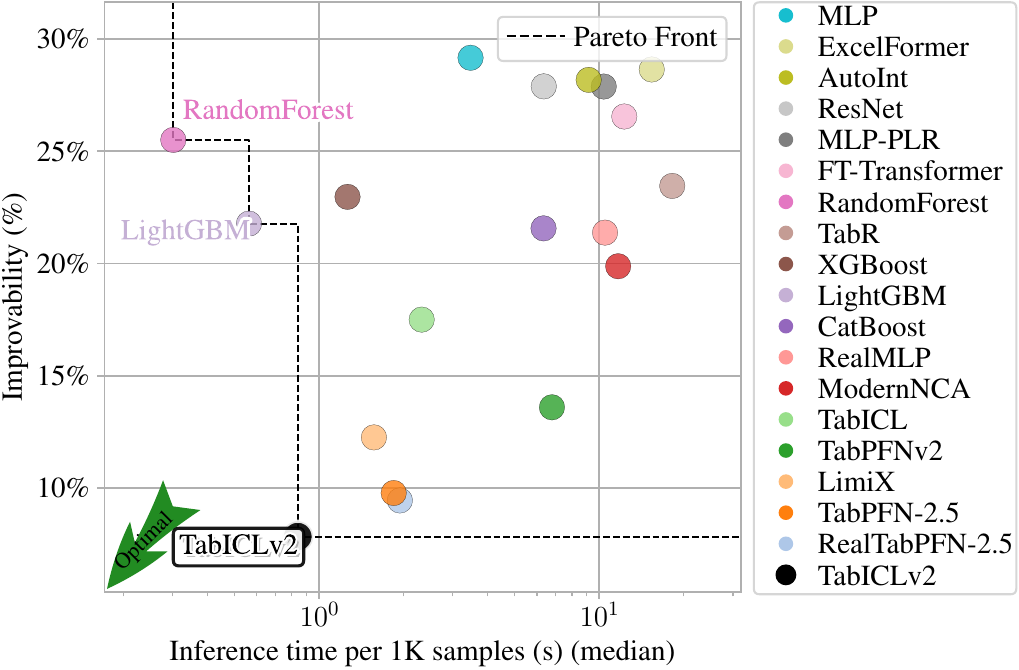}
    \caption{\textbf{Pareto front of improvability and inference time on small datasets of the TALENT benchmark.}}
    \label{fig:talent_pareto_front_imp_infer_small}
\end{figure}

\begin{figure}[htbp]
    \centering
    \includegraphics[width=0.55\linewidth]{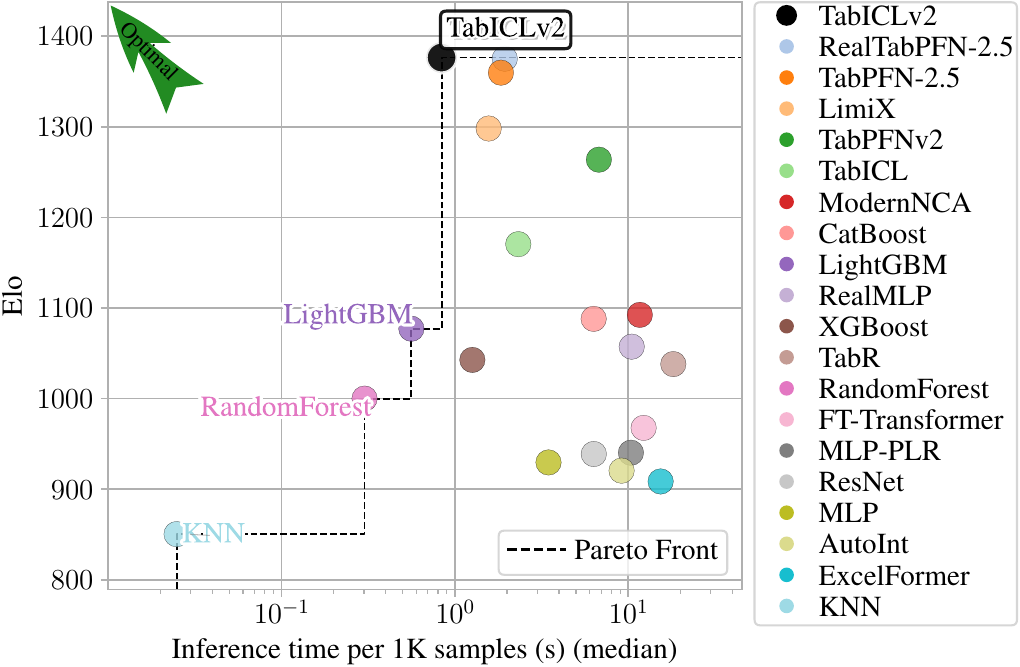}
    \caption{\textbf{Pareto front of Elo and inference time on small datasets of the TALENT benchmark.}}
    \label{fig:talent_pareto_front_elo_infer_small}
\end{figure}

\FloatBarrier
\subsection{Results on large datasets with more than 10K samples}

On large datasets, \name demonstrates a clear advantage over both RealTabPFN-2.5 and TabPFN-2.5.

\begin{figure}[htbp]
    \centering
    \includegraphics[width=0.8\linewidth]{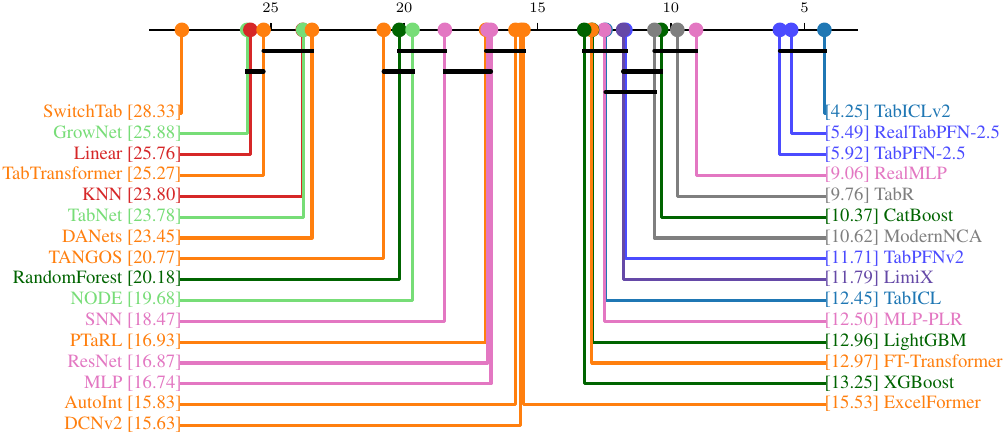}
    \caption{\textbf{Critical difference diagram on large datasets of the TALENT benchmark.}}
    \label{fig:talent_cdd_large}
\end{figure}

\begin{figure}[htbp]
    \centering
    \includegraphics[width=0.55\linewidth]{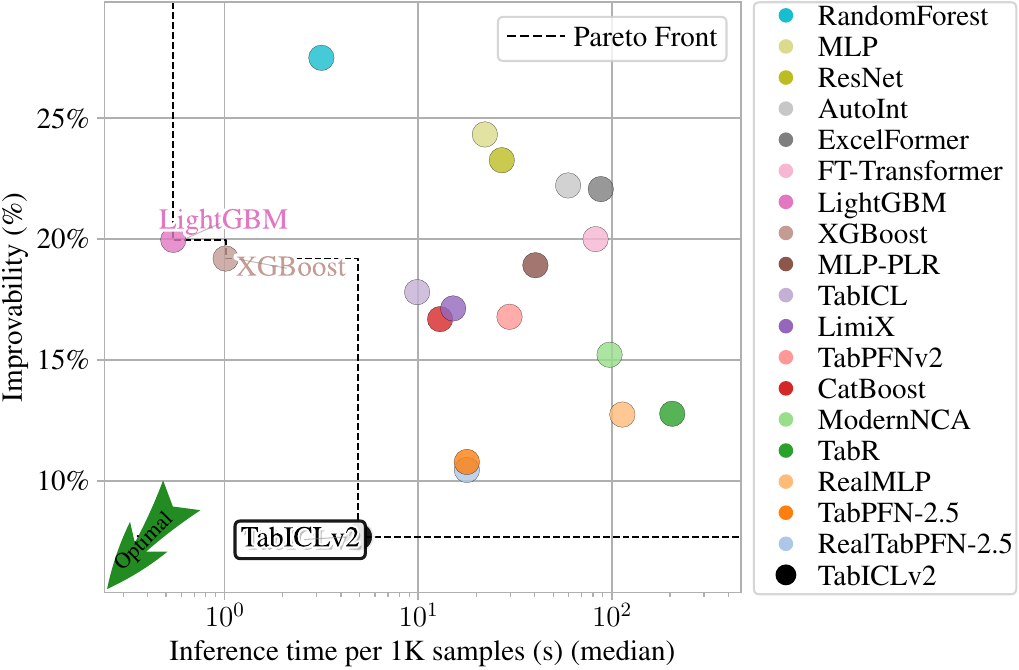}
    \caption{\textbf{Pareto front of improvability and inference time on large datasets of the TALENT benchmark.}}
    \label{fig:talent_pareto_front_imp_infer_large}
\end{figure}

\begin{figure}[htbp]
    \centering
    \includegraphics[width=0.55\linewidth]{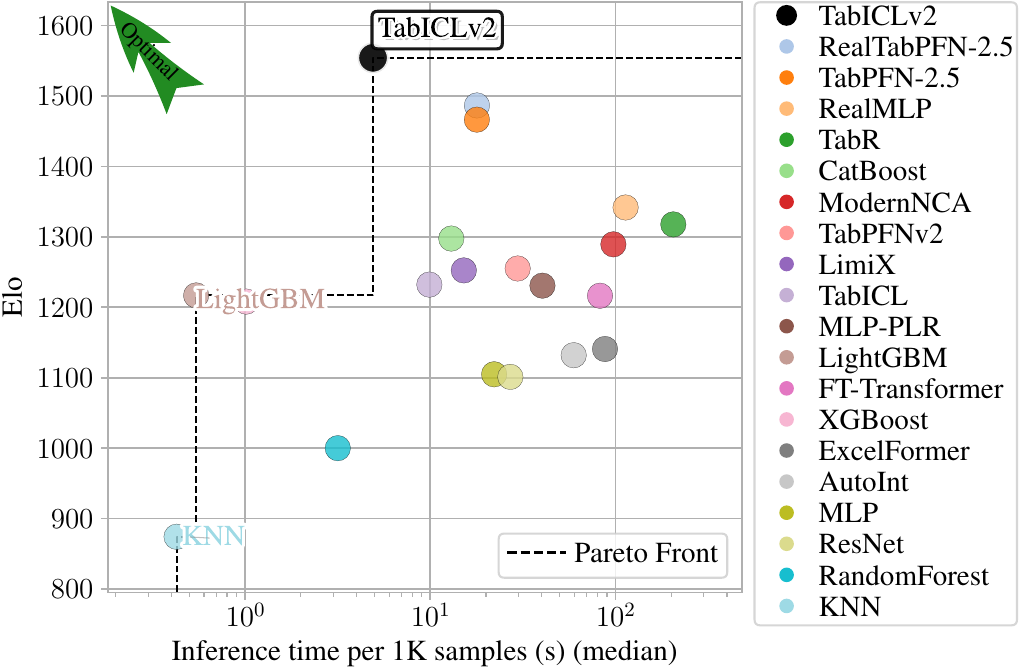}
    \caption{\textbf{Pareto front of Elo and inference time on large datasets of the TALENT benchmark.}}
    \label{fig:talent_pareto_front_elo_infer_large}
\end{figure}

\FloatBarrier
\subsection{Model rankings with respect to meta-features}

\begin{figure}[htbp]
	\centering
	\begin{subfigure}{0.49\linewidth}
		\includegraphics[width=\textwidth]{figures/perf_vs_meta_features/rank_vs_n_samples.pdf}
		\caption{}
	\end{subfigure}
	\begin{subfigure}{0.49\linewidth}
		\includegraphics[width=\textwidth]{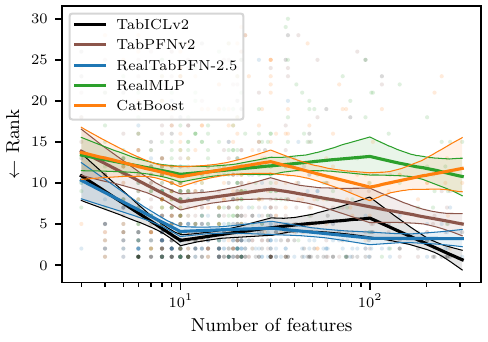}
		\caption{}
	\end{subfigure}
	\caption{\textbf{Model rankings with respect to meta-features across all datasets of the TALENT benchmark.}} \label{fig:talent_rank_wrt_meta}
\end{figure}

\begin{figure}[htbp]
	\centering
	\begin{subfigure}{0.49\linewidth}
		\includegraphics[width=\textwidth]{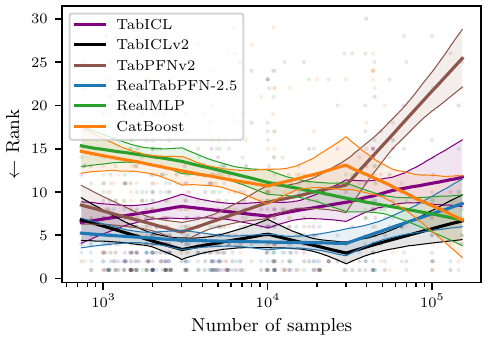}
		\caption{}
	\end{subfigure}
	\begin{subfigure}{0.49\linewidth}
		\includegraphics[width=\textwidth]{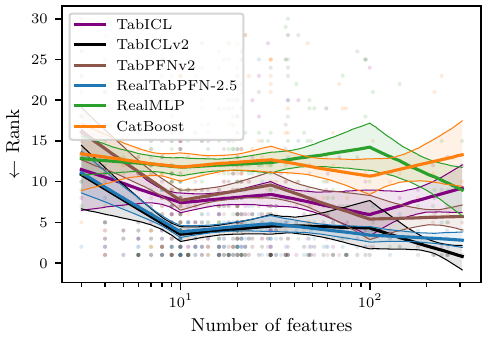}
		\caption{}
	\end{subfigure}
    \\
    \centering
	\begin{subfigure}{0.49\linewidth}
		\includegraphics[width=\textwidth]{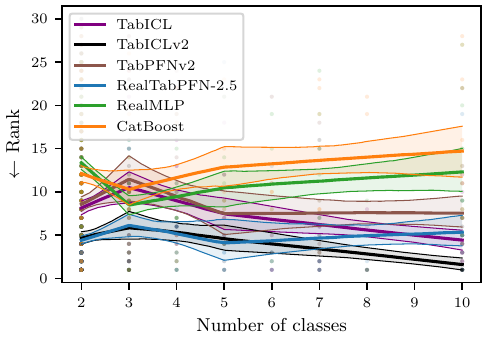}
		\caption{}
	\end{subfigure}
	\begin{subfigure}{0.49\linewidth}
		\includegraphics[width=\textwidth]{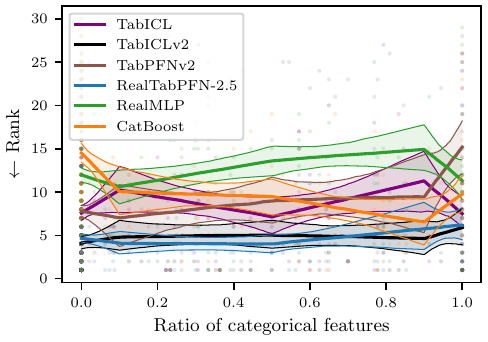}
		\caption{}
	\end{subfigure}
	\caption{\textbf{Model rankings with respect to meta-features across classification datasets of the TALENT benchmark.}} \label{fig:talent_rank_wrt_meta_clf}
\end{figure}

\begin{figure}[htbp]
	\centering
	\begin{subfigure}{0.49\linewidth}
		\includegraphics[width=\textwidth]{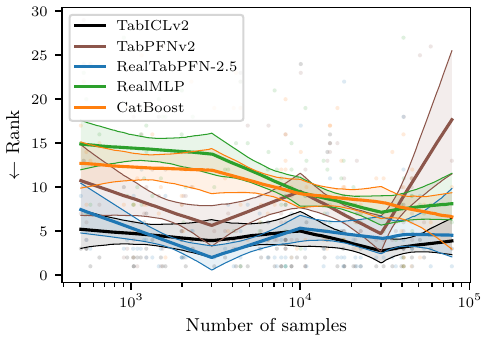}
		\caption{}
	\end{subfigure}
	\begin{subfigure}{0.49\linewidth}
		\includegraphics[width=\textwidth]{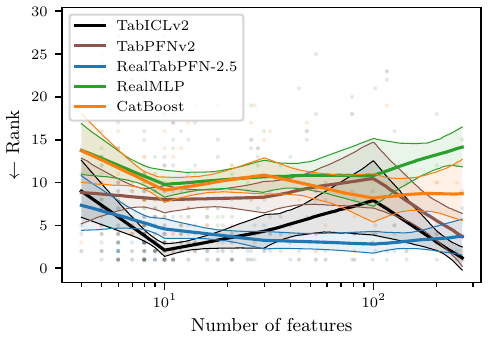}
		\caption{}
	\end{subfigure}
	\caption{\textbf{Model rankings with respect to meta-features across regression datasets of the TALENT benchmark.}} \label{fig:talent_rank_wrt_meta_reg}
\end{figure}

\FloatBarrier

\section{Other Experiments} \label{sec:appendix:other_experiments}

\subsection{Effect of ensemble size} \label{sec:appendix:ensemble_size}

We study the effect of ensemble size $E \in \{1, 2, 4, 8, 16, 32\}$ on the TALENT benchmark. Gains are reported relative to $E=1$ averaged over all datasets in \Cref{tab:ensemble_size}.
Performance improves with ensemble size but with diminishing returns; most gains are captured by $E = 8$.

\begin{table}[t]
    \centering
    \caption{\textbf{Effect of ensemble size (number of estimators) on the TALENT benchmark relative to size 1.}}
    \label{tab:ensemble_size}
    \begin{tabular}{cccc}
    \toprule
    Ensemble size & Classification accuracy gain & Regression RMSE gain & Inference time \\
    \midrule
1 & — & — & 0.2s \\
2 & +0.03\% & +0.09\% & 0.5s \\
4 & +0.11\% & +0.77\% & 0.7s \\
8 & +0.20\% & +1.22\% & 0.9s \\
16 & +0.23\% & +1.38\% & 1.5s \\
32 & +0.26\% & +1.46\% & 2.6s \\
\bottomrule
    \end{tabular}
    
\end{table}

\subsection{Direct statistical comparison to RealTabPFN-2.5} \label{sec:appendix:statistical_test}

Here, we complement the critical difference diagrams with a direct statistical comparison between TabICLv2 and RealTabPFN-2.5. While the CD diagrams do not show a statistically significant difference between the two methods, they may lack statistical power, for instance due to corrections for multiple comparisons across many methods. We therefore performed a direct binomial test on TALENT to assess whether TabICLv2 achieves a win rate greater than 50\% against RealTabPFN-2.5 / TabPFN-2.5. The results, reported in \Cref{tab:statistical_comparison}, are statistically significant.

\begin{table}
\centering
\caption{\textbf{Results of a binomial comparison test.} We test whether the win rate of TabICLv2 is $>$50\% on the TALENT benchmark (classification or regression) and report the p-value.}
\label{tab:statistical_comparison}
\begin{tabular}{ccc}
\toprule
Comparison & Metric & p-value \\
\midrule
TabICLv2 vs.\ RealTabPFN-2.5 & Accuracy & 0.0018 \\
TabICLv2 vs.\ TabPFN-2.5 & Accuracy & 4.4e-05 \\
TabICLv2 vs.\ RealTabPFN-2.5 & AUC & 6.4e-09 \\
TabICLv2 vs.\ TabPFN-2.5 & AUC & 2.5e-09 \\
TabICLv2 vs.\ RealTabPFN-2.5 & Log-loss & 1.4e-08 \\
TabICLv2 vs.\ TabPFN-2.5 & Log-loss & 1.0e-10 \\
TabICLv2 vs.\ RealTabPFN-2.5 & $R^2$ & 0.0028 \\
TabICLv2 vs.\ TabPFN-2.5 & $R^2$ & 0.0028 \\
\bottomrule
\end{tabular}
\end{table}

Alternatively, a Wilcoxon signed-rank test yields similar results.

\subsection{Few-shot benchmark} \label{sec:appendix:few_shot}

To evaluate TabICLv2 on tiny datasets, we run a simple few-shot benchmark with only 16 training+validation samples and up to 1984 test samples on TALENT through pytabkit (119 regression, 101 binary classification datasets; no multi-class to avoid too many missing classes in the small train+val set). We used 2 outer train/test splits and 4-fold inner cross-validation, ensembling the four fitted models (which worked better than refitting a single model on the full dataset). Gradient-boosted trees and MLPs use early stopping on the validation set based on log-loss (classification) or RMSE (regression). We only use default parameters to avoid overtuning on such small datasets. Note that TabICLv2 has never seen datasets below 300 training samples during pretraining.

\begin{table}
\centering
\caption{\textbf{Few-shot benchmark results on TALENT datasets subsampled to 16 training samples.} Here, TD refers to tuned defaults from pytabkit, Huertas refers to defaults optimized for few-shot learning from \cite{huertas2024gradient}, and RealMLP-TD (TabArena) refers to the version from TabArena without label smoothing and with 8 ensemble members per fold.}
\label{tab:few_shot}
\begin{tabular}{ccc}
\toprule
Method & Classification rank (AUC) & Regression rank (RMSE) \\
\midrule
RealTabPFN-2.5 & 4.00 & 4.18 \\
TabICLv2 & 4.11 & 5.02 \\	
ExtraTrees & 4.49 & 5.52 \\
RandomForest & 4.91 & 6.22 \\
RealMLP-TD (TabArena) & 6.00 & 5.21 \\
LightGBM-Huertas & 6.07 & 7.89 \\
XGBoost-TD & 6.57 & 7.45 \\
CatBoost-TD & 6.76 & 8.61 \\
TabM & 6.96 & 7.13 \\
\bottomrule
\end{tabular}
\end{table}

Results are shown in \Cref{tab:few_shot}. TabICLv2 ranks second only to RealTabPFN-2.5 and outperforms all traditional methods, despite never being pretrained on datasets this small. Performance could likely improve further by including smaller datasets during pretraining.

\subsection{Few-shot comparison to LLMs} \label{sec:appendix:few_shot_llm}

We evaluate LLMs on TALENT using the MachineLearningLM codebase \citep{machinelearninglm}, which provides a complete pipeline: tabular serialization (token-efficient Concat and TabLLM formats), prompt assembly, JSON parsing, and evaluation.

We test Qwen3.5 (4B, 27B, 112B) and Claude Opus 4.6. Due to compute constraints and the cost of proprietary APIs (ca.\ \$14 in Claude API costs for these experiments), we focus on 9 TALENT classification datasets with $\leq$ 2,048 samples, using train sizes $\{8, 16, 64\}$ with random sampling.

Results (average accuracy across 9 datasets) are shown in \Cref{tab:llm_benchmark}. Stronger LLMs yield better tabular accuracy. However, LLMs are only competitive in the very-few-shot regime (8 shots); TabICLv2 dominates from 16 shots onward while being orders of magnitude more efficient.

\begin{table}
\centering
\caption{\textbf{Comparing TabICLv2 to LLMs on few-shot prediction accuracy averaged over 9 TALENT classification datasets.}}
\label{tab:llm_benchmark}
\begin{tabular}{cccccc}
\toprule
Train size & Qwen3.5-4B & Qwen3.5-27B & Qwen3.5-112B & Claude Opus 4.6 & TabICLv2 \\
\midrule
8 & 0.482 & 0.505 & 0.511 & 0.536 & 0.515 \\
16 & 0.492 & 0.526 & 0.531 & 0.541 & 0.586 \\
64 & 0.522 & 0.569 & 0.574 & 0.606 & 0.659 \\
\bottomrule
\end{tabular}
\end{table}

We also study sample selection and serialization with Qwen3.5-27B in \Cref{tab:llm_selection_serialization}. K-center greedy (maximizing feature-space coverage) helps at 16 shots but hurts at 64 shots. The TabLLM format shows no benefit.

\begin{table}
\centering
\caption{\textbf{Effect of sample selection and serialization on Qwen3.5-27B.}}
\label{tab:llm_selection_serialization}
\begin{tabular}{cccc}
\toprule
Train size & Random & K-center greedy & TabLLM format \\
\midrule
16 & 0.526 & 0.556 & 0.521 \\
64 & 0.569 & 0.530 & 0.520 \\
\bottomrule
\end{tabular}
\end{table}

\subsection{Benchmarking TabPFN's Random Forest extension} \label{sec:appendix:rf_extension}

Here, we explore the random forest (RF) extension proposed by \cite{tabpfnv2}, which routes test samples through an RF ensemble and fits the TFM on training subsets at each leaf. This approach showed improvements for TabPFNv2 and TabICL on large datasets \citep[][Appendix G]{tabicl}. We apply it to TabICLv2 on TALENT datasets with $>$10K samples. The results are shown in \Cref{tab:rf_extension}.

\begin{table}
\centering
\caption{\textbf{Evaluating the random-forest extension of \cite{tabpfnv2} on TabICLv2.} Quantities are averaged across TALENT datasets with $>$10K samples.}
\label{tab:rf_extension}
\begin{tabular}{ccc}
\toprule
& TabICLv2 & TabICLv2-RF \\
\midrule
Classification: Accuracy / AUC / Log-loss & 0.857 / 0.897 / 0.306 & 0.848 / 0.891 / 0.327 \\
Regression: R² / RMSE & 0.720 / 2293 & 0.687 / 2846 \\
\bottomrule
\end{tabular}
\end{table}

Despite its positive effect on TabPFNv2 and TabICL, the RF extension degrades TabICLv2's performance. While it reduces peak GPU memory, TabICLv2 already handles large datasets well via our QASSMax. Partitioning data through RF routing hurts by fragmenting useful context. The RF extension is also slow. However, training example selection remains a complementary direction worthy of further investigation.

\subsection{Noise resilience} \label{sec:appendix:noise_resilience}

TabArena and TALENT comprise real-world datasets spanning diverse domains with inherent noise. TabICLv2's strong performance on benchmarks already shows its noise resilience. To further probe this, we selected six low-noise binary classification datasets from TabArena (judged by the best AUC achieved in the TabArena paper) and progressively replaced 0\%--50\% of training labels with random ones. As shown in \Cref{fig:noise_resilience}, the performance of TabICLv2 tends to degrade less than that of tree-based methods.

\begin{figure}
    \centering
    \includegraphics[width=0.49\linewidth]{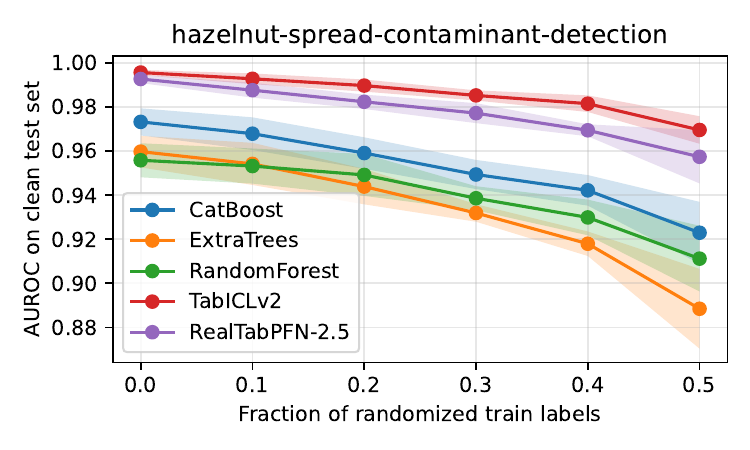}
    \includegraphics[width=0.49\linewidth]{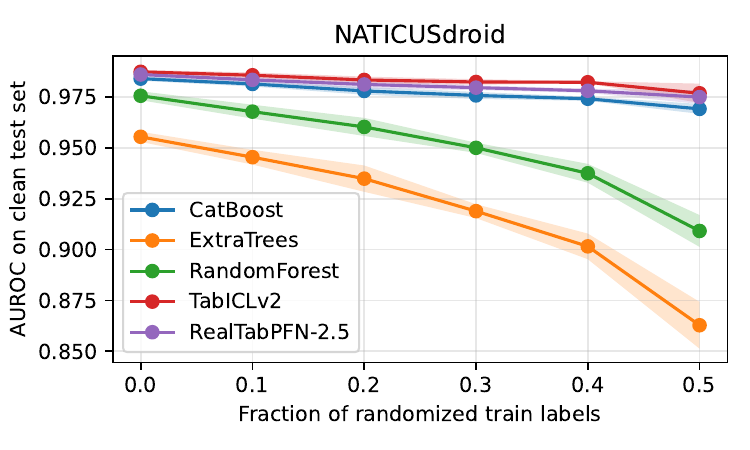}
    \includegraphics[width=0.49\linewidth]{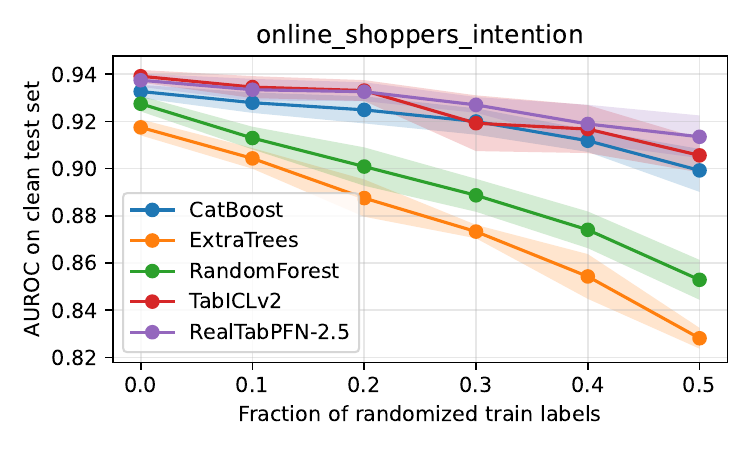}
    \includegraphics[width=0.49\linewidth]{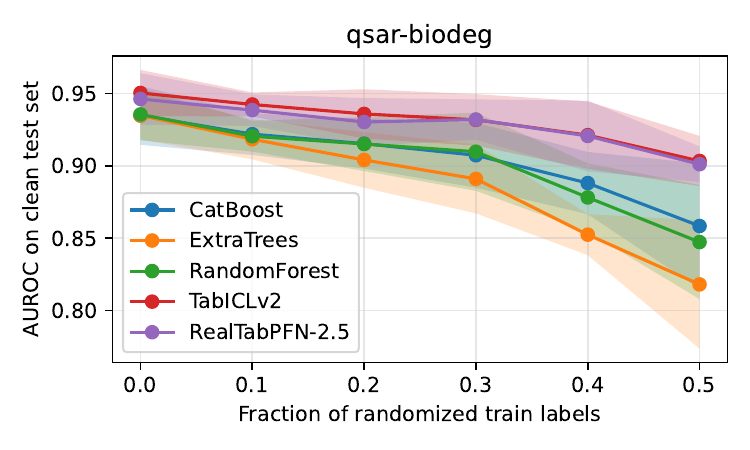}
    \includegraphics[width=0.49\linewidth]{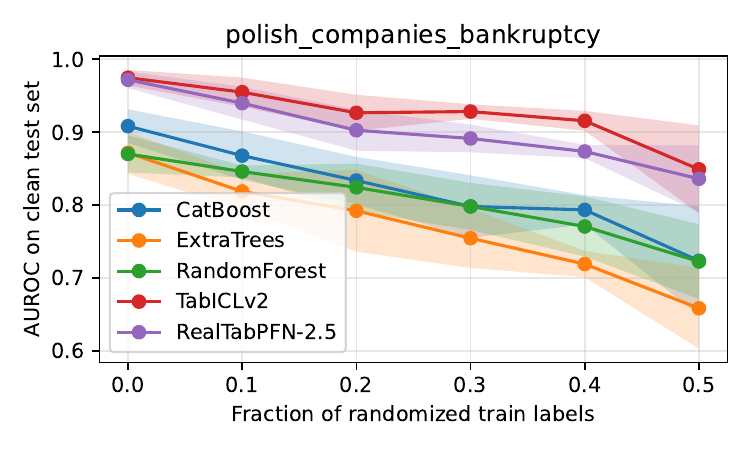}
    \includegraphics[width=0.49\linewidth]{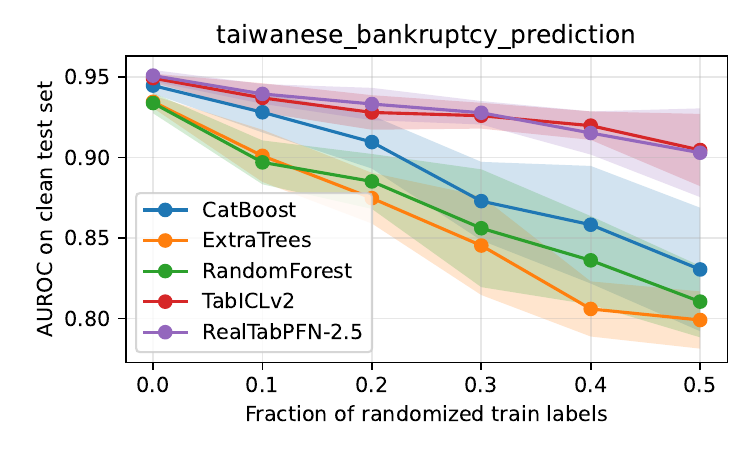}
    \caption{\textbf{Robustness of different methods to label noise in binary classification.} We take the 6 binary datasets with the largest best AUROC from TabArena and replace different fractions of training points by random labels. \name and RealTabPFN-2.5 show higher resistance to noisy labels than tree-based methods. CatBoost uses the tuned default parameters from pytabkit and an ensemble of models from 8-fold cross-validation, early-stopped for AUROC. Shaded areas show the standard deviations across 5 runs with random outer splits using 40\% of the data for testing.}
    \label{fig:noise_resilience}
\end{figure}

\end{appendices}

\end{document}